\documentclass[10pt,journal,compsoc]{IEEEtran}

\usepackage[nocompress]{cite}
\usepackage{threeparttable}
\usepackage{booktabs}
\usepackage{float}
\usepackage{extarrows}
\usepackage{graphicx}
\usepackage{epstopdf}
\usepackage{xcolor}
\usepackage{epsfig}
\usepackage{mathtools}
\usepackage{makecell}
\usepackage{multirow}
\usepackage{amsmath}
\usepackage{amssymb}
\usepackage{amsthm}
\usepackage{mathrsfs}
\usepackage{color}
\usepackage{url}
\usepackage{array}
\usepackage{multirow}
\usepackage{algorithm}
\usepackage{algorithmic}
\usepackage{bigstrut}
\usepackage[T1]{fontenc}
\usepackage[justification=centering]{caption}

\usepackage{rotating}
\usepackage{xr}

\renewcommand{\algorithmicrequire}{\quad \textbf{Input:}} 
\renewcommand{\algorithmicensure}{\quad \textbf{Output:}} 

\newtheorem{definition}{Definition}

\newtheorem{proposition}{Proposition}

\newtheorem{remark}{Remark}

\numberwithin{equation}{section}
\numberwithin{definition}{section}
\numberwithin{theorem}{section}
\numberwithin{lemma}{section}
\numberwithin{corollary}{section}
\numberwithin{proposition}{section}
\numberwithin{remark}{section}
\numberwithin{assumption}{section}
\newlength{\figurewidth}
\newlength{\smallimage}
\newlength{\mediumimage}
\newlength{\largeimage}
\newlength{\smallplot}
\newlength{\mediumplot}
\begin{document}

\title{Robust Principal Component Completion}

\author{Yinjian~Wang,
  Wei~Li,~\IEEEmembership{Senior Member,~IEEE}, Yuanyuan~Gui, 
  James~E.~Fowler,~\IEEEmembership{Fellow,~IEEE}, \\
  and Gemine~Vivone,~\IEEEmembership{Senior Member,~IEEE}
  
  \IEEEcompsocitemizethanks{\IEEEcompsocthanksitem This paper is supported by NSFC Projects of International Cooperation and
  	Exchanges under Grant W2411055 (Corresponding author: Wei Li).
  }
  \IEEEcompsocitemizethanks{\IEEEcompsocthanksitem Y.~Wang, W.~Li and
    Y.~Gui are with the School of Information and Electronics, Beijing
    Institute of Technology, and the National Key Laboratory of
    Science and Technology on Space-Born Intelligent Information
    Processing, 100081 Beijing, China 
    (e-mail: yinjw@bit.edu.cn, liwei089@ieee.org, 953647315@qq.com).
  } \IEEEcompsocitemizethanks{\IEEEcompsocthanksitem J.~E.~Fowler is
    with the Department of Electrical and Computer Engineering,
    Mississippi State University, Starkville, MS 39762 USA (e-mail:
    fowler@ece.msstate.edu).}
  \IEEEcompsocitemizethanks{\IEEEcompsocthanksitem G.~Vivone is with
    the National Research Council, Institute of Methodologies for
    Environmental Analysis (CNR-IMAA), 85050 Tito, Italy, and also
    with National Biodiversity Future Center (NBFC), 90133 Palermo,
    Italy (e-mail: gemine.vivone@imaa.cnr.it).}  }

\IEEEtitleabstractindextext{%
\begin{abstract}
  Robust principal component analysis (RPCA) seeks a low-rank component
and a sparse component from their summation.  Yet, in many
applications of interest, the sparse foreground actually replaces, or
occludes, elements from the low-rank background.  To address this
mismatch, a new framework is proposed in which the sparse component is
identified indirectly through determining its support.  This approach,
called robust principal component completion (RPCC), is solved via
variational Bayesian inference applied to a fully probabilistic
Bayesian sparse tensor factorization.  Convergence to a hard
classifier for the support is shown, thereby eliminating the post-hoc
thresholding required of most prior RPCA-driven
approaches. Experimental results reveal that the proposed approach
delivers near-optimal estimates on synthetic data as well as robust
foreground-extraction and anomaly-detection performance on real color video
and hyperspectral datasets, respectively. Source implementation and Appendices are available at  \url{https://github.com/WongYinJ/BCP-RPCC}.

\end{abstract}

\begin{IEEEkeywords}
Robust principal component analysis, Bayesian learning, tensor
factorization, foreground modeling, hyperspectral anomaly detection.
\end{IEEEkeywords}}

\maketitle

\IEEEdisplaynontitleabstractindextext

\ifCLASSOPTIONcompsoc
\IEEEraisesectionheading{\section{Introduction}\label{sec:introduction}}
\else
\section{Introduction}
\label{sec:introduction}
\fi
\IEEEPARstart{T}{he} presence of unexpected outliers is ubiquitous in a wide range of data. Ranging from meaningless noise to highly informative anomalies,
outliers vary substantially in form but are commonly characterized by
their distinctness from a well-structured background as well as a
low-probability of occurrence---two key properties encompassed by the
concept of sparsity. Though sparse, outliers can detach the data by a
large degree from the low-dimensional manifold upon which it would
otherwise lie.  Given the widespread use of principal component analysis (PCA)
for low-dimensional, or low-rank, analysis and
representation, a desire for resilience to outliers has thus inspired
significant interest in robust PCA (RPCA)
\cite{WGR2009,LLS2011,CLM2011,BJZ2018}.
Mathematically, the
RPCA problem seeks a low-rank matrix $\mathbf{L}$ coupled with a
sparse matrix $\mathbf{S}$ such that observed matrix $\mathbf{Y}$ is
\begin{equation}
  \mathbf{Y} = \mathbf{L} + \mathbf{S} .
  \label{eq:rpcamatrix}
\end{equation}
Despite the rapid development in recent years of techniques for solving
\eqref{eq:rpcamatrix},
this RPCA formulation is somewhat divorced from the reality
faced by many would-be applications of RPCA.
That is, \eqref{eq:rpcamatrix} formulates the sparse outliers, or
anomalies, $\mathbf{S}$ as being additive to the low-dimensional
background $\mathbf{L}$.
Yet, in most real applications, the outliers physically
replace, or occlude, the background rather than add to it.
It can be argued that a more accurate model of the outlier phenomenon would 
be embodied by
\begin{equation}
  \mathbf{Y} = \mathscr{P}_{\mathtt{\Omega}^{\bot}}\bigl[\mathbf{L}\bigr] +
  \mathbf{S} ,
  \label{eq:RPCAConstraint}
\end{equation}
where $\mathtt{\Omega}$ denotes the support of $\mathbf{S}$ with
$\mathtt{\Omega}^{\bot}$ being its complement, and
$\mathscr{P}_{\mathtt{\Omega}^{\bot}}\left[\cdot\right]$ denotes the
orthogonal projector onto the subspace of matrices supported on
$\mathtt{\Omega}^{\bot}$.

In general, solving \eqref{eq:rpcamatrix} will not yield an
$(\mathbf{L}, \mathbf{S})$ pair that satisfies the more
application-meaningful \eqref{eq:RPCAConstraint}.  Consequently, most
RPCA-based techniques strive to identify one of the two components
first, and then seek the other one in a second step---either
explicitly or implicitly.  That is, in practice, RPCA models actually
work toward finding either $\left(\mathbf{L},
\mathbf{S}-\mathscr{P}_{\mathtt{\Omega}}\left[\mathbf{L}\right]\right)$
or $\left(\mathscr{P}_{\mathtt{\Omega}^\bot}\left[\mathbf{L}\right],
\mathbf{S}\right)$.  This conundrum is illustrated in
Fig.~\ref{fig:rpcabias} wherein we see that $\mathbf{Y} \neq
\mathbf{L} + \mathbf{S}$ in the case that $\mathbf{S}$ occludes
$\mathbf{L}$, and yet solving \eqref{eq:rpcamatrix} produces
$\mathbf{S} - \mathscr{P}_{\mathtt{\Omega}}\left[\mathbf{L}\right]
\neq \mathbf{S}$, or, alternatively,
$\mathscr{P}_{\mathtt{\Omega}^{\bot}}\left[\mathbf{L}\right] \neq
\mathbf{L}$.  If we assume that \eqref{eq:rpcamatrix} is solved for
$\left(\mathbf{L},
\mathbf{S}-\mathscr{P}_{\mathtt{\Omega}}\left[\mathbf{L}\right]\right)$---which
is the more common scenario---then, typically,
$\mathbf{S}-\mathscr{P}_{\mathtt{\Omega}}\left[\mathbf{L}\right]$ is
thresholded to determine an estimated support,
$\widehat{\mathtt{\Omega}}$, in a second step. Finally, the sparse
component is estimated as $\widehat{\mathbf{S}} =
\mathscr{P}_{\widehat{\mathtt{\Omega}}}\left[\mathbf{Y}\right]$ using
this estimated support. In this sense, RPCA provides a soft classifier
for the support which must be subjected to subsequent thresholding in
order to arrive at an estimate of the sparse component.

Such a mismatch between RPCA and real-world scenarios is not subtle, nor is it difficult to formulate a well-matched alternative like \eqref{eq:RPCAConstraint}. The true difficulty lies in solving it, due to its NP-hardness. An efficient solution was proposed in \cite{DECOLOR2013}, but it is largely confined to grayscale video data and does not generalize easily to data of arbitrary order or type. Moreover, the graph-cut algorithm it relies on for estimating the support $\mathtt{\Omega}$ is not scalable to high-dimensional data. Furthermore, incorporating graph cuts into an alternating minimization framework remains heuristic at best. Lacking a robust theoretical foundation to analyze convergence or optimality, its potential as a general solution framework is questionable. 

In this paper, we depart from this usual RPCA formulation and instead
propose a framework we call robust principal component completion
(RPCC) as a direct solution to \eqref{eq:RPCAConstraint}.
In the RPCC paradigm, the sparse component is
found indirectly by determining its support.
Additionally, in order to handle arbitrary dimensionality of the
datasets, the entire problem is cast in the form of tensors employing
a fully probabilistic Bayesian sparse tensor factorization (BSTF)
solved via variational Bayesian inference (VBI) \cite{WB2005}.
Specifically, we make the following primary contributions:

(1) The proposed RPCC formulation accurately models real-world
problems in which the sparse component does not merely ``add to'' but
actually replaces or occludes the corresponding elements in the
low-dimensional background by substituting the support of the sparse
component for the component itself in the optimization variable
set. Formulated from a tensor perspective, RPCC handles data of
arbitrary dimensions, and, by casting the support in the form of
blocks, RPCC flexibly adapts to data exhibiting blockwise patterns (as
is common) while easily accommodating the simpler element-wise form if
needed.

(2) We focus on the canonical polyadic (CP) tensor decomposition,
developing a Bayesian CP (BCP) factorization
to solve RPCC; we call the resulting algorithm BCP-RPCC.
BCP-RPCC exploits sparsity-inducing
hierarchical distributions to capture low CP-rank as well as
blockwise sparsity of the two latent components. In RPCC:
\begin{itemize}
\item The determination of support $\mathtt{\Omega}$ being NP-hard,
  we approximate the problem as a probabilistic binary classification.
  With each block in the observation being generated from a
  mixture distribution of two latent patterns controlled by a
  Bernoulli random variable, the posterior expectation of this latter
  variable indicates the probability of the corresponding block
  belonging to the sparse pattern.
\item In constructing a generative model for the observation, we
  manually introduce additive Gaussian noise with known variance.
  While Gaussian noise is commonly assumed in BSTF formulations as a
  source of randomness, its variance must typically be estimated from
  a noisy observation. To avoid over-parameterization in RPCC, we
  assume the original observation to be noiseless, consistent with
  conventional RPCA. Thus, the added noise provides the randomness
  necessary for BSTF without introducing additional unknown variables.
\item The posterior distributions are inferred via VBI, and we show
  that the proposed framework results in a hard classifier in terms of
  support separation, as long as the variance of the manually added
  noise is sufficiently small. This result imbues BCP-RPCC with a clear
  advantage over RPCA-based solutions,
  which produce merely soft classifiers requiring
  a subsequent thresholding step with a threshold that is
  typically very difficult to determine in practice.
\end{itemize}

(3) Eventually, BCP-RPCC yields robust performance across different applications. It demonstrates, for the first time, that problem \eqref{eq:RPCAConstraint} (or more precisely, \eqref{eq:RPCC}) is generally solvable for data of arbitrary order or type. Grounded on Bayesian probabilistic learning and VBI, the convergence and optimality of BCP-RPCC are both analyzable, establishing it as a theoretically robust solution framework. In fact, RPCA serves as a convex surrogate for RPCC---a compromise that was previously necessitated by the lack of a viable solver for the latter. With the proposal of BCP-RPCC, this compromise is no longer necessary.

The remainder of the manuscript is organized as follows.
First, in Sec.~\ref{sec:notations}, we introduce various notations and
preliminary mathematical concepts, and, in Sec.~\ref{sec:background},
we review relevant background on RPCA and BSTF.
The main RPCC formulation and corresponding BCP-RPCC algorithm are
presented in Sec.~\ref{sec:rpcc}, while empirical
results are presented in Sec.~\ref{sec:results}. Finally, some concluding remarks are
drawn in Sec.~\ref{sec:conclusion}. Sec.~\ref{sec:Discussion} discusses potential follow-up studies stemming from this paper.
 
\begin{figure}[t]
  \centering
  \setlength{\tabcolsep}{0.1mm}
  \begin{tabular}{ccc}
    \includegraphics[width=\largeimage]{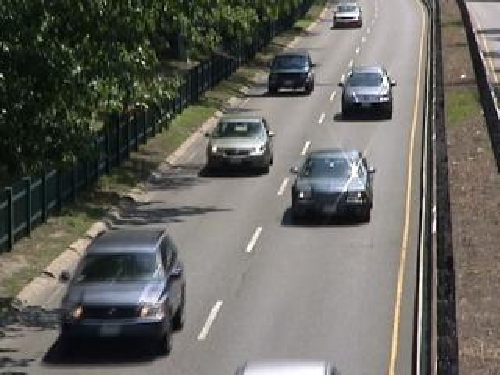} &
    \includegraphics[width=\largeimage]{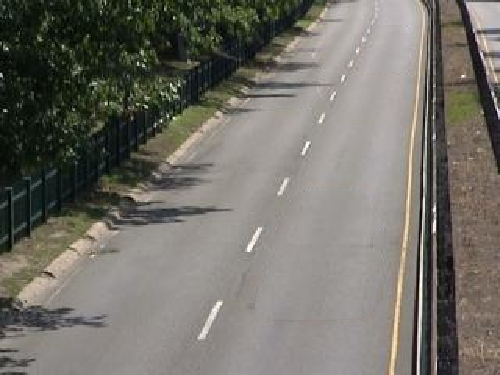} &
    \includegraphics[width=\largeimage]{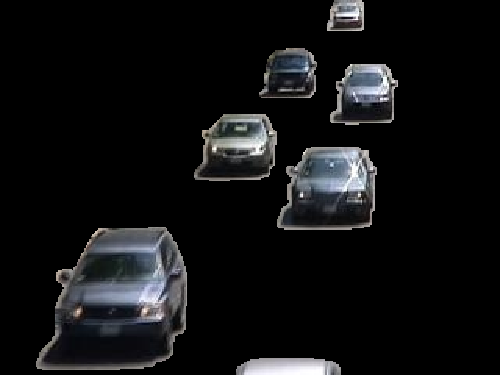} \\[-0.25em]
    \multicolumn{1}{c}{\footnotesize $\mathbf{Y}$} &
    \multicolumn{1}{c}{\footnotesize $\mathbf{L}$} &
    \multicolumn{1}{c}{\footnotesize $\mathbf{S}$} \\[0.25em]
    \includegraphics[width=\largeimage]{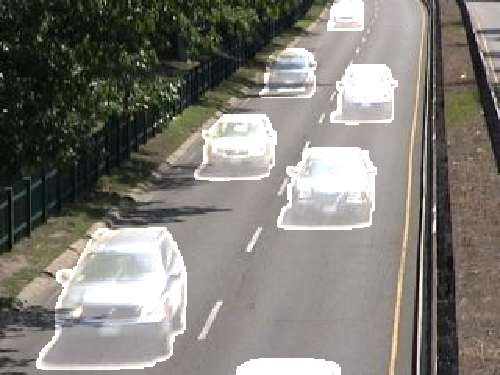} &
    \includegraphics[width=\largeimage]{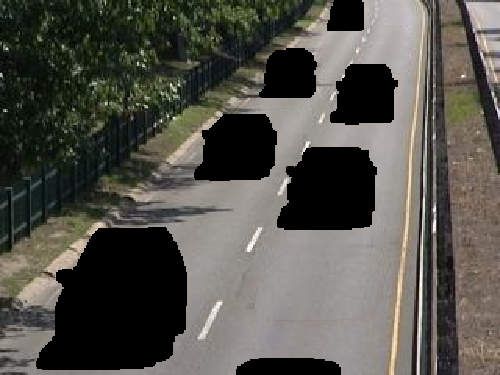} &
    \includegraphics[width=\largeimage]{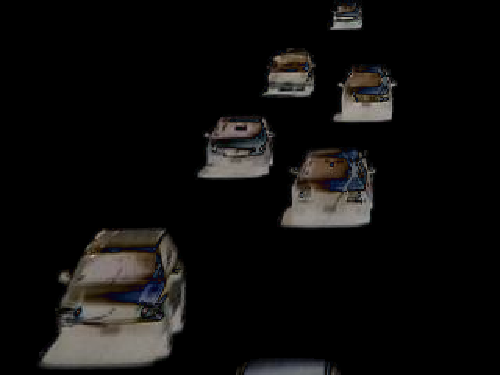} \\[-0.25em]
    \multicolumn{1}{c}{\footnotesize $\mathbf{L}+\mathbf{S} \neq \mathbf{Y}$} &
    \multicolumn{1}{c}{\footnotesize $\mathscr{P}_{\mathtt{\Omega}^\bot}\left[\mathbf{L}\right] \neq \mathbf{L}$} &
    \multicolumn{1}{c}{\footnotesize $\mathbf{S}-\mathscr{P}_{\mathtt{\Omega}}\left[\mathbf{L}\right] \neq \mathbf{S}$} 
  \end{tabular}
  \caption{Visual comparison of various quantities of the RPCA
    formulation for an image.}
  \label{fig:rpcabias}
\end{figure}

\section{Notations and Preliminaries}
\label{sec:notations}
\begin{figure}[t]
  \centering
  \setlength{\tabcolsep}{1mm}
  \begin{tabular}{c}
    \includegraphics[width=0.9\columnwidth]{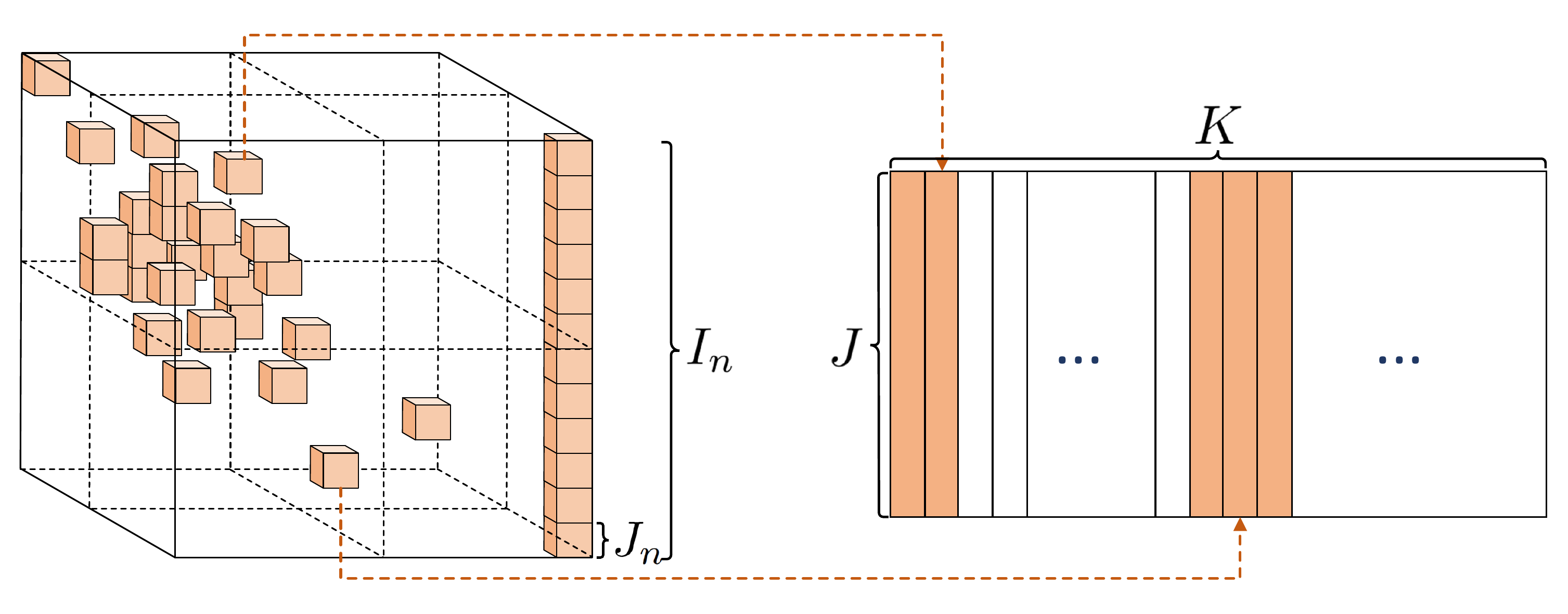}
  \end{tabular}
  \caption{\label{fig:B-unfolding}B-Unfolding: The reorganization of each
    tensor block into a single column of a matrix.}
\end{figure}

We denote scalar, vector, matrix, and higher-order tensors
with $x$, $\mathbf{x}$, $\mathbf{X}$, and $\mathcal{X}$,
respectively. $\mathbf{I}_n$ denotes the $n\times n$ identity matrix,
while $\operatorname{diag}(\mathbf{x})$ forms a diagonal matrix from
vector $\mathbf{x}$. $[N]$ denotes the set of positive integers no
greater than $N$, i.e., $[N]=\left\{1,2,\dots,N\right\}$. For any
$N$-order tensor $\mathcal{X}\in\mathbb{R}^{I_1\times
  \cdots\times I_N}$, $\mathcal{X}_{i_1\cdots i_N}$ is the
scalar at location $(i_1\cdots i_N)$. For a matrix
$\mathbf{X}\in\mathbb{R}^{I_1\times I_2}$, $\mathbf{X}^T$ is its
conjugate transpose, and $\mathbf{x}_{i_2}$ denotes column $i_2$ of
$\mathbf{X}$.  We define the inner product between tensors
$\mathcal{X}$ and $\mathcal{Y}$ of the same size as
$\langle\mathcal{X},\,\mathcal{Y}\rangle=\sum_{i_1,\dots,i_N}\mathcal{X}_{i_1
  \cdots i_N}\mathcal{Y}_{i_1 \cdots i_N}$ such that the Frobenius
norm of a tensor is defined as
$\left\|\mathcal{X}\right\|_F=\sqrt{\langle\mathcal{X},\,\mathcal{X}\rangle}$.
For a set of matrices
$\{\mathbf{A}^{(n)}\in\mathbb{R}^{I\times J}\}_{n=1}^N$,
the generalized inner product is defined as
\begin{equation}
  \langle\mathbf{A}^{(1)},\dots,\mathbf{A}^{(N)}\rangle\triangleq\sum_{i,j}\prod_n\mathbf{A}^{(n)}_{ij} ,
\end{equation}
while the generalized Hadamard product is 
\begin{equation}
  \mathop{\scalebox{2}{$\odot$}}_{n}\mathbf{A}^{(n)}\triangleq\mathbf{A}^{(1)}\odot\cdots\odot\mathbf{A}^{(N)} ,
\end{equation}
where $\odot$ denotes the Hadamard product between two
matrices. For any $N$-order tensor
$\mathcal{X}\in\mathbb{R}^{I_1\times\cdots\times I_N}$, its
CP factorization is 
\begin{equation}
  \mathcal{X}=\operatorname{CP}\left\{\mathbf{A}^{(1)},\mathbf{A}^{(2)},\dots,\mathbf{A}^{(N)}\right\} ,
\end{equation}
where $\bigl\{\mathbf{A}^{(n)}\in\mathbb{R}^{I_n\times R}\bigr\}_{n=1}^N$ are
the CP factors with $R$ being the CP rank. Note that, with a slight
abuse of notation, for a CP factor
$\mathbf{A}^{(n)}\in\mathbb{R}^{I_n\times R}$,
$\mathbf{a}^{(n)}_{i_n}$ is no longer its ${i_n}^{\text{th}}$ column
vector but rather its ${i_n}^{\text{th}}$ row transposed into a
vector.

Throughout the paper, $p(\cdot)$ and $q(\cdot)$ are reserved for
probability density functions, and
$\mathbf{a}\sim\mathcal{N}\left(\boldsymbol{\mu},\mathbf{\Sigma}\right)$
indicates that random vector $\mathbf{a}$ is drawn from a multivariate
Gaussian distribution with mean $\boldsymbol{\mu}$ and covariance
$\mathbf{\Sigma}$. Additionally, $\Gamma(\cdot)$, $\psi(\cdot)$, and
$\gamma(\cdot,\cdot)$ are the gamma, digamma, and lower incomplete
gamma functions, respectively. $E[\cdot]$ represents the expectation
of a random variable, and $H\left(\cdot\right)$ is entropy.
$\mathbf{1}_{\mathsf{K}}$ denotes the indicator function of certain
subset $\mathsf{K}$.

Real-world objects usually occupy contiguous volumes in space, time,
and wavelength. Consequently, such objects exhibit blockwise patterns
in sensed data. This motivates a view from a block
standpoint. As illustrated by 
Fig.~\ref{fig:B-unfolding}, an order-$N$ tensor
$\mathcal{X}\in\mathbb{R}^{I_1\times\cdots\times I_N}$ can be
imagined to be assembled from smaller order-$N$ tensor
blocks. That is, suppose $I_n=J_nK_n,\,\forall\,n\in[N]$. Then, each small
tensor block is of size $J_1\times\cdots\times J_N$, and
there are a total of $\prod_{n}K_n$ such blocks.
These blocks can be used as the basis of tensor unfolding.
\begin{definition}[Blockwise Unfolding (B-unfolding) \cite{WLG2025}]
  For an order-$N$ tensor $\mathcal{X}\in\mathbb{R}^{I_1\times\cdots\times I_N}$ with $I_n=J_nK_n$, $\forall n\in[N]$, its
  B-unfolding, denoted by matrix $\mathbf{X}^{[K]}$, where
  $\mathbf{X}^{[K]}\in\mathbb{R}^{J\times K}$,
  $J\triangleq\prod_{n}J_n$, and $K\triangleq\prod_{n}K_n$, is defined
  as
  \begin{multline}
    \mathbf{X}^{[K]}=\mathtt{reshape}\Big(\mathtt{permute}\big(\mathtt{reshape}(\mathcal{X},\\
	       [J_1,K_1,\dots,J_N,K_N]),
	       [1,3,\dots,2N-1,2,4,\cdots,2N]\big), \\
	       J,[\,]\Big)
  \end{multline}
  where $\mathtt{reshape}(\cdot)$ and $\mathtt{permute}(\cdot)$ denote
  tensor operators with the corresponding functionality of their
  \textsc{Matlab} namesakes.
  \label{Def:B-Unfolding}
\end{definition}
\noindent
Note that, in accordance with the notations above,
$\mathbf{x}^{[K]}_k$ now denotes the $k^{\text{th}}$ column of
$\mathbf{X}^{[K]}$, which is the vectorization of the $k^{\text{th}}$
block in $\mathcal{X}$, as depicted by
Fig.~\ref{fig:B-unfolding}. 
\begin{definition}[Blockwise Support]
  Following the notation in Def.~\ref{Def:B-Unfolding}, suppose we
  have a set $\mathtt{\Omega}\subset[K]$. If $k\in \mathtt{\Omega}$
  iff the $k^{\text{th}}$ block (column) of $\mathcal{X}$
  ($\mathbf{X}^{[K]}$) is non-zero, then $\mathtt{\Omega}$ is defined
  as the blockwise support of $\mathcal{X}$.
  \label{Def:BSupp}
\end{definition}
\noindent
If the blockwise support of a tensor is sparse, we say that this
tensor is blockwisely sparse. Additionally, $\mathtt{\Omega}^\bot$ is
the complement of $\mathtt{\Omega}$ in $[K]$, which is also a
blockwise support.
\begin{definition}[Blockwise Projector]
  When $\mathtt{\Omega}$ is a blockwise support, we define
  $\mathscr{P}_{\mathtt{\Omega}}[\cdot]:\mathbb{R}^{I_1\times\cdots\times
    I_N}\rightarrow\mathbb{R}^{I_1\times\cdots\times I_N}$ to be a
  blockwise projector that forces all the blocks not in
  $\mathtt{\Omega}$ to zero. Specifically.
    \begin{equation}
      \mathscr{P}_{\mathtt{\Omega}}\left[\mathcal{X}\right]_{i_1\cdots i_N}=\left\{  
      \begin{aligned}  
	&\mathcal{X}_{i_1\cdots i_N},\,&(i_1\cdots i_N)\in\mathtt{\Omega}_k,\,k\in\mathtt{\Omega},\\
	&0,\,&\text{otherwise} ,
      \end{aligned}  
      \right.	
    \end{equation}
    where $\mathtt{\Omega}_k\subset[I_1]\times\cdots\times[I_N]$
    consists of the indexes of all elements in the $k^{\text{th}}$ block of
    $\mathcal{X}$.  
  \label{Def:BProj}
\end{definition}

\section{Background}
\label{sec:background}
\subsection{RPCA}
\label{sec:backgroundrpca}
It has long been recognized that even a few outliers can greatly skew
the results of PCA, leading to study of RPCA as posed in
\eqref{eq:rpcamatrix}.  Early efforts toward solving
\eqref{eq:rpcamatrix} date back to \cite{GK1972,Hub1981,FB1981}, yet
polynomial-time algorithms with strong recovery guarantees did not
emerge until RPCA was formulated as a convex optimization in
\cite{WGR2009,LLS2011,CLM2011}.  Perhaps the most prominent of these
latter approaches is principal component pursuit (PCP) \cite{CLM2011}.
Subsequent work approached the RPCA problem via Bayesian learning
\cite{DHC2011,OMK2016}; developing more efficient algorithmic
frameworks \cite{YPC2016,SLC2018}; establishing recovery theory for
non-convex formulation \cite{NNS2014}; or unfolding RPCA into deep
networks trained on true observation-component pairs
\cite{CLY2021,WDZ2025}.

Yet, the traditional approach to RPCA is driven by matrices, requiring
that some form of matricization be adopted to handle tensor data of
higher orders.
Consequently, a number of efforts have emerged to extend the RPCA
paradigm to process tensor data directly, casting \eqref{eq:rpcamatrix}
into tensor form as
\begin{equation}
  \mathcal{Y} = \mathcal{L} + \mathcal{S} .
  \label{eq:rpcatensor}
\end{equation}
For example, tensor RPCA (TRPCA) \cite{LFC2020} defined a tensor
nuclear norm as the convex envelope of the tensor average rank and
then showed that TRPCA driven by this norm is exactly solvable via
convex optimization under mild conditions, analogous to the matrix
case. Subsequently, enhanced TRPCA (ETRPCA) \cite{GZX2021} replaced
this tensor nuclear norm with a weighted tensor Schatten p-norm to
more flexibly capture salient differences between tensor singular
values. In \cite{XZM2017}, a Kronecker basis representation (KBR) was
employed as a tensor-sparsity measure to reconcile the sparsity
induced by both Tucker and CP decompositions \cite{CC1970,Har1970},
significantly improving performance bounds in the process. Going a
step further, \cite{ZLF2021} strove to recover not only the low-rank
component itself but also its tensor subspace representation, enabling
tensor-based RPCA to be applied to data clustering. Later, tensor correlated
total variation (t-CTV) \cite{WPQ2023} jointly exploited low-rankness
and smoothness priors in multi-dimensional data. To alleviate
sensitivity to rotation as well as to overcome dimension limitations of
tensor singular value decomposition, multiplex transformed tensor
decomposition (MTTD) \cite{FZL2023} decomposed an order-$N$ tensor into
a core order-$N$ tensor coupled with $N$ order-$3$ orthogonal tensors,
chained together by a newly defined mode-$n$ t-product. Targeted at
streaming data, online non-convex TRPCA (ONTRPCA) \cite{FLL2025}
employed a Schatten p-norm for a tighter approximation of the tensor
rank with the Schatten p-norm being updated in an online manner.

On the other hand, many of the most recent efforts for RPCA of tensors
leverage deep networks.  These include \cite{DSD2023}, which employed
a self-supervised deep unfolding model learning with only a few
hyperparameters, requiring no ground truth labeling, and maintaining
robust performance in data-scarce scenarios; and \cite{LZM2022}, which
embedded a deep neural network into the tensor singular value
decomposition so as to capture the underlying low-rank structures of
multi-dimensional images.  Finally, low-rank tensor function
representation (LRTFR) \cite{LZL2024a} used multi-layer perceptrons to
enable continuous data representation beyond discrete grids, offering
powerful representational capabilities.

\subsection{BSTF}
BSTF extends Bayesian sparse matrix factorization
\cite{MS2007,BLM2012} to tensors with the aim of obtaining a
probabilistic factor representation along with automatic rank
determination, thereby providing distributional
estimates for solving inverse problems. To this end, BSTF exploits
sparsity-inducing hierarchical distributions to characterize the
sparsity inherent to tensors. Accordingly, \cite{ZZC2015} developed a
BCP factorization capable of automatic CP rank
determination. This was then generalized by \cite{ZZZ2016} to handle
sparse outliers, effectively yielding a solution for tensor-based RPCA,
while \cite{YYW2025} provided a robust non-negative CP
factorization. Built on a t-product, \cite{ZC2021} enhanced the
preservation of low-rank structure for BSTF-based RPCA models,
while \cite{TCW2023} bridged Tucker and CP decompositions under a
Gaussian-inverse Wishart prior, enabling flexible core-tensor learning
and demonstrating superior rank-determination and recovery
performance. Finally, further studies constructed various formulations
to place sparsity-inducing priors for block-term \cite{GRK2022},
tensor-train \cite{XCW2023}, and tensor-ring \cite{LZL2021}
decompositions to automatically determine corresponding ranks.

\section{The Proposed RPCC Framework}
\label{sec:rpcc}
\subsection{The RPCC Problem}
\begin{definition}[RPCC]
  Suppose
  $\mathcal{L},\mathcal{S}\in\mathbb{R}^{I_1\times
    \cdots\times I_N}$ with $I_n=J_nK_n$, $n\in[N]$. 
  Let $J=\prod_{n}J_n$ and $K=\prod_{n}K_n$. Suppose further that
  $\mathcal{S}$ is blockwisely supported on
  $\mathtt{\Omega}\subset[K]$. Given an observation
  \begin{equation}
    \mathcal{Y}=\mathscr{P}_{\mathtt{\Omega}^\bot}\left[\mathcal{L}\right]+\mathcal{S} ,
    \label{eq:RPCCConstraint}
  \end{equation}
  RPCC seeks to identify
  $\left\{\mathcal{L},\,\mathtt{\Omega}\right\}$, or equivalently
  $\left\{\mathcal{L},\,\mathcal{S}\right\}$, via solving
  \begin{equation}
    \min_{\mathcal{L},\,\mathtt{\Omega}}
    \mathscr{R}\left(\mathcal{L}\right) +
    \mathscr{B}\left(\mathscr{P}_{\mathtt{\Omega}}\left[\mathcal{Y}\right]\right)
    \,\,\text{s.t.}\,\,
    \mathscr{P}_{\mathtt{\Omega}^\bot}\left[\mathcal{Y}\right] =
    \mathscr{P}_{\mathtt{\Omega}^\bot}\left[\mathcal{L}\right] ,
    \label{eq:RPCC}
  \end{equation}
  where
  $\mathscr{R}(\cdot):\mathbb{R}^{I_1 \times\cdots\times I_N}\rightarrow\mathbb{R}$
  and $\mathscr{B}\left(\cdot\right):\mathbb{R}^{I_1\times\cdots\times
    I_N}\rightarrow\mathbb{R}$ quantify the dimensionality and
  blockwise sparsity of a tensor, respectively, and
  $\mathtt{\Omega}\in[K]$ is a blockwise support.
  \label{Def:RPCC}
\end{definition}

\begin{remark}
  To see that this is a ''low-rank/sparse decomposition model,''
  we note that
  $\mathscr{P}_{\mathtt{\Omega}^\bot}\left[\mathcal{Y}\right]=\mathscr{P}_{\mathtt{\Omega}^\bot}\left[\mathcal{L}\right]$
  is equivalent
  to $\mathcal{Y}=\mathscr{P}_{\mathtt{\Omega}^\bot}\left[\mathcal{L}\right]+\mathscr{P}_{\mathtt{\Omega}}\left[\mathcal{Y}\right]$.
\end{remark}
Def.~\ref{Def:RPCC} is constructed in a generalized form, and we
expect the $\mathscr{R}(\cdot)$ operator to encompass most
dimensionality quantizers, including classic matrix rank, tensor
ranks under multiple definitions, factor rank/sparsity from various
matrix/tensor factorizations of $\mathcal{L}$, and, most generally,
maximal coding rate reduction \cite{CYY2022} in the context of deep
learning. On the other hand, $\mathscr{B}(\cdot)$ is a generalized
quantizer of blockwise sparsity, which, in principle, reflects the
cardinality of the blockwise support of a tensor.

We note that, by setting $\mathtt{\Omega}$ to be a blockwise support,
RPCC avoids underfitting the blockwise pattern of real-world data
\cite{ZZW2023}. For example, consider a tensor representing a
red-green-blue (RGB) color image and note that each element in such an
RGB image can never appear in $\mathcal{S}$ alone, since all three
channels of one pixel encode the information of the same spatial
location. Thus, every pixel (which is a block of three elements) is
either included in or excluded from $\mathcal{S}$ as a
whole. Therefore, the support set should separate $\mathcal{L}$ and
$\mathcal{S}$ on a block-by-block basis; i.e., $\mathtt{\Omega}$
should be a blockwise support.

RPCC is distinguished from RPCA primarily through the incorporation of
blockwise support $\mathtt{\Omega}$ for the sparse component
$\mathcal{S}\triangleq\mathscr{P}_{\mathtt{\Omega}}\left[\mathcal{Y}\right]$
in the optimization variables. Though RPCC is an accurate model of
scenarios following \eqref{eq:RPCCConstraint}, optimizing the
support set is NP-hard as it is essentially a combinatorial
optimization problem. Consequently, algorithms to exactly solve
\eqref{eq:RPCC} in polynomial time do not exist. Our task then is to
develop tractable approaches that approximately solve \eqref{eq:RPCC} with
tolerable precision loss. To do so, we resort to BSTF to obtain a
fully probabilistic model that casts support separation into a
binary classification problem.

\subsection{A BCP-Based Generative Model}
\label{sec:Model}
\subsubsection{Randomness}
A probabilistic generative model is usually constructed via exploiting
sparsity-inducing hierarchical distributions to characterize sparsity
in the data. While $\mathcal{S}$ is sparse by definition,
$\mathcal{L}$ is always dense, with latent sparsity emerging only
when it is factored. Due to complex interactions between those
factors, an explicit distribution of $\mathcal{L}$ is
intractable. To compensate, we propose to use instead a simple
probabilistic model consisting of additive noise
$\mathcal{E}\in\mathbb{R}^{I_1\times\cdots\times I_N}$ with entries
drawn i.i.d. from $\mathcal{N}(0,\sigma)$ incorporated into the
observation $\mathcal{Y}$. That is to say, instead of estimating
$\left(\mathcal{L},\,\mathtt{\Omega}\right)$ (or equivalently
$\left(\mathcal{L},\,\mathcal{S}\right)$) from $\mathcal{Y}$
directly, we do so from a noisy version,
\begin{equation}
  \widehat{\mathcal{Y}}\triangleq\mathscr{P}_{\mathtt{\Omega}^\bot}\left[\mathcal{L}\right]+\mathcal{S}+\mathcal{E}=\mathcal{Y}+\mathcal{E}.
  \label{eq:NoisyRPCC}
\end{equation}
We note that a Gaussian noise term is commonly employed in BSTF as a
source of randomness; however, it is assumed that this noise is
inherent in the observation $\mathcal{Y}$, and thus the noise variance
is sought. In contrast, to avoid over-parameterization in RPCC, we
assume the observation $\mathcal{Y}$ is noiseless in line with
conventional RPCA. Then, the incorporation of $\mathcal{E}$ with a
variance $\sigma$ via \eqref{eq:NoisyRPCC} provides the randomness
necessary to VBI without adding an additional variable to be
determined.

\subsubsection{A Mixture Distribution for Support Separation}
Constructing a tractable model to identify the blockwise support of
$\mathcal{S}$ is the key challenge in RPCC. From
\eqref{eq:NoisyRPCC}, we have
\begin{align}
  \widehat{\mathcal{Y}}&=\mathscr{P}_{\mathtt{\Omega}^\bot}\left[\mathcal{L}\right]+\mathscr{P}_{\mathtt{\Omega}}\left[\mathcal{S}\right]+\mathcal{E} \nonumber \\
  &=\mathscr{P}_{\mathtt{\Omega}^\bot}\left[\mathcal{L}+\mathcal{E}\right]+\mathscr{P}_{\mathtt{\Omega}}\left[\mathcal{S}+\mathcal{E}\right] ,
\end{align}
which suggests that blocks of $\widehat{\mathcal{Y}}$ emanate from
one of two sources---$\mathcal{L}+\mathcal{E}$ or
$\mathcal{S}+\mathcal{E}$---in a manner reminiscent of a Gaussian
mixture distribution \cite{Bis2006}. So inspired, we 
model $\widehat{\mathcal{Y}}$ accordingly. That is, for each
vectorized block in $\widehat{\mathcal{Y}}$ (i.e., each column $k$ in
$\widehat{\mathbf{Y}}^{[K]}$, $\forall k\in[K]$), we have
\begin{multline}
  p\left(\widehat{\mathbf{y}}^{[K]}_k\bigm\vert\mathtt{\Theta}_L,\,\mathtt{\Theta}_S,\sigma,\eta\right) = \\
  (1-\eta)p\left(\widehat{\mathbf{y}}^{[K]}_k\bigm\vert\mathtt{\Theta}_L,\sigma\right)
  +\eta p\left(\widehat{\mathbf{y}}^{[K]}_k\bigm\vert\mathtt{\Theta}_S,\sigma\right),
  \label{eq:MD}
\end{multline}
where
$p\left(\widehat{\mathbf{y}}^{[K]}_k\bigm\vert\mathtt{\Theta}_L,\sigma\right)$
and
$p\left(\widehat{\mathbf{y}}^{[K]}_k\bigm\vert\mathtt{\Theta}_S,\sigma\right)$
are the generative models for $\mathcal{L}+\mathcal{E}$ and
$\mathcal{S}+\mathcal{E}$, respectively,
and $\eta\in[0,1]$. According to \cite{Bis2006},
\eqref{eq:MD} can be equivalently generated via a two-level model
\begin{multline}
  p\Big(\widehat{\mathbf{y}}^{[K]}_k\bigm\vert\mathtt{\Theta}_L,\,\mathtt{\Theta}_S,\sigma,z_k\Big) = \\
  p^{1-z_k}\left(\widehat{\mathbf{y}}^{[K]}_k\bigm\vert\mathtt{\Theta}_L,\sigma\right) p^{z_k}\left(\widehat{\mathbf{y}}^{[K]}_k\bigm\vert\mathtt{\Theta}_S,\sigma\right),
\end{multline}
where
\begin{equation}
  p\left(z_k\bigm\vert \eta\right)=\eta^{z_k}(1-\eta)^{1-z_k}
\end{equation}
for
$z_k\in\left\{0,1\right\}.$
Then,
$\widehat{\mathbf{y}}^{[K]}_k\bigm\vert\left\{\mathtt{\Theta}_L,\,\mathtt{\Theta}_S,\sigma,z_k\right\}$
is distributed as either
$p\left(\widehat{\mathbf{y}}^{[K]}_k\bigm\vert\mathtt{\Theta}_L,\sigma\right)$
or
$p\left(\widehat{\mathbf{y}}^{[K]}_k\bigm\vert\mathtt{\Theta}_S,\sigma\right)$
as the Bernoulli variable $z_k$ can be valued either $0$ or $1$. To
achieve tractable posterior inference, a beta distribution is further
assigned to $\eta$ as a conjugate prior; i.e.,
\begin{equation}
  p\left(\eta\bigm\vert\alpha_0,\beta_0\right)\propto \eta^{\alpha_0-1}(1-\eta)^{\beta_0-1}, 
\end{equation}
with $\alpha_0,\beta_0>0$.  As a result, the posterior of $z_k$ is
still Bernoulli distributed, and we are inspired to use $\bar{z}_k
\triangleq E\left[z_k\right]$ to gauge the probability of the
$k^{\text{th}}$ block taking $\mathcal{S}$ as its source. While one
might then imagine that the blockwise support $\mathtt{\Omega}$ would
be estimated by thresholding the $\bar{z}_k$ probabilities for all
blocks, as a key development in this paper, we show below in
Sec.~\ref{sec:convergence} that the RPCC formulation inherently
results in a hard classifier with no need for thresholding.

\subsubsection{The BCP-Based Low-Rank Prior}
We now consider
$p\left(\widehat{\mathbf{y}}^{[K]}_k\bigm\vert\mathtt{\Theta}_L,\sigma\right)$,
the generative model for $\mathcal{L}+\mathcal{E}$ source. 
Under CP factorization, we have
\begin{equation}
  \mathcal{L}=\operatorname{CP}\left(\mathbf{A}^{(1)},\mathbf{A}^{(2)},\dots,\mathbf{A}^{(N)}\right) ,
\end{equation}
where $\bigl\{\mathbf{A}^{(n)}\in\mathbb{R}^{I_n\times
  R}\bigr\}_{n=1}^N$ are the CP factors.
Given that $\mathcal{E}$ is i.i.d.\ Gaussian, the generative model
for the $\mathcal{L}+\mathcal{E}$ source is then
\begin{equation}
\widehat{\mathbf{y}}^{[K]}_k\bigm\vert \{\mathcal{L}, \sigma\} \sim
\widehat{\mathbf{y}}^{[K]}_k\bigm\vert\left\{\{\mathbf{A}^{(n)}\}_{n=1}^N,\sigma\right\}\sim
\mathcal{N}\left({\mathbf{l}}^{[K]}_k,\sigma\mathbf{I}_{J}\right)
  \nonumber
\end{equation}
where we note that ${\mathbf{l}}^{[K]}_k$ is the $k^{\text{th}}$ column of $\mathbf{L}^{[K]}$. 
Adopting the CP rank as $\mathscr{R}(\cdot)$ in \eqref{eq:RPCC}
results in sparsity in
$\bigl\{\mathbf{A}^{(n)}\bigr\}_{n=1}^N$ once these CP factors are
over-parameterized with a large $R$. To capture such sparsity, a
Gaussian-Gamma conjugate distribution pair is assigned to
$\bigl\{\mathbf{A}^{(n)}\bigr\}_{n=1}^N$,
\begin{align}
    p\left(\mathbf{A}^{(n)}\bigm\vert \boldsymbol{\lambda}\right)&=\prod_{i_n}p\left(\mathbf{a}^{(n)}_{i_n}\bigm\vert \boldsymbol{\lambda}\right), \\
    \mathbf{a}^{(n)}_{i_n}\bigm\vert\boldsymbol{\lambda}&\sim\mathcal{N}\left(\mathbf{0},\boldsymbol{\lambda}^{-1}\right), \\
    p\left(\boldsymbol{\lambda}\bigm\vert a_0,b_0\right)&\propto\prod_{r=1}^{R}\lambda_r^{a_0-1}e^{-b_0\lambda_r}, 
\end{align}
where $a_0,b_0>0$, 
$\boldsymbol{\lambda}=\begin{bmatrix} \lambda_1 & \cdots & \lambda_R\end{bmatrix}$
is the precision vector, and
$\mathbf{\Lambda}=\operatorname{diag}\left(\boldsymbol{\lambda}\right)$ is
the inverse of the covariance matrix.

\subsubsection{The Blockwise-Sparsity Prior}
To model the
$p\left(\widehat{\mathbf{y}}^{[K]}_k\bigm\vert\mathtt{\Theta}_S,\sigma\right)$
source, 
\cite{ZZZ2016,ZC2021} assume that each
element in $\mathcal{S}$ follows a univariate Gaussian distribution
with its own variance. However, this element-wise model underfits the
blockwise sparsity here. Accordingly, we propose to use a single variance
$\tau_k$
for all elements in block~$k$. Therefore, each block
converges to zero or not as a whole. We thus have
\begin{equation}
    \mathbf{s}^{[K]}_k\bigm\vert\tau_k\sim\mathcal{N}\left(\mathbf{0},\tau_k\mathbf{I}_{J}\right)
\end{equation}
where we note again that ${\mathbf{s}}^{[K]}_k$ is the $k^{\text{th}}$ column of $\mathbf{S}^{[K]}$.
Since $\mathcal{E}$ is independent from $\mathcal{S}$, we
immediately obtain the generative model for the $\mathcal{S}+\mathcal{E}$ source as
\begin{equation}
  \widehat{\mathbf{y}}^{[K]}_k\bigm\vert\widehat{\tau}_k\sim\mathcal{N}\left(\mathbf{0},\widehat{\tau}_k\mathbf{I}_{J}\right),
\end{equation}
where $\widehat{\tau}_k=\tau_k+\sigma$.
While an inverse gamma hyperprior on $\widehat{\tau}_k$ is needed
for tractable inference, we have necessarily that
$\widehat{\tau}_k>\sigma$ and so employ
a truncated inverse-gamma (TIG) distribution;
i.e., for $\forall k\in[K]$,
\begin{equation}
  p\left(\widehat{\tau}_k\bigm\vert c_0,d_0,\sigma\right)\propto\widehat{\tau}_k^{-c_0-1}e^{-d_0/\widehat{\tau}_k}
\end{equation}
with  $c_0,d_0>0$ and $\widehat{\tau}_k>\sigma$.

\subsubsection{Overall Generative Model}
We designate the collection of all the inferable latent variables as
\begin{equation}
  \mathbf{\Theta}=\left\{\bigl\{\mathbf{A}^{(n)}\bigr\}_{n=1}^N,\boldsymbol{\lambda},\left\{\widehat{\tau}_k\right\}_{k=1}^K,\left\{z_k\right\}_{k=1}^K,\eta\right\} 
\end{equation} 
such that the resulting joint distribution is
\begin{multline}
  p\left(\widehat{\mathcal{Y}},\mathbf{\Theta}
  \bigm\vert a_0,b_0,c_0,d_0,\alpha_0,\beta_0,\sigma\right)=\\
  \prod_{k} p^{1-z_k}\left(\widehat{\mathbf{y}}^{[K]}_k\bigm\vert\left\{\mathbf{A}^{(n)}\right\}_{n=1}^N,\sigma\right) p^{z_k}\left(\widehat{\mathbf{y}}^{[K]}_k\bigm\vert\widehat{\tau}_k\right) \times \\ \prod_{n} p\left(\mathbf{A}^{(n)}\bigm\vert\boldsymbol{\lambda}\right) p\left(\boldsymbol{\lambda}\bigm\vert a_0,b_0\right) p\left(\widehat{\tau}_k\bigm\vert c_0,d_0,\sigma\right)\times \\ p\left(z_k\bigm\vert\eta\right)p\left(\eta\bigm\vert\alpha_0,\beta_0\right) .
  \label{eq:joint}
\end{multline}
The logarithm of \eqref{eq:joint}
is derived as (A.1) in App.~A.

\subsection{Inference and Posterior Distributions}
\label{sec:inference}
For the generative model of \eqref{eq:joint}, we introduce
variational posterior distribution
$q\left(\mathbf{\Theta}\right)$
as an approximation to the true joint posterior distribution
$p\left(\mathbf{\Theta} \bigm\vert \widehat{\mathcal{Y}}\right)$, 
and, adopting mean-field approximation, assume $q\left(\mathbf{\Theta}\right)$
can be factored as
\begin{equation}
  q\left(\mathbf{\Theta}\right) = 
  \prod_{n, i_n, k} q_{\mathbf{a}^{(n)}_{i_n}}\left(\mathbf{a}_{i_n}^{(n)}\right)q_{\boldsymbol{\lambda}}\left(\boldsymbol{\lambda}\right) q_{\widehat{\tau}_k}\left(\widehat{\tau}_k\right) q_{z_k}\left(z_k\right)q_{\eta}(\eta).
\end{equation}
Accordingly,
the optimized functional form for the $m^{\text{th}}$ variable of
$\mathbf{\Theta}$ is
\begin{equation}
  \ln q\left(\mathbf{\Theta}_m\right)=E_{q\left(\mathbf{\Theta}\backslash\mathbf{\Theta}_m\right)}\left[\ln p\left(\widehat{\mathcal{Y}},\mathbf{\Theta}\bigm\vert-\right)\right] + \text{const}
  \label{eq:Inference Rule}
\end{equation}
where $q\left(\mathbf{\Theta}\backslash\mathbf{\Theta}_m\right)$
denotes the joint distribution of all variables except variable
$\mathbf{\Theta}_m$. Below, we present only
the most salient results, deferring details to App.~B.

We start first with the variational posterior for the CP factors.
Specifically, by
applying \eqref{eq:Inference Rule} to $\mathbf{a}^{(n)}_{i_n}$,
we find that $q_{\mathbf{a}^{(n)}_{i_n}}\left(\mathbf{a}_{i_n}^{(n)}\right)$
remains Gaussian. That is
\begin{equation}
  \mathbf{a}_{i_n}^{(n)}\sim\mathcal{N}\left(\mathbf{\mu}^{(n)}_{i_n},\mathbf{\Sigma}_{i_n}^{(n)}\right)
\end{equation}
with 
covariance
\begin{multline}
  \mathbf{\Sigma}_{i_n}^{(n)}=\\\Bigg(\hspace*{-0.25em}\sum_{\substack{k\in\mathbf{K}_{i_n} \\ (i_1 \cdots i_N)\in\mathtt{\Omega}_k}}\hspace*{-1em}\frac{1-E[z_k]}{\sigma}
  \mathop{\scalebox{2}{$\odot$}}_{n'\neq n}E\left[\mathbf{a}_{i_{n'}}^{(n')}\left(\mathbf{a}_{i_{n'}}^{(n')}\right)^T\right] + 
  E\left[\mathbf{\Lambda}\right]\Bigg)^{-1}
  \label{eq:FactorUp1}
\end{multline}
and mean
\begin{equation}
  \mathbf{\mu}^{(n)}_{i_n} =
  \mathbf{\Sigma}_{i_n}^{(n)}\hspace*{-1em}\sum_{\substack{k\in\mathbf{K}_{i_n} \\ (i_1 \cdots i_N)\in\mathtt{\Omega}_k}}\hspace*{-1em}\frac{1-E[z_k]}{\sigma}\widehat{\mathcal{Y}}_{i_1\cdots i_N}\mathop{\scalebox{2}{$\odot$}}_{n'\neq n}E\left[\mathbf{a}_{i_{n'}}^{(n')}\right] ,
  \label{eq:FactorUp2}
\end{equation}
where
$\mathbf{K}_{i_n}=\left\{k\bigm\vert(i_1\cdots i_N)\in\mathtt{\Omega}_k\right\}$,
and $\mathtt{\Omega}_k$ is defined as in Def.~\ref{Def:BProj}.
See App.~B.1 for the complete derivation.

Next, we consider $\boldsymbol{\lambda}$ which is updated by receiving
messages from CP factors and its own hyperprior. From
\eqref{eq:Inference Rule}, we obtain
\begin{equation}
  q_{\boldsymbol{\lambda}}\left(\boldsymbol{\lambda}\right)\propto\prod_{r=1}^{R}\lambda_r^{a_r-1}e^{-b_r\lambda_r} ,
\end{equation}
where
\begin{align}
  a_r &= a_0+\frac{1}{2}\sum_n I_n ,
  \label{eq:arupdate}
  \\
  b_r&=b_0+\frac{1}{2}\sum_nE\left[\left(\mathbf{a}_r^{(n)}\right)^T\mathbf{a}_r^{(n)}\right].
  \label{eq:brupdate}
\end{align}
Hence, $\boldsymbol{\lambda}$ remains gamma distributed in the posterior;
see App.~B.2 for details.

Collect the $\widehat{\tau}_k$ variables as vector
$\widehat{\boldsymbol{\tau}}\triangleq
\begin{bmatrix} \widehat{{\tau}}_1 &\cdots &\widehat{{\tau}}_K\end{bmatrix}$.
Letting $\mathbf{\Theta}_m=\widehat{\boldsymbol{\tau}}$ in
\eqref{eq:Inference Rule}, we find that the
$\widehat{\tau}_k$ follow independent TIG
distributions; i.e.,
\begin{equation}
  q_{\widehat{\boldsymbol{\tau}}}\left(\widehat{\boldsymbol{\tau}}\right)\propto\prod_k \widehat{{\tau}}_k^{-c_k-1}e^{-d_k/\widehat{{\tau}}_k}, \quad \text{s.t.}\,\,\widehat{{\tau}}_k>\sigma,
\end{equation}
where
\begin{align}
  c_k&=c_0+\frac{1}{2}JE[z_k],
  \label{eq:ckupdate}
  \\
  d_k&=d_0+\frac{1}{2}E[z_k]\left(\widehat{\mathbf{y}}^{[K]}_k\right)^T\widehat{\mathbf{y}}^{[K]}_k ,
  \label{eq:dkupdate}
\end{align}
with full details in App.~B.3.

The $z_k$ variables are critical to separating the two mixture sources.
Collecting them as
$\mathbf{z}\triangleq[z_1,\cdots,z_K]$, and
denoting $\bar{z}_k = E[z_k]$, we have
\begin{equation}
  q_{{\mathbf{z}}}\left({\mathbf{z}}\right)=\prod_k\bar{z}_k^{z_k}(1-\bar{z}_k)^{1-z_k}, 
\end{equation}
where
\begin{equation}
  \bar{z}_k=\frac{1}{1+\exp(\theta_k^{(2)}-\theta_k^{(1)})} ,
  \label{eq:zkupdate}
\end{equation}
and
\begin{multline}
  \theta_k^{(1)}=-\frac{1}{2}\left(\widehat{\mathbf{y}}^{[K]}_k\right)^T\widehat{\mathbf{y}}^{[K]}_k E\left[\frac{1}{\widehat{\tau}_k}\right]-\frac{J}{2}\ln2\pi
  \\
  -\frac{J}{2}E[\ln\widehat{\tau}_k] + E[\ln \eta],
  \label{eq:theta1update}
\end{multline}
\begin{multline}
    \theta_k^{(2)}=-\frac{1}{2\sigma}\biggl(\bigl(\widehat{\mathbf{y}}^{[K]}_k\bigr)^T\widehat{\mathbf{y}}^{[K]}_k-2\bigl(\widehat{\mathbf{y}}^{[K]}_k\bigr)^TE\bigl[{\mathbf{l}}^{[K]}_k\bigr]+\\E\bigl[\bigl(\widehat{\mathbf{l}}^{[K]}_k\bigr)^T\widehat{\mathbf{l}}^{[K]}_k\bigr]\biggr)\\-\frac{J}{2}\ln2\pi\sigma+E[\ln(1-\eta)] ,
  \label{eq:theta2update}
\end{multline}
with details in App.~B.4.
We note that, in \eqref{eq:theta2update}, 
$E\bigl[\bigl(\widehat{\mathbf{l}}^{[K]}_k\bigr)^T\widehat{\mathbf{l}}^{[K]}_k\bigr]$
is given directly by \cite[Thm. 3.3]{ZZC2015}. Thus, the primary
computational challenge lies with \eqref{eq:theta1update},
namely $E[\ln\widehat{\tau}_k]$ and
$E\left[{\widehat{\tau}_k}^{-1}\right]$. For a TIG distribution with
threshold $\sigma$, evaluating these expectations precisely is often
numerically unstable and computationally expensive. In RPCC, however,
$\sigma$ is specified manually. This allows us to leverage
the asymptotic regime where $\sigma \to 0$, leading to the
stable approximations,
\begin{align}
    E[\ln\widehat{\tau}_k]&\overset{\sigma\rightarrow 0}{\approx}\ln d_k-\psi(c_k),\\
    E\left[\frac{1}{\widehat{\tau}_k}\right]&\overset{\sigma\rightarrow 0}{\approx}\frac{c_k}{d_k} .
\end{align}

Finally, considering $\eta$, we find in App.~B.5
that a beta distribution still holds with
\begin{align}
  q_{\eta}(\eta)&\propto\eta^{\alpha-1}(1-\eta)^{\beta-1} , \\
  \alpha&=\alpha_0+\sum_{k}\bar{z}_k,
  \label{eq:alphaupdate}
  \\
  \beta&=\beta_0+K-\sum_{k}\bar{z}_k .
  \label{eq:betaupdate}
\end{align}

\subsection{Convergence Analysis---A Hard Classifier}
\label{sec:convergence}
The inference steps above follow a typical VBI framework whose
convergence is well understood \cite{Bis2006}: we proceed to
update each posterior distribution in turn by
maximizing the evidence lower bound (ELBO) until it reaches the
stationary point. We derive the ELBO, denoted by
$\mathscr{L}\left(\widehat{\mathcal{Y}},\mathbf{\Theta}\mid-\right)$,
as (B.25) in App.~B.6.

While convergence of the overall VBI framework is not
in question,  what is of interest is
the convergence behavior of $\bar{z}_k$, since $\bar{z}_k=E[z_k]$ provides
the solution to the key challenge in RPCC---namely, the partitioning
of the support. To
this end, we extract the $\bar{z}_k$-related terms from
$\mathscr{L}\left(\widehat{\mathcal{Y}},\mathbf{\Theta}\mid-\right)$,
which we denote as
\begin{equation}
  \mathscr{L}_{k}\left(\bar{z}_k\right)\triangleq\bar{z}_k\left(A_k-B_k\right)+h(\bar{z}_k) ,
\end{equation}
where
\begin{align}
  A_k&= -\frac{1}{2}\left(\widehat{\mathbf{y}}^{[K]}_k\right)^T\widehat{\mathbf{y}}^{[K]}_k E\left[\frac{1}{\widehat{\tau}_k}\right]-\frac{J}{2}\ln2\pi
  \nonumber\\
  &\quad-\frac{J}{2}E[\ln\widehat{\tau}_k]+E[\ln \bar{z}_k],\\
  B_k&= -\frac{1}{2\sigma} E\left[ \left(\widehat{\mathbf{y}}^{[K]}_k - \mathbf{l}^{[K]}_k\right)^T \left(\widehat{\mathbf{y}}^{[K]}_k - \mathbf{l}^{[K]}_k\right) \right] \nonumber\\
  &\quad - \frac{J}{2} \ln(2\pi \sigma)+E[\ln(1-\bar{z}_k)], \\
  h(\bar{z}_k)&\triangleq H\left(q_{z_k}\left(z_k\right)\right)=-\bar{z}_k\ln\bar{z}_k-(1-\bar{z}_k)\ln(1-\bar{z}_k) .
\end{align}
\begin{proposition}
  \textit{Suppose that $\widehat{\mathcal{Y}}$ is bounded and let
    $\xi_0$ denote the stationary point of
    $\mathscr{L}_{k}(\xi)$. Then
    $\lim_{\sigma\rightarrow0}\xi_0(1-\xi_0)=0.$}
  \label{Prop:HardClass}
\end{proposition}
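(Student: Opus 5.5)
We compute the stationary point of $\mathscr{L}_k(\xi)=\xi(A_k-B_k)+h(\xi)$ in closed form and then show that $|A_k-B_k|\to\infty$ as $\sigma\to0$. Since $h'(\xi)=\ln\frac{1-\xi}{\xi}$, the condition $\mathscr{L}_k'(\xi_0)=0$ gives
\begin{equation*}
\xi_0=\frac{1}{1+\exp\left(B_k-A_k\right)},\qquad \xi_0(1-\xi_0)=\frac{e^{B_k-A_k}}{\left(1+e^{B_k-A_k}\right)^2}\le e^{-|B_k-A_k|}.
\end{equation*}
This agrees with the update \eqref{eq:zkupdate}. Here we treat $A_k,B_k$ as fixed when differentiating, as in the mean-field coordinate update; the $E[\ln\bar z_k]$-type terms are replaced by $E[\ln\eta]$ and $E[\ln(1-\eta)]$, as in \eqref{eq:theta1update}--\eqref{eq:theta2update}. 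Hence it suffices to show $|A_k-B_k|\to\infty$ as $\sigma\to0$.

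We expand the difference using the definitions and the asymptotic forms $E[1/\widehat\tau_k]\approx c_k/d_k$ and $E[\ln\widehat\tau_k]\approx\ln d_k-\psi(c_k)$:
\begin{equation*}
B_k-A_k=-\frac{1}{2\sigma}Q_k+\frac{J}{2}\ln\frac{1}{\sigma}+\Big[\tfrac12\|\widehat{\mathbf y}^{[K]}_k\|^2\tfrac{c_k}{d_k}+\tfrac J2(\ln d_k-\psi(c_k))+E[\ln(1-\eta)]-E[\ln\eta]\Big],
\end{equation*}
where $Q_k=E\|\widehat{\mathbf y}^{[K]}_k-\mathbf l^{[K]}_k\|^2\ge0$. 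Boundedness of $\widehat{\mathcal Y}$ controls the bracket. The quantities $c_k\in[c_0,c_0+J/2]$ and $d_k\in[d_0,d_0+\tfrac12\|\widehat{\mathbf y}_k\|^2]$ are bounded above and below away from zero. The terms $E[\ln\eta]$ and $E[\ln(1-\eta)]$ are digamma differences, and they are bounded because $\alpha,\beta\in[\alpha_0,\alpha_0+K]$ and $[\beta_0,\beta_0+K]$ respectively. So the bracket is $O(1)$ uniformly in $\sigma$.

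We then split into two cases according to the behaviour of $Q_k$.
\begin{itemize}
\item If $Q_k/\sigma$ grows faster than $\frac J2\ln(1/\sigma)$, for instance if $Q_k$ stays bounded below by a positive constant, then $B_k-A_k\to-\infty$. In that case $\xi_0\to1$.
\item Otherwise, if $Q_k\le C\sigma$, the dominant term is $\frac J2\ln(1/\sigma)\to+\infty$, so $B_k-A_k\to+\infty$ and $\xi_0\to0$.
\end{itemize}
In either case $\xi_0(1-\xi_0)\to0$.

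The main obstacle is the intermediate regime. There, $Q_k/\sigma$ could track $\frac J2\ln(1/\sigma)$ exactly, so that the leading terms cancel, and $Q_k$ itself depends on $\sigma$ through the coupled VBI updates \eqref{eq:FactorUp1}--\eqref{eq:FactorUp2}. I would first handle this by arguing per block with the bound above on a subsequence. I would then use the fact that a block cancelling to $O(1)$ is a degenerate boundary case: it is non-generic given bounded data, since $Q_k$ stabilises to its limiting value as the posterior covariance $\mathbf{\Sigma}^{(n)}_{i_n}$ shrinks like $O(\sigma)$. If a rigorous version is needed, I would instead show that $\sigma^{-1}Q_k=\sigma^{-1}\|\widehat{\mathbf y}_k-E\mathbf l_k\|^2+O(1)$, because the trace of the covariance is $O(\sigma)$. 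This reduces the question to the deterministic residual, which either vanishes on the low-rank fit or is bounded below.
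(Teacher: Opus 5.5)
You follow the paper's route: closed-form stationary point $\xi_0=(1+e^{B_k-A_k})^{-1}$, boundedness of $A_k$ and of every $\sigma$-free term via the ranges of $c_k,d_k,\alpha,\beta$, then a case split on $Q_k$. Your case split is more careful than the paper's. The paper separates $Q_k=o(\sigma)$ (giving $\xi_0\to0$) from ``otherwise,'' and asserts $B_k\to-\infty$ in the second case. That assertion ignores the term $-\frac{J}{2}\ln(2\pi\sigma)\to+\infty$ and is false when $Q_k=\Theta(\sigma)$. This is the natural case for a correctly fitted background block, where the residual is essentially the $k$th block of the injected $\mathcal{E}$, with expected squared norm $J\sigma$. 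In that case $B_k\to+\infty$ and $\xi_0\to0$, as your second bullet correctly says.

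The proof is still incomplete, at the point you flag yourself, and neither of your remedies closes it. Write $\Delta_k=A_k-B_k$. Since the bracket is $O(1)$ and $\xi_0(1-\xi_0)=\bigl(4\cosh^2(\Delta_k/2)\bigr)^{-1}$, the conclusion holds if and only if $\bigl|\frac{Q_k}{2\sigma}-\frac{J}{2}\ln\frac{1}{\sigma}\bigr|\to\infty$.

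Note the factor $2$: the dividing line is $Q_k\approx J\sigma\ln(1/\sigma)$, not $Q_k/\sigma\approx\frac{J}{2}\ln(1/\sigma)$ as your first bullet and your description of the cancellation suggest.

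Away from this knife edge the argument is easy:
\begin{itemize}
\item $\limsup Q_k/(\sigma\ln(1/\sigma))<J$ gives $\xi_0\to0$;
\item $\liminf Q_k/(\sigma\ln(1/\sigma))>J$ gives $\xi_0\to1$.
\end{itemize}
But a family such as $Q_k=J\sigma\ln(1/\sigma)+2c\sigma$ keeps $\Delta_k$ bounded, and boundedness of $\widehat{\mathcal{Y}}$ does nothing to rule it out.

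Calling this case ``non-generic'' is not an argument. Replacing $Q_k$ by $\|\widehat{\mathbf{y}}^{[K]}_k-E[\mathbf{l}^{[K]}_k]\|^2$ plus an $O(\sigma)$ covariance term only moves the problem. That covariance bound is itself asserted rather than derived from \eqref{eq:FactorUp1}. More importantly, the residual still depends on $\sigma$, both through $\mathcal{E}$ and through the coupled fixed point. The dichotomy ``either $O(\sigma)$ or bounded below'' is exactly the missing hypothesis, not something you have shown.

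To finish, state an explicit assumption on how $Q_k$ scales with $\sigma$. For instance, assume $Q_k$ is either $O(\sigma)$ or bounded away from zero, or more generally that $Q_k/(\sigma\ln(1/\sigma))$ stays bounded away from $J$. Under such an assumption your cases are exhaustive and the rest of your argument is correct. The paper's proof needs such an assumption too; it hides the issue by placing the threshold at $o(\sigma)$, which is the wrong scale.
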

\begin{proof}
  It follows from \eqref{eq:ckupdate}, \eqref{eq:dkupdate},
  \eqref{eq:alphaupdate}, and \eqref{eq:betaupdate} that
    \begin{gather}
      0<c_0\leq c_k\leq c_0+\frac{J}{2} , \\
      0<d_0\leq d_k\leq d_0+\frac{1}{2}\left(\widehat{\mathbf{y}}^{[K]}_k\right)^T\widehat{\mathbf{y}}^{[K]}_k ,\\
      0<\alpha_0\leq \alpha\leq \alpha_0+K , \\
      0<\beta_0\leq \beta\leq \beta_0+K .
    \end{gather}
    Thus, provided that $\widehat{\mathcal{Y}}$ is bounded, $c_k$,
    $d_k$, $\alpha$, and $\beta$ are all bounded from both sides,
    resulting in $E\left[\frac{1}{\widehat{\tau}_k}\right]$,
    $E[\ln\widehat{\tau}_k]$, $E[\ln\eta]$, $E[\ln(1-\eta)]$, and
    $A_k$ being bounded as well. Denoting $\Delta_k=A_k-B_k$, we have
    $\mathscr{L}_{k}(\xi)=\xi\Delta_k+h(\xi)$.  Since $h(\xi)$ is
    concave, so then is $\mathscr{L}_k(\xi)$. Thus, $\xi_0$ is
    necessarily the stationary point of $\mathscr{L}_k(\xi)$. Setting
    \begin{equation}
      \frac{d}{d\xi} \mathscr{L}_k(\xi) =
      \Delta_k + \ln\left(\frac{1-\xi}{\xi}\right) 
      \end{equation}
    to zero and solving for $\xi_0$ yields
    \begin{equation}
      \xi_0 = \frac{1}{1 + e^{-\Delta_k}} .
      \label{eq:x0}
    \end{equation}
    If $E\Bigl[ \left(\widehat{\mathbf{y}}^{[K]}_k -
      \mathbf{l}^{[K]}_k\right)^T
      \left(\widehat{\mathbf{y}}^{[K]}_k -
      \mathbf{l}^{[K]}_k\right) \Bigr] = o(\sigma),$ then
    $B_k\overset{\sigma\rightarrow0}{\rightarrow}\infty$ and
    $\Delta_k\overset{\sigma\rightarrow0}{\rightarrow}-\infty$.  From
    \eqref{eq:x0}, we see then that, in this case,
    \begin{equation}
      \lim_{\sigma\rightarrow0} \xi_0 =
      \lim_{\Delta_k \rightarrow -\infty} \frac{1}{1 + e^{-\Delta_k}} =
      0 .
      \label{eq:x00}
    \end{equation}
    Otherwise, we have
    $B_k\overset{\sigma\rightarrow0}{\rightarrow}-\infty$,
    $\Delta_k\overset{\sigma\rightarrow0}{\rightarrow}\infty$, and
    \begin{equation}
      \lim_{\sigma\rightarrow0}\xi_0 =
      \lim_{\Delta_k \rightarrow \infty} \frac{1}{1 + e^{-\Delta_k}} =
      1.
      \label{eq:x01}
    \end{equation}
\end{proof}

Interestingly, we observe that the proposed generative RPCC model,
although originally conceived as a soft classifier akin to most RPCA
formulations, exhibits a remarkable and advantageous property. As
established by Prop.~\ref{Prop:HardClass}, the variational parameter
$\bar{z}_k$, which represents the probability of the $k^{\text{th}}$
data block belonging to the outlier component $\mathcal{S}$, converges
unequivocally to either $0$ or $1$ as we take $\sigma$ toward 0.  This
convergence effectively transforms the model into a hard classifier,
yielding almost sure decisions for support separation.  Thus, the
proposed model eliminates the dependency on a post-hoc thresholding
step typically required by soft classifiers in RPCA formulations. Such
a threshold is often difficult to determine in practice, as prior
knowledge is virtually non-existent, and its selection can be as
challenging as designing the classifier itself. In contrast, our
model, by converging to definitive assignments, bypasses this
hurdle entirely.

\subsection{The BCP-RPCC Algorithm}
\label{sec:algorithm}
\begin{algorithm}[t]
  \renewcommand{\algorithmicrequire}{\textbf{Input:}}
  \renewcommand{\algorithmicensure}{\textbf{Output:}}
  \caption{The BCP-RPCC Algorithm}
  \label{alg:BCP-RPCC}
  \begin{algorithmic}[1]
    \REQUIRE Observation
    $\mathcal{Y}\in\mathbb{R}^{I_1\times\cdots\times I_N}$;
    CP rank $R$;
    noise variance $\sigma$;
    blockwise parameters $\left\{J_n\right\}^N_{n=1}$ and
    $\left\{K_n\right\}^N_{n=1}$; and
    hyperprior parameters $a_0,b_0,c_0,d_0,\alpha_0,\beta_0$
    \ENSURE
    $\widehat{\mathcal{L}}$, $\widehat{\mathcal{S}}$, and
    $\widehat{\mathtt{\Omega}}$
    \STATE Draw an $\mathcal{E}$ i.i.d. from $\mathcal{N}(0,\sigma)$
    \STATE$\widehat{\mathcal{Y}}\leftarrow\mathcal{Y}+\mathcal{E}.$
    \STATE $\bigl\{\bar{\mathbf{A}}^{(1)},\dots,\bar{\mathbf{A}}^{(N)}\bigr\}\leftarrow$ \\\hfill$\arg\min_{\left\{\bar{\mathbf{A}}^{(1)},\ldots,\bar{\mathbf{A}}^{(N)}\right\}}\bigl\|\widehat{\mathcal{Y}}-\operatorname{CP}\bigl\{\bar{\mathbf{A}}^{(1)},\dots,\bar{\mathbf{A}}^{(N)}\bigr\}\bigr\|_F$
    \STATE$K\leftarrow\prod_{n}K_n$
    \STATE Initialization: $\forall n \in [N], \forall r \in [R], \forall k \in [K], \forall i_n \in [I_n]$:
    \begin{gather*}
      \mathbf{\mu}^{(n)}_{i_n}\leftarrow\bar{\mathbf{a}}^{(n)}_{i_n}, \,\,
      \mathbf{\Sigma}_{i_n}^{(n)}\leftarrow\mathbf{I}_R, \,\,
      a_r\leftarrow a_0, \,\,
      b_r\leftarrow b_0 , \\
      c_k\leftarrow c_0, \,\,
      d_k\leftarrow d_0, \,\,
      \bar{z}_k\leftarrow z_0, \,\,
      \alpha\leftarrow\alpha_0, \,\,
      \beta\leftarrow\beta_0
    \end{gather*}
    \WHILE{ELBO not converged}
    \FOR{$n=1:N$}
    \FOR{$i_n=1:I_n$}
    \STATE Update
    $\mathbf{\mu}^{(n)}_{i_n}$, $\mathbf{\Sigma}_{i_n}^{(n)}$
    via \eqref{eq:FactorUp1}, \eqref{eq:FactorUp2}
    \ENDFOR
    \ENDFOR
    \FOR{$r=1:R$}
    \STATE Update
    $a_r$, $b_r$ via \eqref{eq:arupdate}, \eqref{eq:brupdate}
    \ENDFOR
    \FOR{$k=1:K$}
    \STATE Update $c_k$, $d_k$ via \eqref{eq:ckupdate}, \eqref{eq:dkupdate}
    \STATE Update $\bar{z}_k$ via \eqref{eq:zkupdate}
    \STATE Update
    $\alpha$, $\beta$ via \eqref{eq:alphaupdate}, \eqref{eq:betaupdate}
    \ENDFOR
    \ENDWHILE
    \STATE
    $\mathbf{a}^{(n)}_{i_n}\leftarrow\mathbf{\mu}^{(n)}_{i_n}$,
    $\forall\,i_n\in[I_n]$, $\forall\,n\in[N]$
    \STATE
    $\widehat{\mathcal{L}}\leftarrow\operatorname{CP}\bigl\{\mathbf{A}^{(1)},\dots,\mathbf{A}^{(N)}\bigr\}$
    \STATE$\widehat{\mathtt{\Omega}}\leftarrow\left\{k|\bar{z}_k>0\right\}$
    \label{step:support}
    \STATE$\widehat{\mathcal{S}}\leftarrow
    \mathscr{P}_{\widehat{\mathtt{\Omega}}}\left[\mathcal{Y}\right]$
  \end{algorithmic}
\end{algorithm}

Applying VBI to the posteriors derived in Sec.~\ref{sec:inference} results in an algorithmic solution to the RPCC problem of
\eqref{eq:RPCC} as detailed in Alg.~\ref{alg:BCP-RPCC}.  Given its
reliance on a BCP-based formulation of low rank, we refer to the
algorithm as BCP-RPCC. We note that, due to the hard classifier
provided by $\bar{z}_k$ as discussed in
Sec.~\ref{sec:convergence}, Step~\ref{step:support} of
Alg.~\ref{alg:BCP-RPCC} determines the support
$\widehat{\mathtt{\Omega}}$ of $\widehat{\mathcal{S}}$ without
thresholding.

\subsection{Complexity Analysis}
In Alg.~\ref{alg:BCP-RPCC}, the primary computational burden for
updating the posterior distributions of the CP factors lies in the
matrix inverse in \eqref{eq:FactorUp1}
 which has a complexity of
$O\bigl(R^2(N+K)N+R^3\sum_nK_n\bigr)$. The complexity for
updating $q_{\boldsymbol{\lambda}}(\boldsymbol{\lambda})$ fully lies
in the calculation of $b_r$ via \eqref{eq:brupdate} which is
$O\left(R^2\sum_nI_n\right)$. Updating
$q_{\widehat{\boldsymbol{\tau}}}(\widehat{\boldsymbol{\tau}})$ via
\eqref{eq:ckupdate} and \eqref{eq:dkupdate} is
quite easy and involves only $O(K)$ multiplications. During the updating of
$q_{{\mathbf{z}}}({\mathbf{z}})$ via \eqref{eq:zkupdate}, the calculation of
$E\bigl[\bigl(\widehat{\mathbf{l}}^{[K]}_k\bigr)^T\widehat{\mathbf{l}}^{[K]}_k\bigr]$ in \eqref{eq:theta2update}
costs no more than $O\left(R^2J\right)$ complexity, and the computational
cost for updating $q_\eta(\eta)$ via \eqref{eq:alphaupdate} and
\eqref{eq:betaupdate} is negligible. Thus, the proposed
BCP-RPCC algorithm has polynomial complexity; specifically,
\begin{equation}
  O\Bigl(R^2\bigl(N^2+KN+\sum_nI_n+J\bigr)+R^3\sum_nK_n+K\Bigr).
\end{equation}

\section{Experimental Evaluation}
\label{sec:results}
We now gauge the performance of the BCP-RPCC algorithm relative to a
number of other RPCA-based techniques.  To do so, we first consider
performance on an ideally configured synthetic dataset with known
ground truth. We then turn attention to two real-world tasks that have
been widely used to demonstrate RPCA performance, namely,
foreground-background extraction from motion video (e.g.,
\cite{BZ2014,CWS2016,AJ2025,CLM2011,LFC2020,ZC2021,WPQ2023,FZL2023,FLL2025,LZL2024a})
and anomaly detection within hyperspectral imagery (e.g.,
\cite{KZL2017,LLD2021,LLQ2022,WWH2023,WLG2025}).  All experiments are
implemented on an Intel\textsuperscript{\textregistered}
Core\textsuperscript{TM} i9-13900K CPU @ 3.00 GHz with 128-GB RAM and
an NVIDIA\textsuperscript{\textregistered} RTX 4090 GPU with 24-GB of
memory. We initialize the hyperprior parameters in Alg.~\ref{alg:BCP-RPCC}
as $a_0=b_0=c_0=d_0=10^{-5}$, $\alpha_0=\beta_0=1$ and $z_0=0.5$ to be non-informative.

\subsection{Identifiability on Synthetic Data}
\label{sec:resultssynthetic}
\begin{figure}[t]
  \centering
  \setlength{\tabcolsep}{0.2mm}
  \begin{tabular}{m{0.16\linewidth}m{0.16\linewidth}m{0.16\linewidth}m{0.16\linewidth}m{0.16\linewidth}m{0.16\linewidth}}
    \multicolumn{2}{c}{\includegraphics[width=0.32\linewidth]{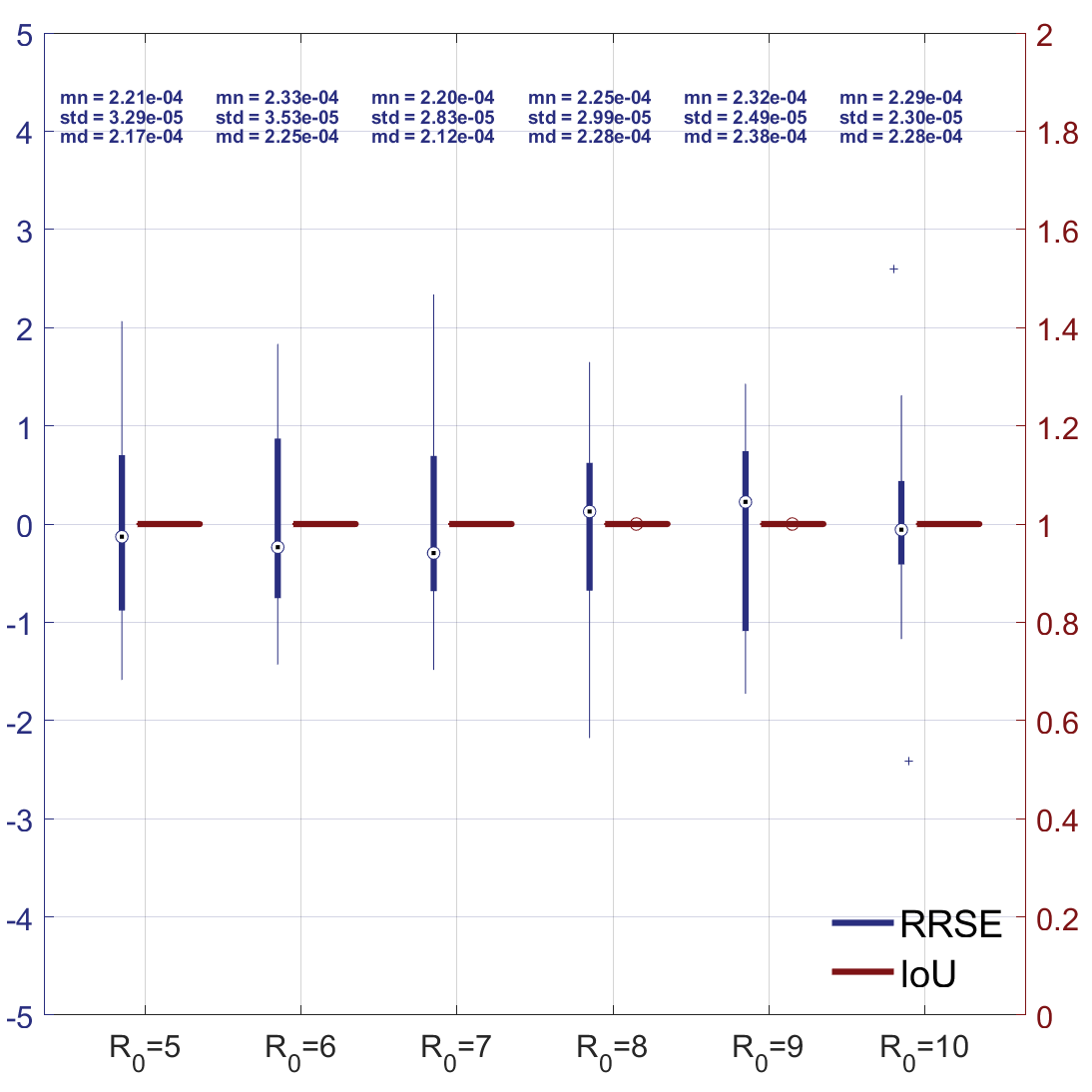}}&\multicolumn{2}{c}{\includegraphics[width=0.32\linewidth]{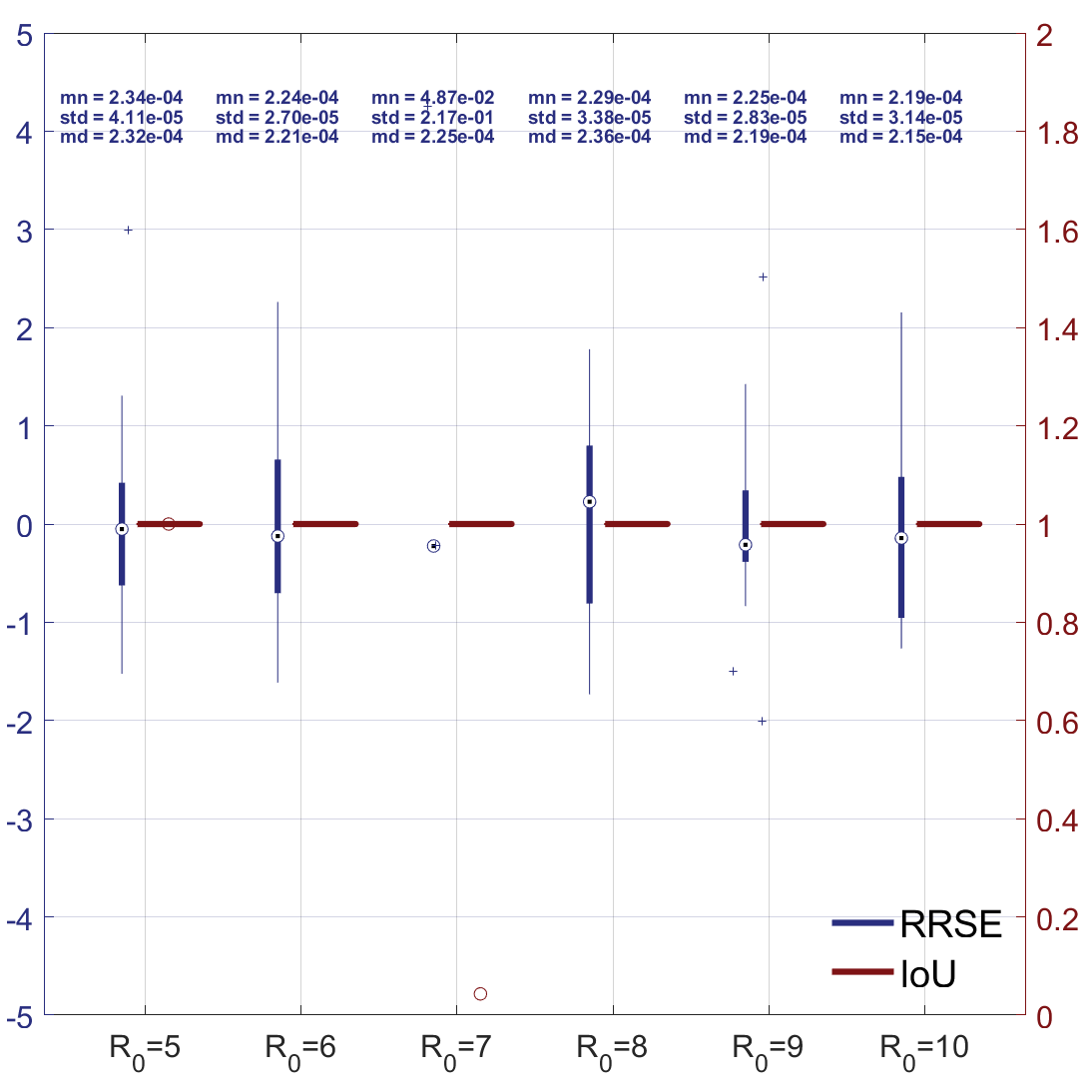}}&\multicolumn{2}{c}{\includegraphics[width=0.32\linewidth]{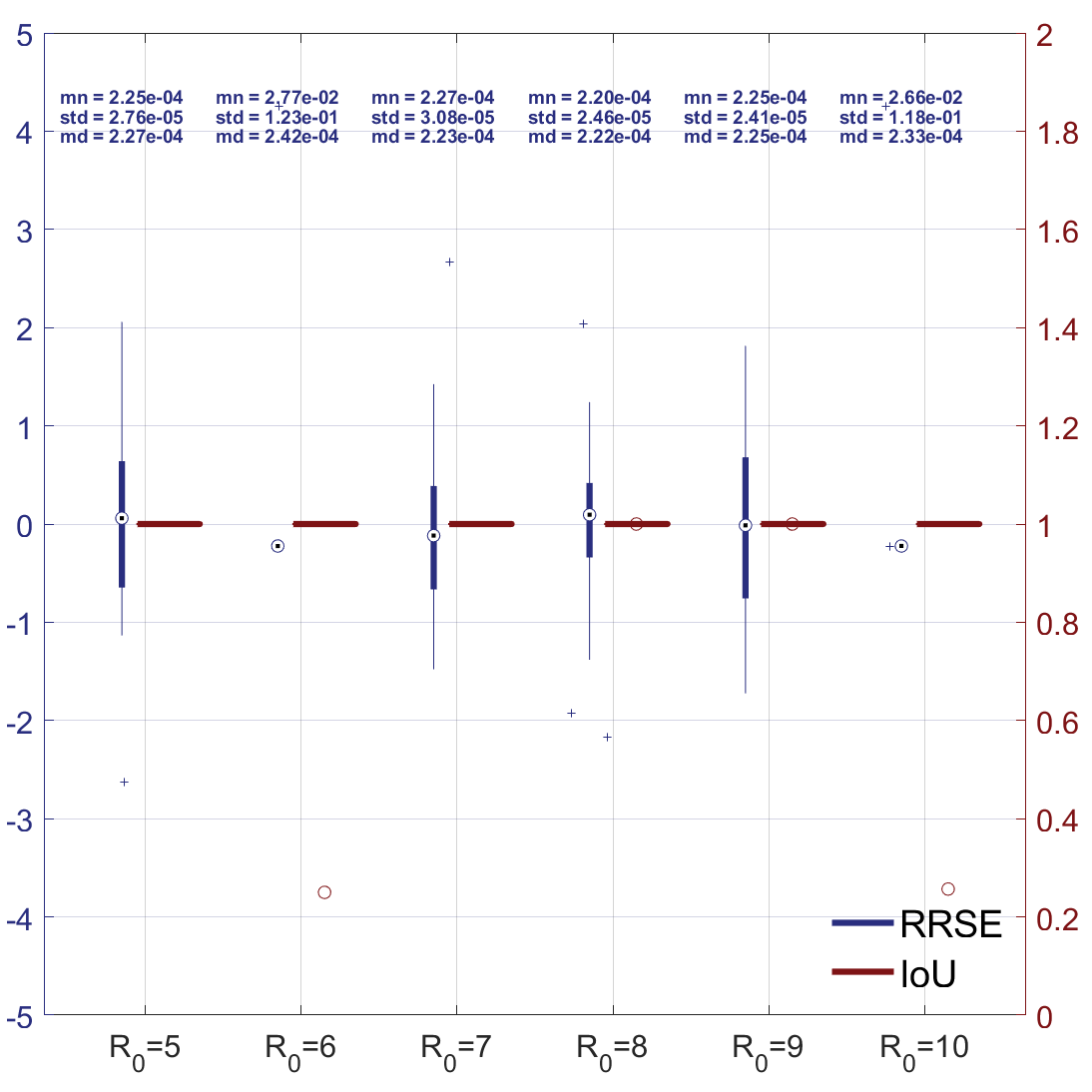}}\\
    \multicolumn{2}{c}{$\rho=0.02$}&\multicolumn{2}{c}{$\rho=0.04$}&\multicolumn{2}{c}{$\rho=0.06$}\\
    &\multicolumn{2}{c}{\includegraphics[width=0.32\linewidth]{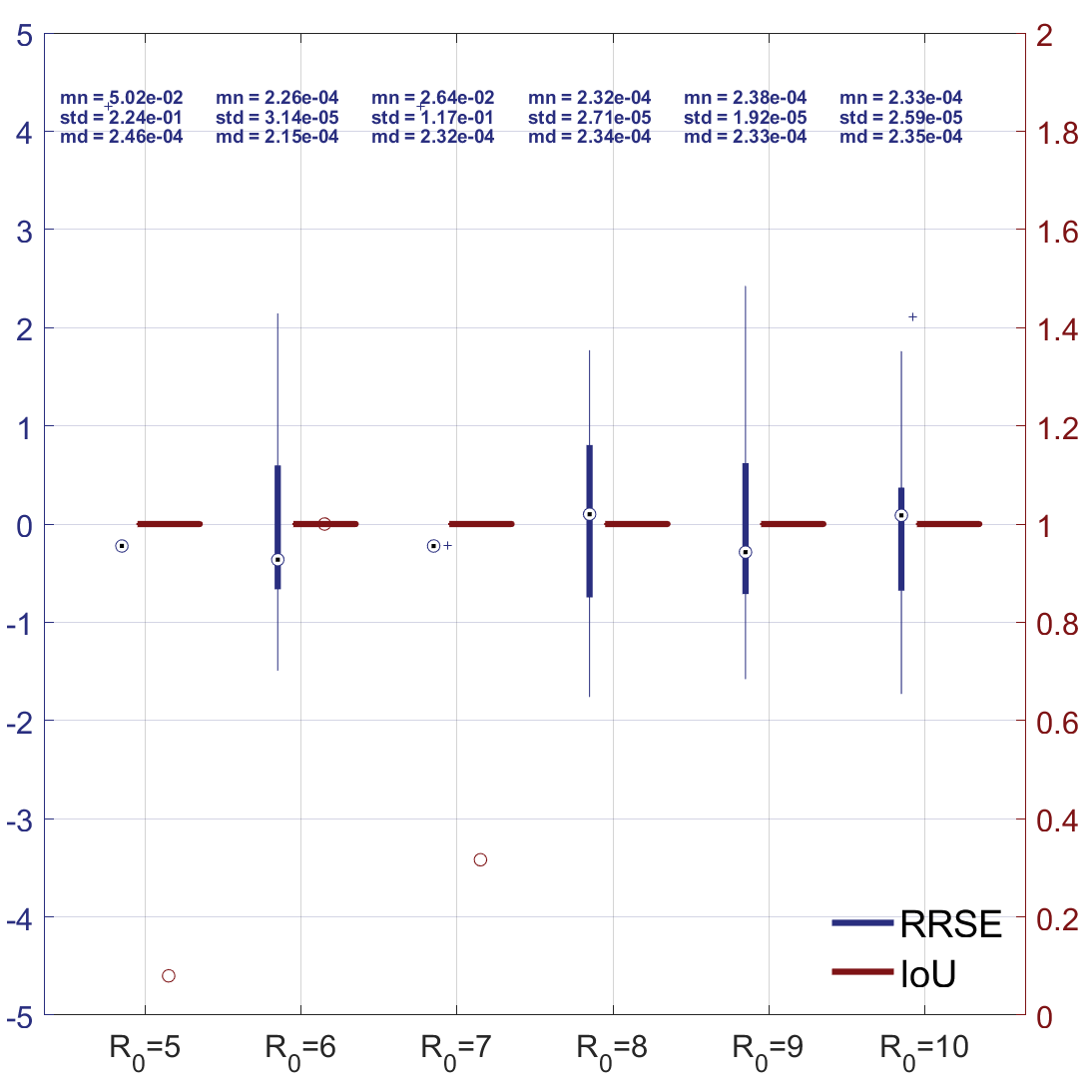}}&\multicolumn{2}{c}{\includegraphics[width=0.32\linewidth]{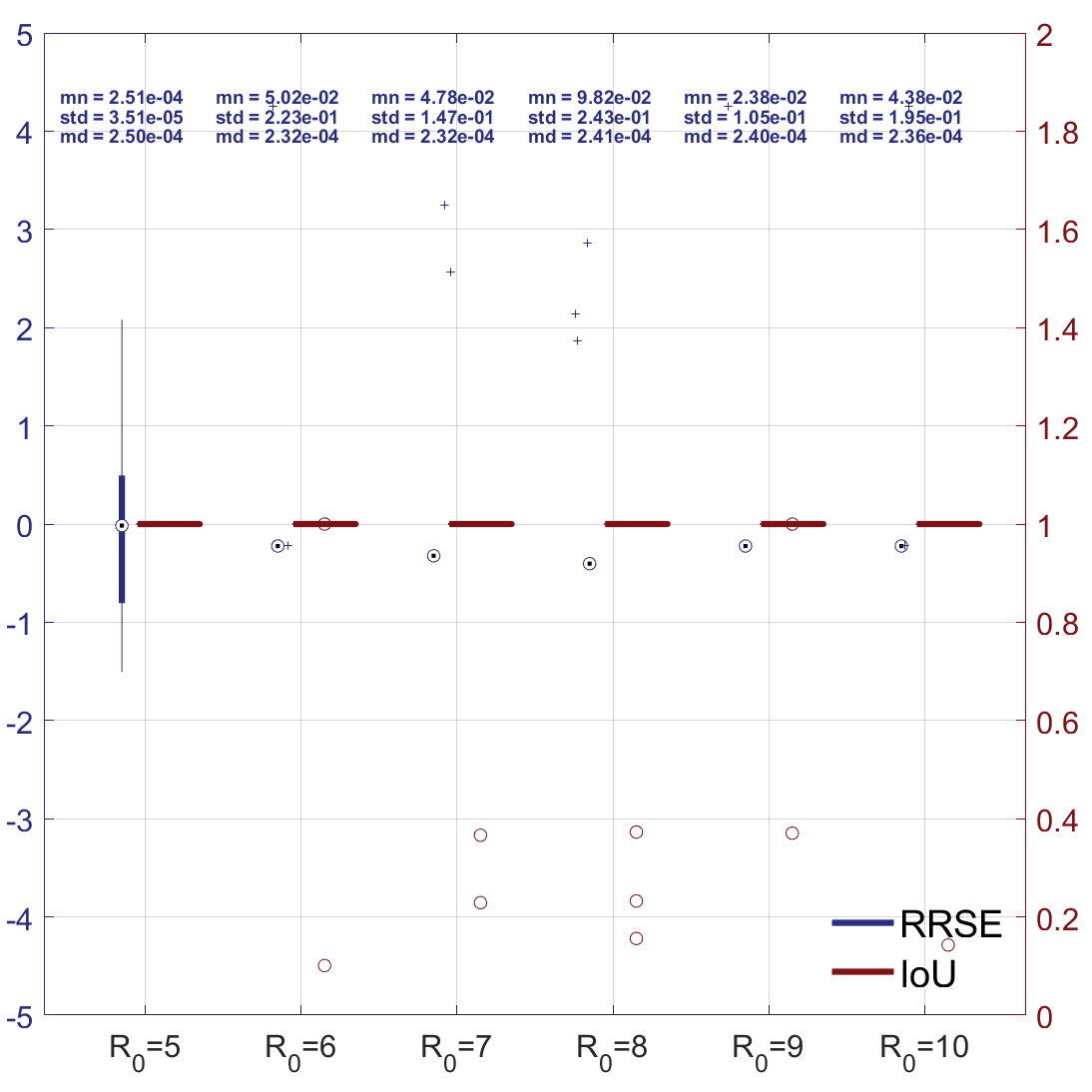}}&\\
    &\multicolumn{2}{c}{$\rho=0.08$}&\multicolumn{2}{c}{$\rho=0.1$}&
  \end{tabular}
  \caption{Box plots of RRSE and IoU on synthetic data. RRSE is
    standardized via Z-score transformation. The statistics in 
    blue in the upper area are the original mean (mn), standard
    deviation (std) and median (md) of RRSE for each group of 20
    runs.}
  \label{fig:Syn}
\end{figure}

The synthetic dataset is created by drawing the ground-truth
$\mathcal{L}$ and $\mathcal{S}$ from random
distributions. Specifically, to simulate an
$\mathcal{L}\in\mathbb{R}^{20\times20\times20\times20}$ with CP rank
of $R_0$, we generate CP factors
$\bigl\{\mathbf{A}^{(n)}\in\mathbb{R}^{20\times R_0}\bigr\}_{n=1}^4$
with
$\mathbf{a}^{(n)}_{i_n}\sim\mathcal{N}\left(\mathbf{0},\mathbf{I}_{R_0}\right)$,
$i_n\in[20]$, such that
$\mathcal{L}=\operatorname{CP}\bigl(\mathbf{A}^{(1)},\dots,\mathbf{A}^{(4)}\bigr).$
To simulate the sparse component
$\mathcal{S}\in\mathbb{R}^{20\times20\times20\times20}$, we first
generate an
$\widetilde{\mathcal{S}}\in\mathbb{R}^{20\times20\times20\times20}$
with each element $\widetilde{\mathcal{S}}_{i_1\cdots
  i_4}\sim\mathcal{N}(0,1)$. We then set $J_n=4$ and $K_n=5$ for
$n\in[4],$ so that there are a total of $5^4$ blocks of size
$4\times4\times4\times4$ in $\mathcal{S}$. Next, a blockwise support
$\mathtt{\Omega}$ with $\lvert\mathtt{\Omega}\rvert=5^4\rho$ is
uniformly drawn from $[5^4]$, where $\rho$ is the blockwise sampling
ratio and $\lvert\cdot\rvert$ returns the cardinality of a
set. Finally, the synthetic sparse component is
$\mathcal{S}=\mathscr{P}_{\Omega}\bigl[\widetilde{\mathcal{S}}\bigr]$, and
the resulting observation is 
$\mathcal{Y}=\mathscr{P}_{\mathtt{\Omega}^\bot}\bigl[\mathcal{L}\bigr]+\mathcal{S}$.
To measure the identifiability of the proposed model, we employ
the relative root squared error (RRSE), 
\begin{equation}
  \operatorname{RRSE}(\widehat{\mathcal{L}},\mathcal{L}) =
  \frac{\bigl\|\widehat{\mathcal{L}}-\mathcal{L}\bigr\|_F}{\bigl\|\mathcal{L}\bigr\|_F} ,
  \label{eq:rrse}
\end{equation}
and the intersection over union (IoU),
\begin{equation}
  \operatorname{IoU}(\widehat{\mathtt{\Omega}}, \mathtt{\Omega}) =
  \frac{\bigl\lvert\widehat{\mathtt{\Omega}}\cap\mathtt{\Omega}\bigr\rvert}{\bigl\lvert\widehat{\mathtt{\Omega}}\cup\mathtt{\Omega}\bigr\rvert}.
  \label{eq:iou}
\end{equation}
Note that the accuracy of the $\widehat{\mathcal{S}}$ estimate is
directly tied to the closeness of $\widehat{\mathtt{\Omega}}$ to
$\mathtt{\Omega}$, thus we need no separate measure for
$\widehat{\mathcal{S}}$.

To comprehensively evaluate performance, we vary $R_0$ over
$\left\{5,6,7,8,9,10\right\}$ and $\rho$ over
$\left\{0.02,0.04,0.06,0.08,0.1\right\}$, leading to a total of
$6\times5=30$ groups of experiments; each group is repeated for
$20$ trials. In Alg.~\ref{alg:BCP-RPCC}, parameters $R$ and $\sigma$
are set to $R=2R_0$ and $\sigma=10^{-4}$.  The results are reported as
box plots in Fig.~\ref{fig:Syn}. We see that the medians of RRSE stay
below $2.5\times10^{-4}$ with the standard deviation being yet another
order of magnitude lower in most cases. Even in other cases in which
the standard deviation increases to around $10^{-1}$ (e.g., for
parameters $\left(\rho=0.04,R_0=7\right)$ and
$\left(\rho=0.08,R_0=5\right)$), the corresponding box plot collapses
to a single dot, indicating that the large standard deviation is
attributable to a few outliers (shown as blue crosses).  The box plot
of IoU is a single line at $\operatorname{IoU}=1$ for all groups with
merely a few outliers visible (shown as red circles). Given the sparsity
of these outliers, we conclude that the proposed model achieves a
near-optimal solution to the original RPCC problem with high
probability, a surprising result in light of the NP-hardness of the problem.

\subsection{Foreground Extraction from Color Video}
\label{sec:resultsforeground}
\begin{figure}[t]
  \centering
  \setlength{\tabcolsep}{0.1mm}
  \begin{tabular}{cm{\largeimage}m{\largeimage}m{\largeimage}}
    &\multicolumn{1}{c}{\footnotesize Original Frame} &
    \multicolumn{1}{c}{\footnotesize Ground Truth} &
    \multicolumn{1}{c}{\footnotesize ROI} \\
    \rotatebox[origin=c]{90}{\makecell{\footnotesize Highway\\[-0.25em]\footnotesize $240\times320$}} &
    \includegraphics[width=\largeimage]{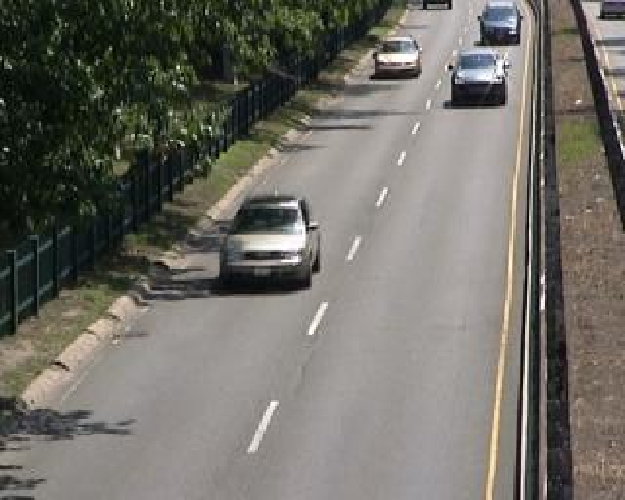} &
    \includegraphics[width=\largeimage]{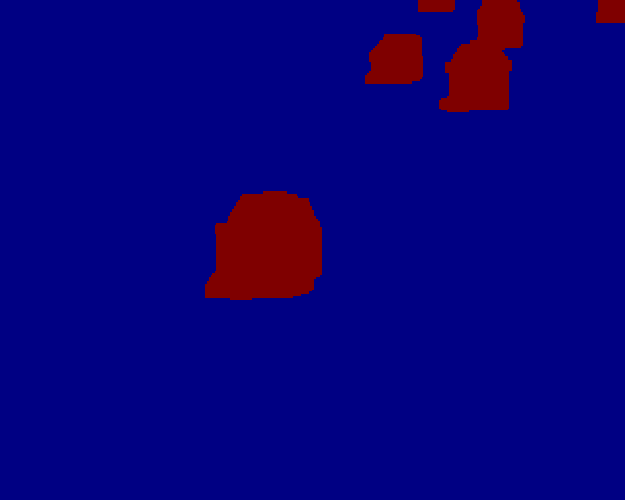} &
    \includegraphics[width=\largeimage]{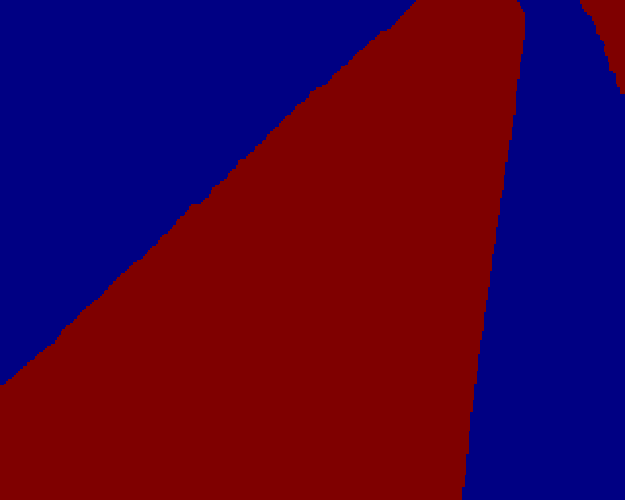} \\
    \rotatebox[origin=c]{90}{\makecell{\footnotesize Turnpike\\[-0.25em]\footnotesize $240\times320$}} &
    \includegraphics[width=\largeimage]{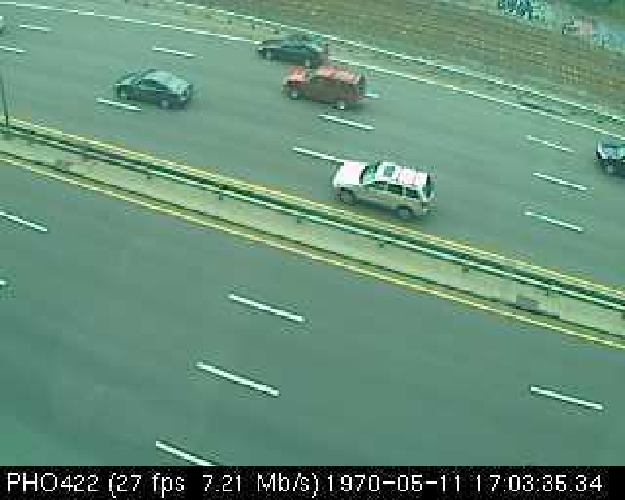} &
    \includegraphics[width=\largeimage]{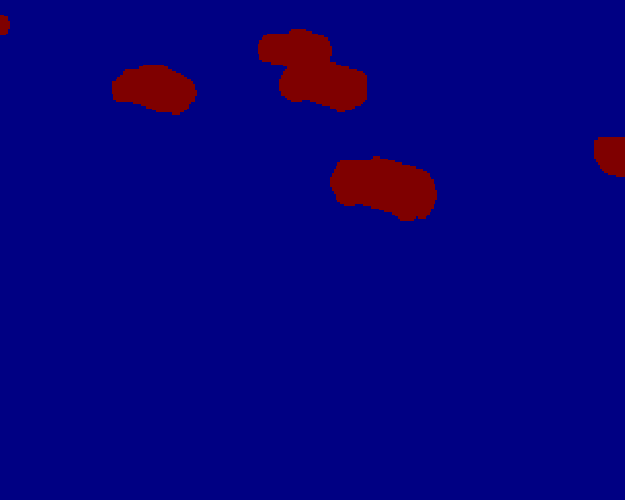} &
    \includegraphics[width=\largeimage]{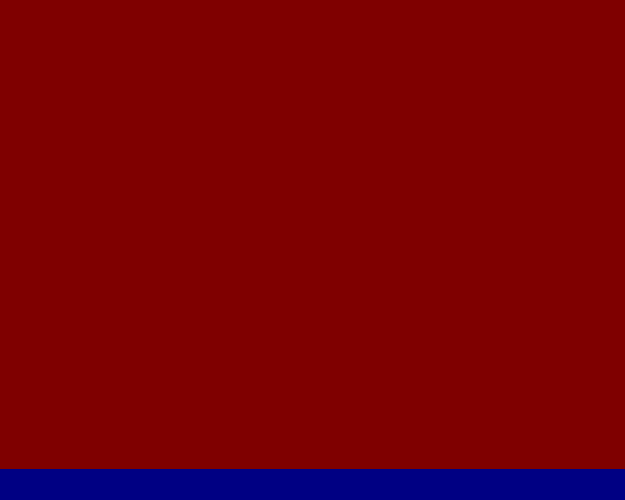} \\
    \rotatebox[origin=c]{90}{\makecell{\footnotesize Crossroad\\[-0.25em]\footnotesize $190\times400$}} &
    \includegraphics[width=\largeimage]{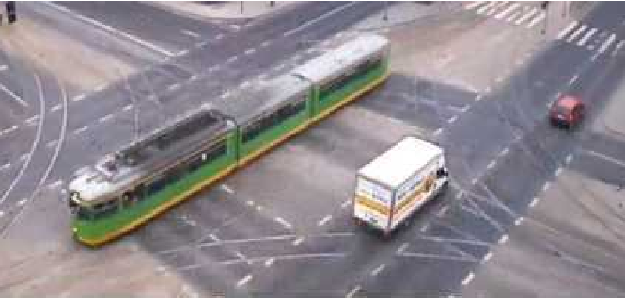} &
    \includegraphics[width=\largeimage]{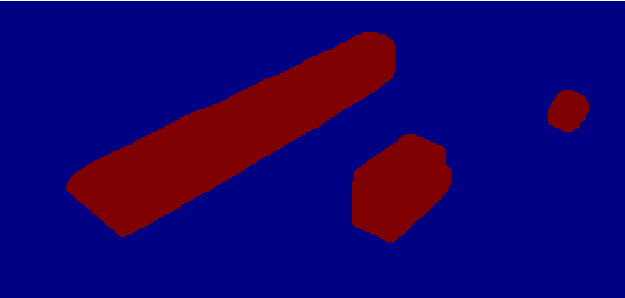} &
    \includegraphics[width=\largeimage]{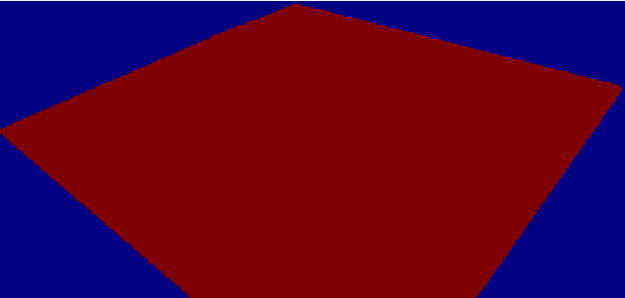} \\
    \rotatebox[origin=c]{90}{\makecell{\footnotesize Busstation\\[-0.25em]\footnotesize $240\times360$}} &
    \includegraphics[width=\largeimage]{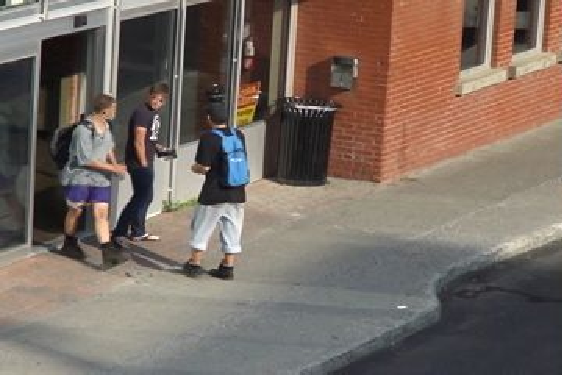} &
    \includegraphics[width=\largeimage]{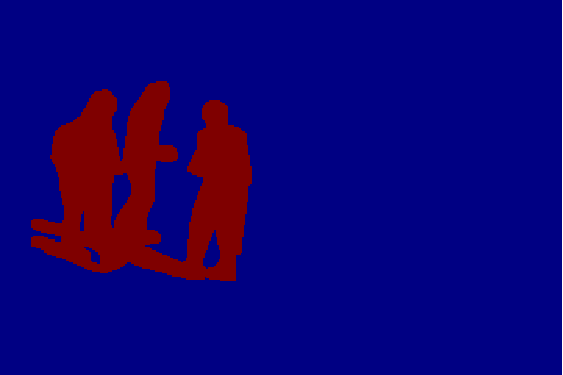} &
    \includegraphics[width=\largeimage]{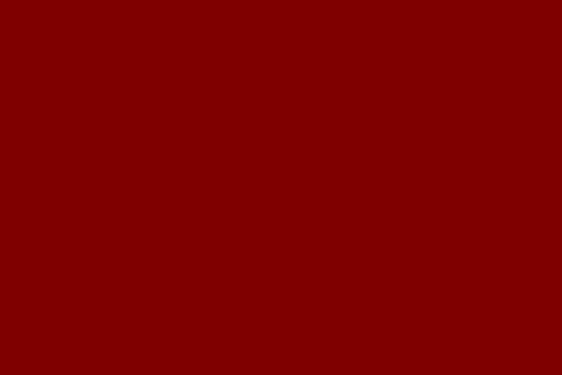}
    \\
    &\multicolumn{3}{c}{\includegraphics[width=2.5\largeimage]{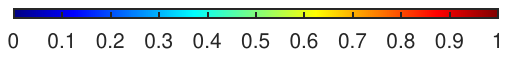}}
  \end{tabular}
  \caption{Example frames, along with ground-truth foreground masks
    and ROIs, from the four CDnet videos used for the
    foreground-extraction experiments.}
  \label{fig:FgMdata}
\end{figure}

\begin{figure}[t]
  \centering
  \setlength{\tabcolsep}{0.2mm}
  \begin{tabular}{ccc}
    \includegraphics[width=\mediumplot]{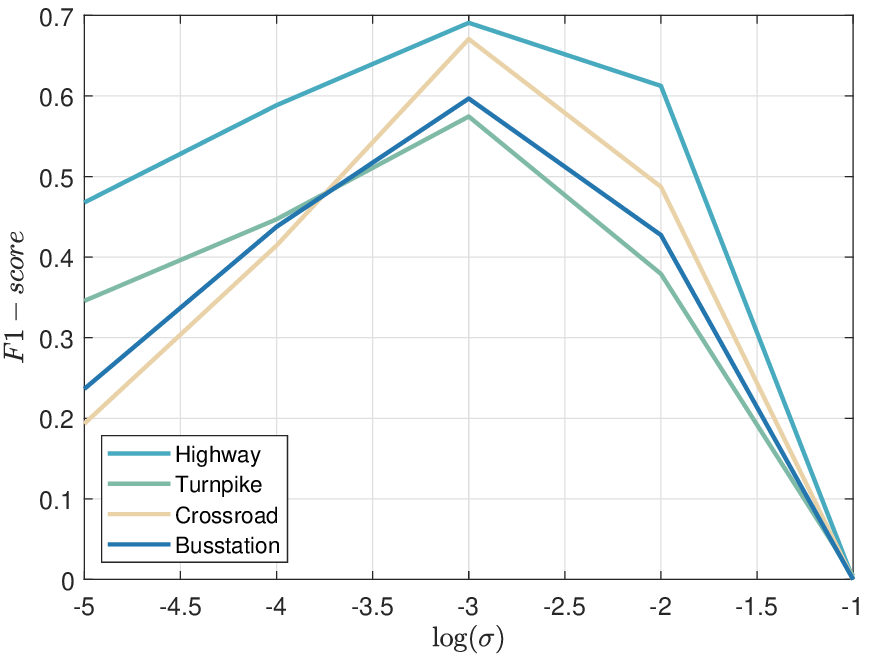} &
    \includegraphics[width=\mediumplot]{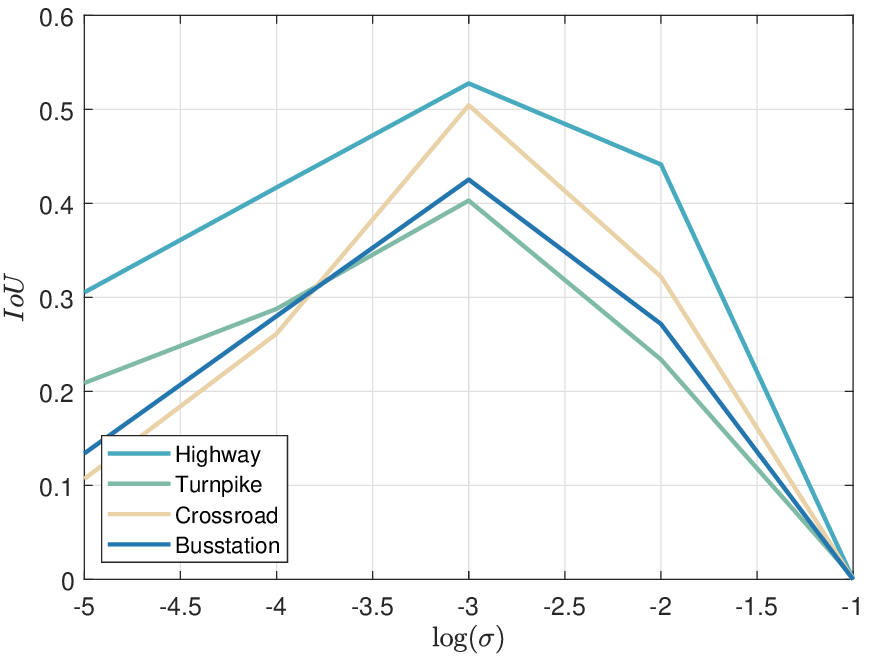} \\
    \includegraphics[width=\mediumplot]{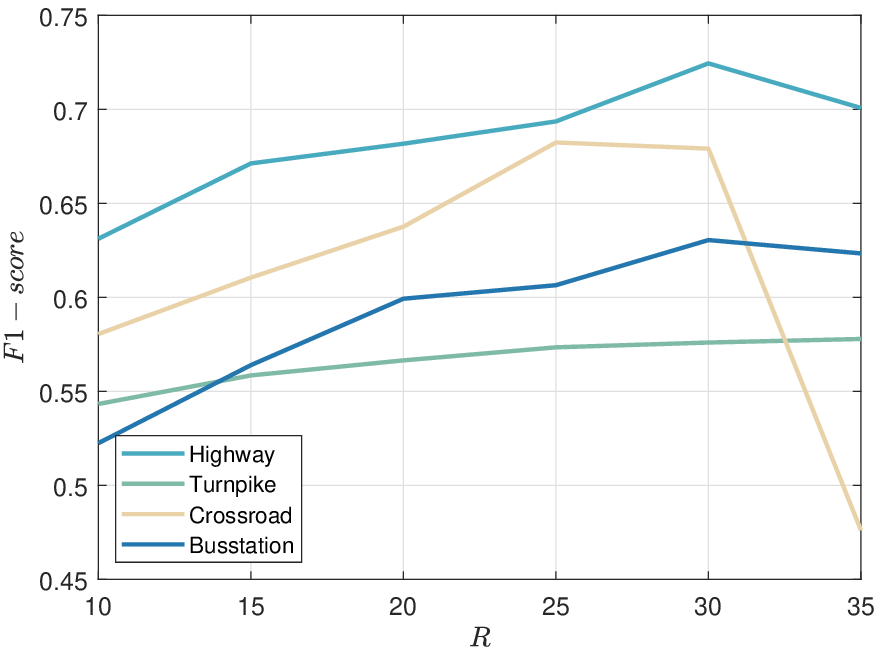} &
    \includegraphics[width=\mediumplot]{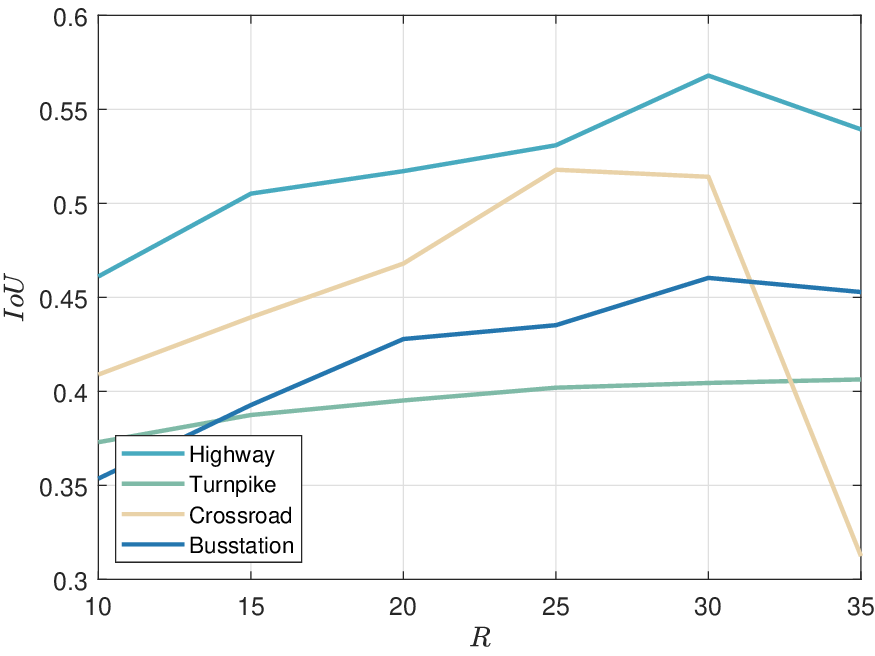}
  \end{tabular}
  \caption{Hyperparameter tuning for foreground extraction
    using $\operatorname{F1}$ and $\operatorname{IoU}$.
    Row 1: Tuning $\sigma$ when $R=25$. Row 2: Tuning $R$
    when $\sigma=10^{-3}$.}
  \label{fig:FgMTuning}
\end{figure}

\begin{figure*}[t]
  \centering
  \setlength{\tabcolsep}{0.1mm}
  \begin{tabular}{cccccccccc}
    \includegraphics[width=\smallimage]{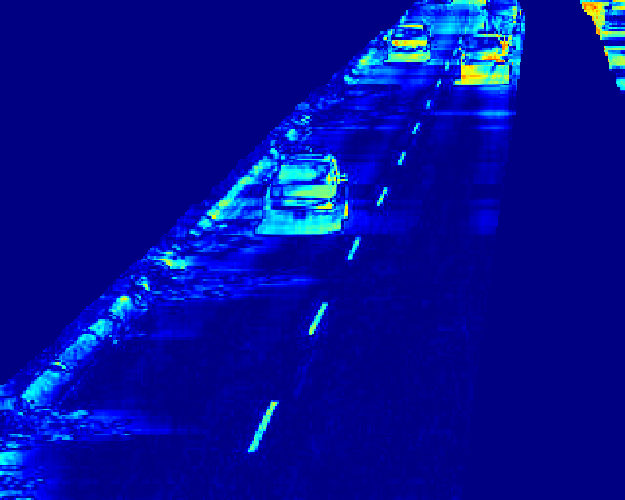} &
    \includegraphics[width=\smallimage]{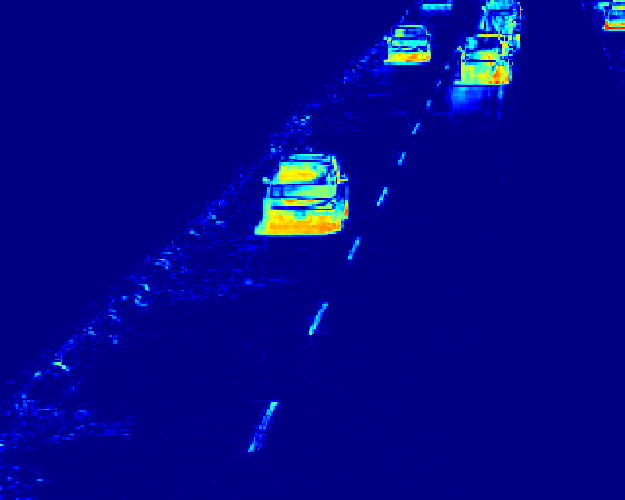} &
    \includegraphics[width=\smallimage]{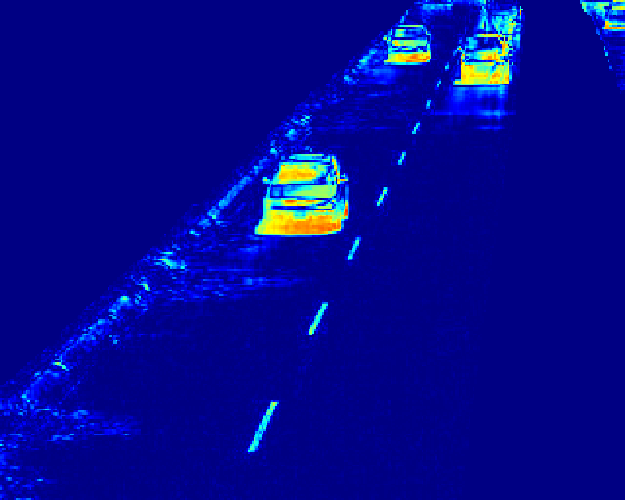} &
    \includegraphics[width=\smallimage]{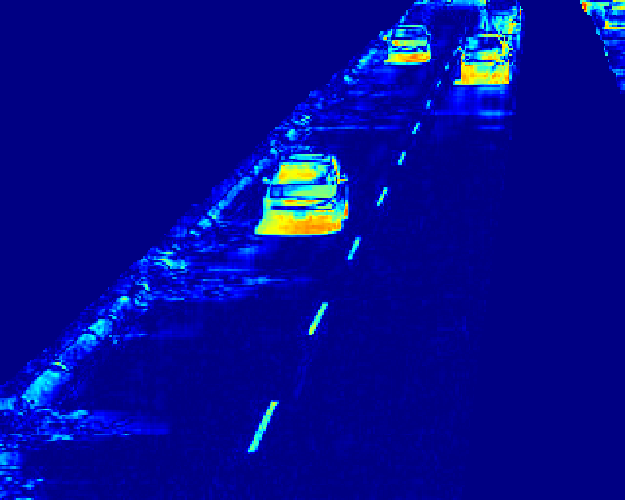} &
    \includegraphics[width=\smallimage]{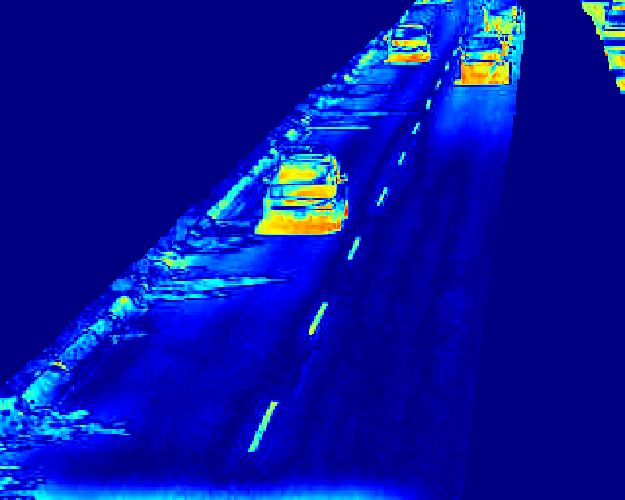} &
    \includegraphics[width=\smallimage]{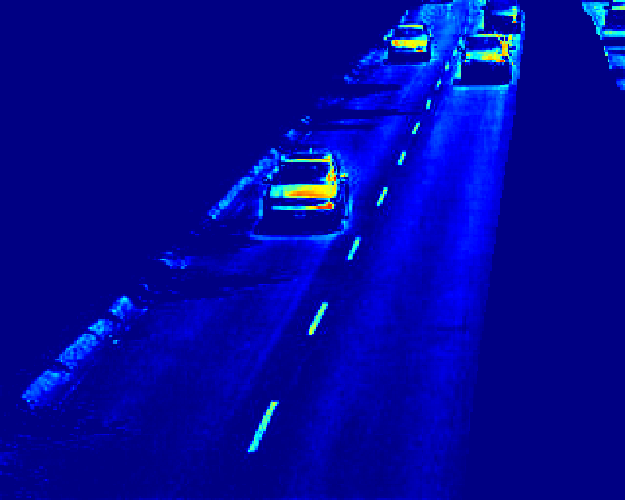} &
    \includegraphics[width=\smallimage]{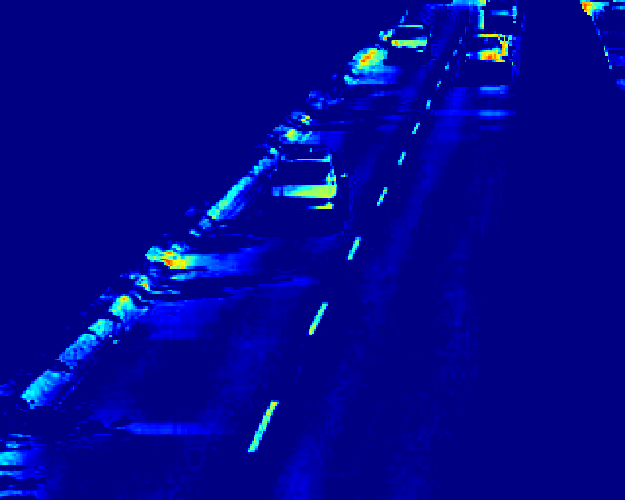} &
    \includegraphics[width=\smallimage]{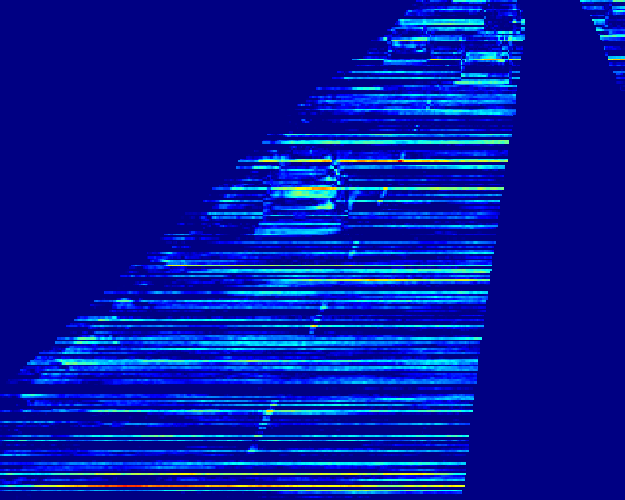} &
    \includegraphics[width=\smallimage]{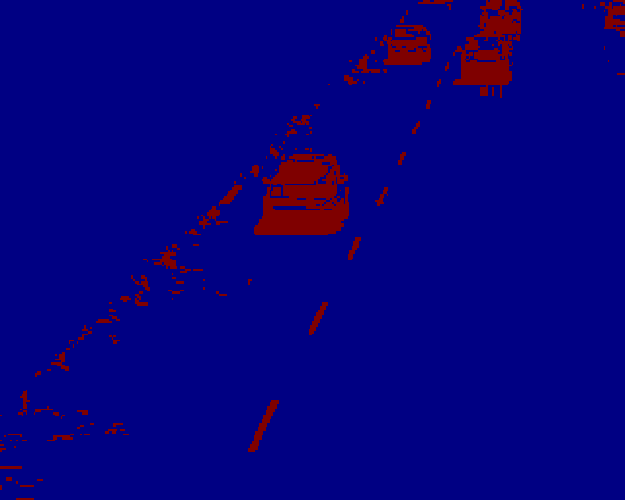} &
    \includegraphics[width=\smallimage]{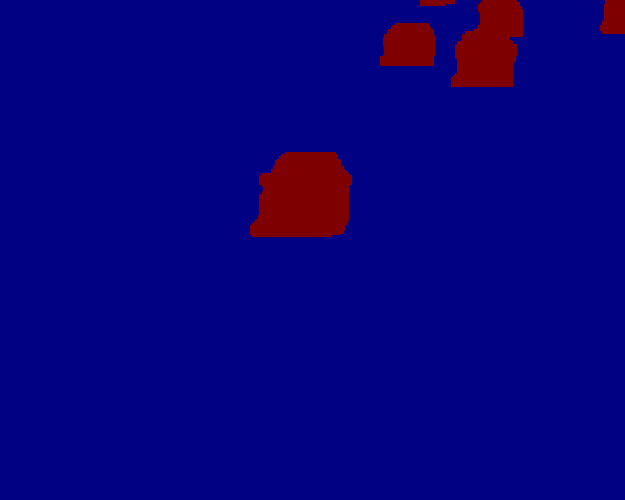}
    \\
    \includegraphics[width=\smallimage]{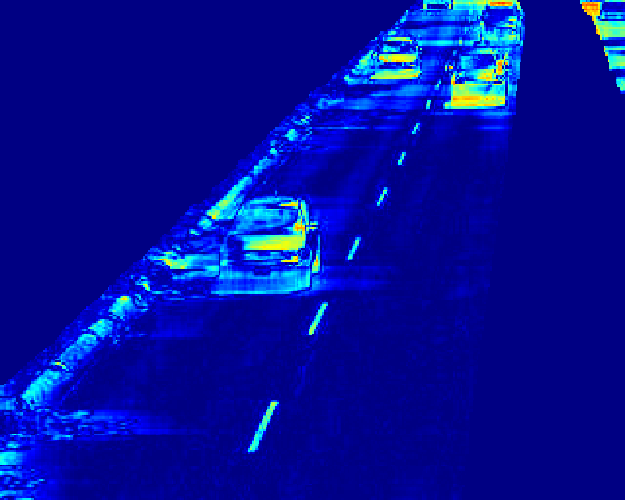} &
    \includegraphics[width=\smallimage]{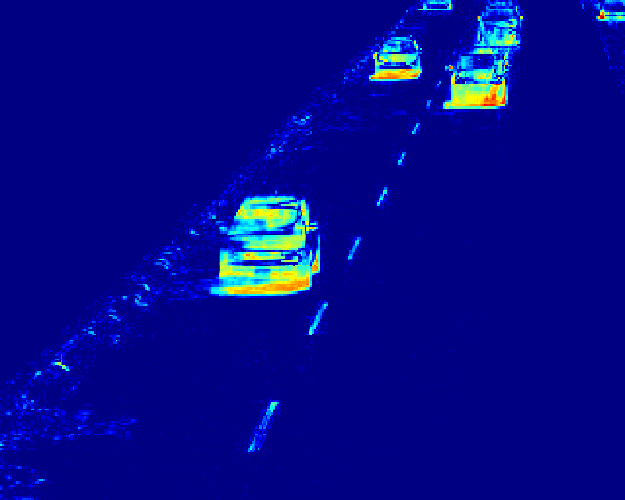} &
    \includegraphics[width=\smallimage]{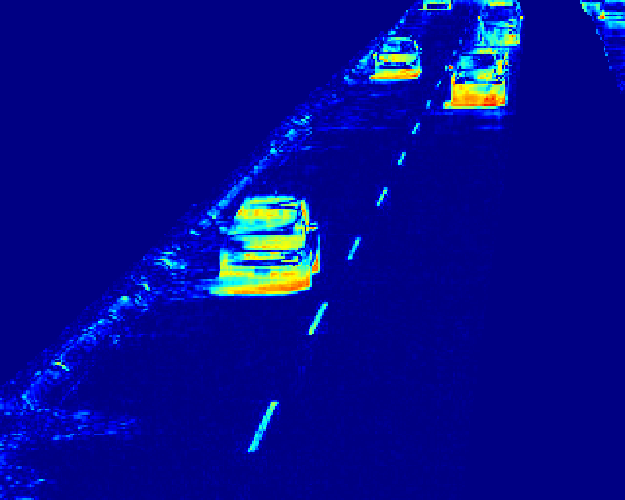} &
    \includegraphics[width=\smallimage]{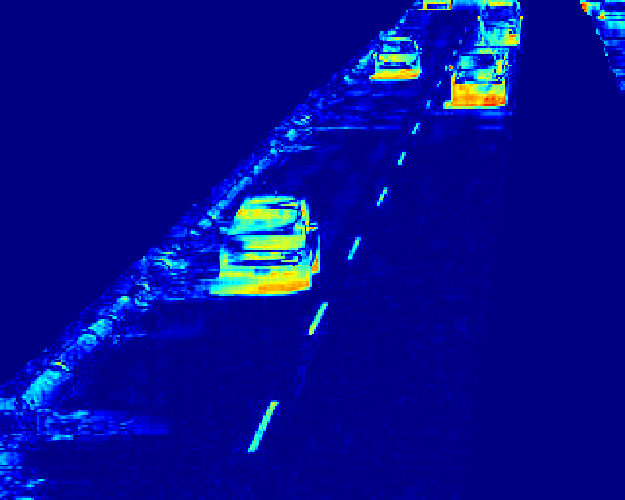} &
    \includegraphics[width=\smallimage]{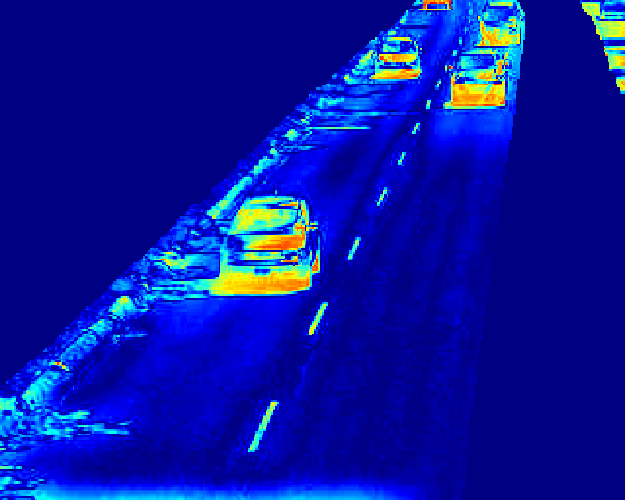} &
    \includegraphics[width=\smallimage]{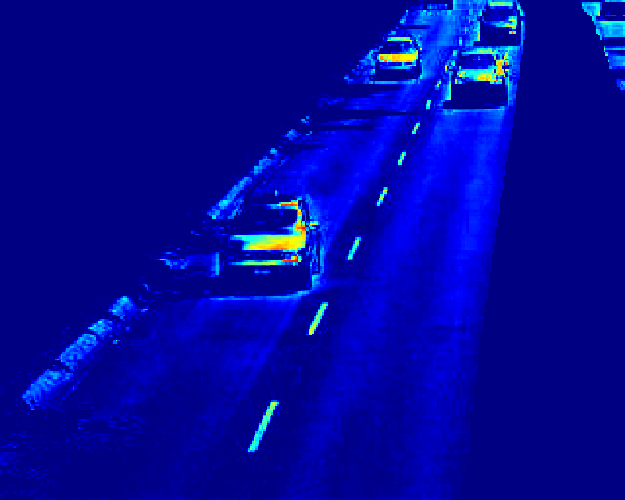} &
    \includegraphics[width=\smallimage]{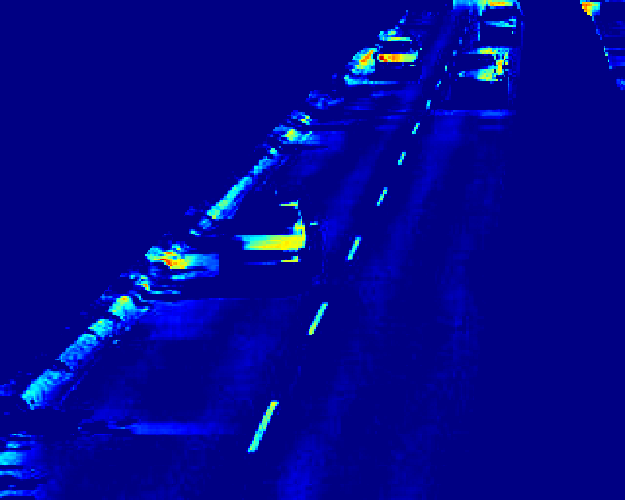} &
    \includegraphics[width=\smallimage]{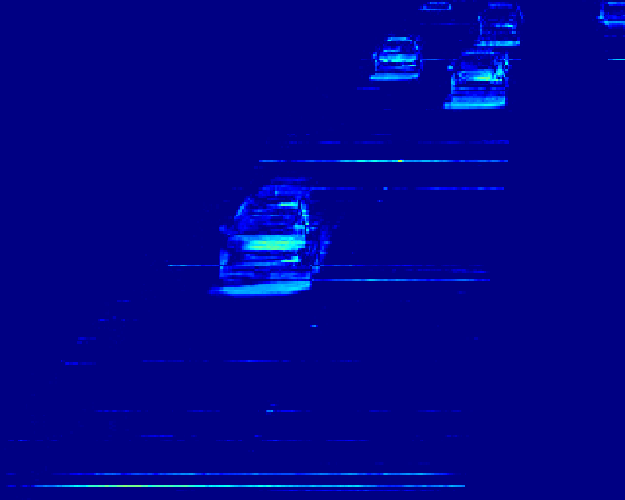} &
    \includegraphics[width=\smallimage]{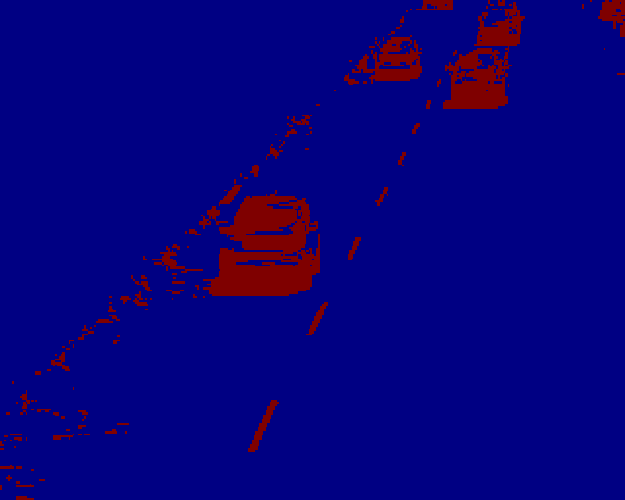} &
    \includegraphics[width=\smallimage]{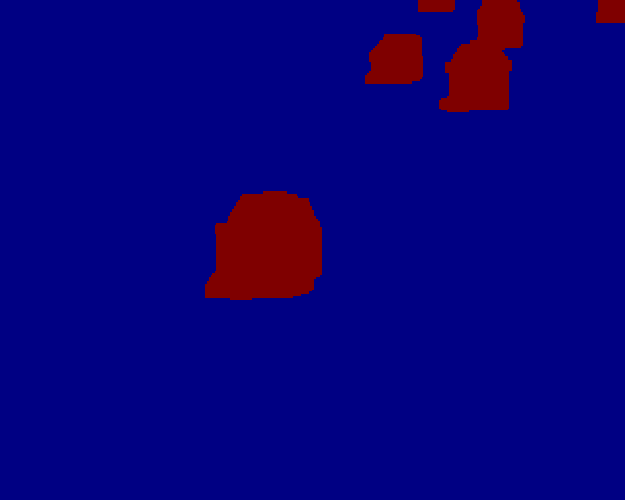}
    \\
    \includegraphics[width=\smallimage]{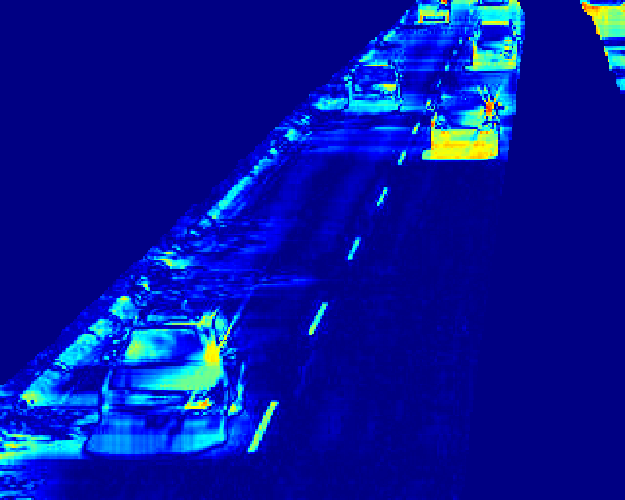} &
    \includegraphics[width=\smallimage]{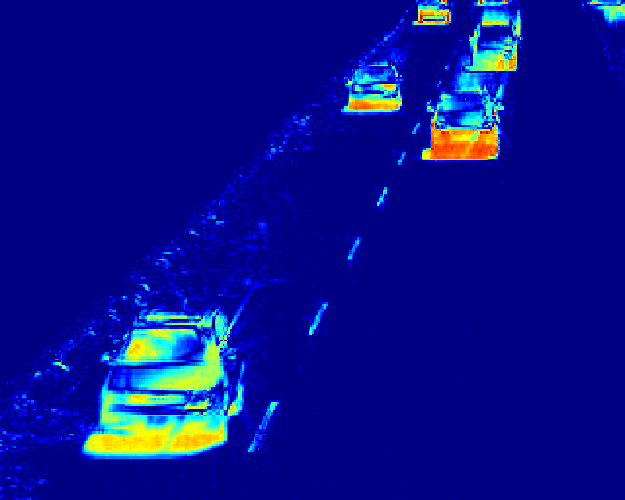} &
    \includegraphics[width=\smallimage]{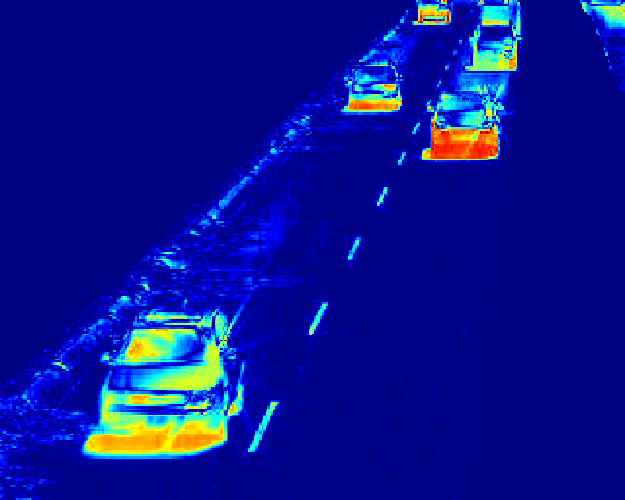} &
    \includegraphics[width=\smallimage]{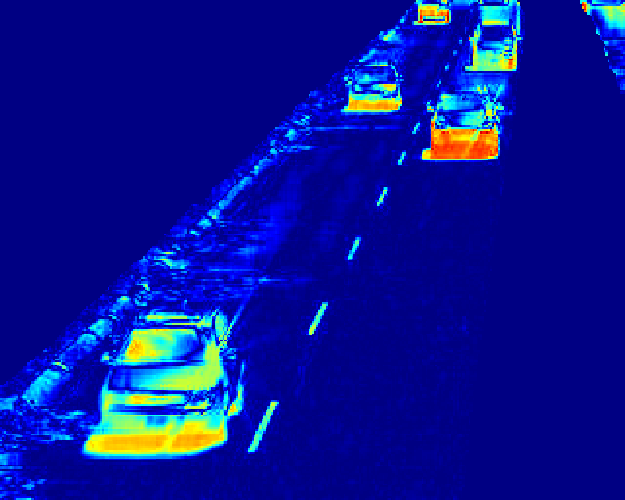} &
    \includegraphics[width=\smallimage]{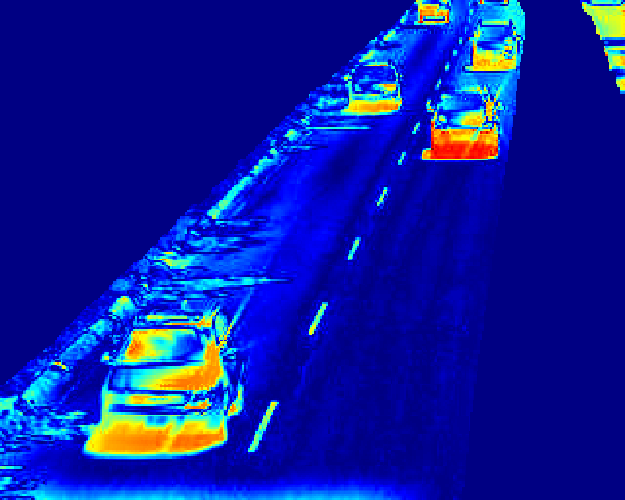} &
    \includegraphics[width=\smallimage]{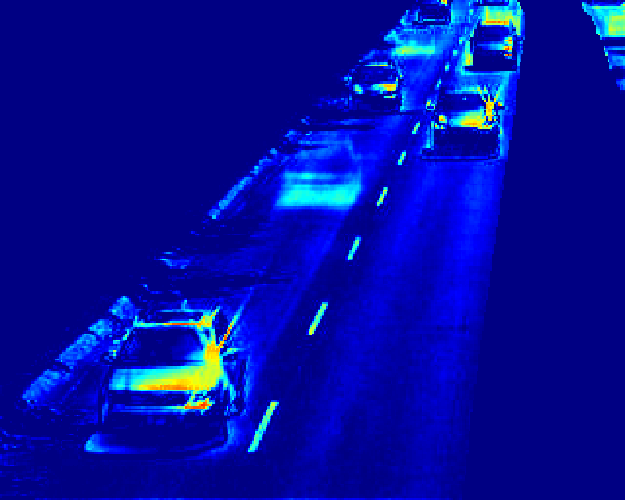} &
    \includegraphics[width=\smallimage]{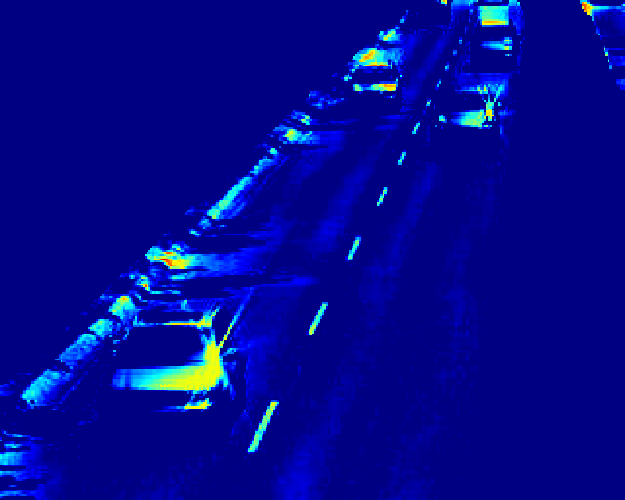} &
    \includegraphics[width=\smallimage]{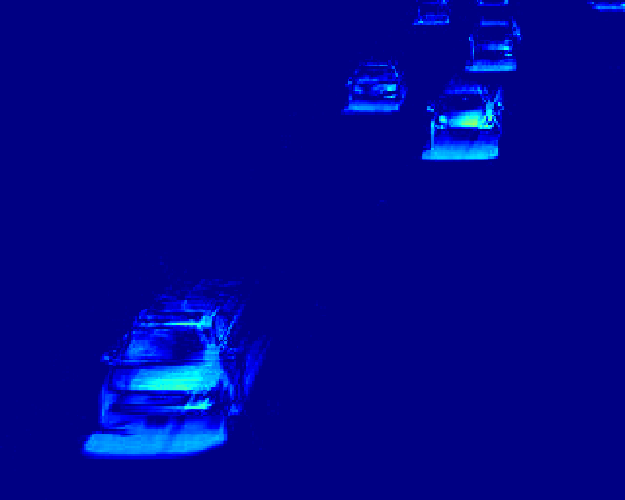} &
    \includegraphics[width=\smallimage]{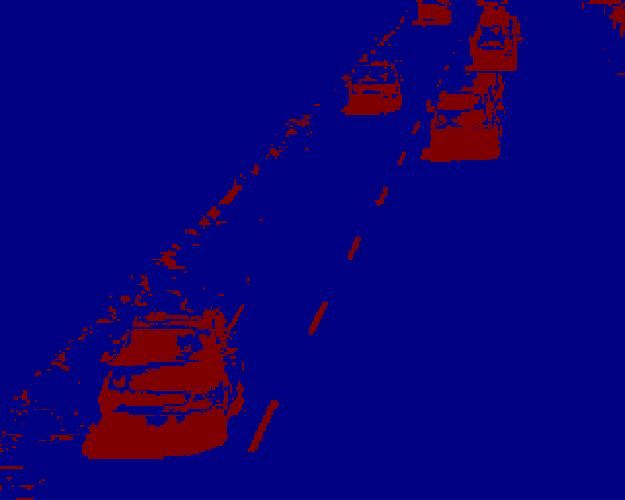} &
    \includegraphics[width=\smallimage]{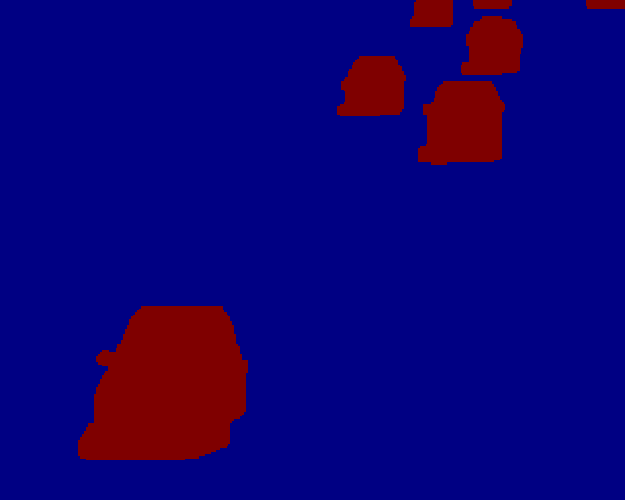}
    \\
    \multicolumn{1}{c}{\footnotesize PCP} &
    \multicolumn{1}{c}{\footnotesize KBR} &
    \multicolumn{1}{c}{\footnotesize TRPCA} &
    \multicolumn{1}{c}{\footnotesize ETRPCA} &
    \multicolumn{1}{c}{\footnotesize t-CTV} &
    \multicolumn{1}{c}{\footnotesize MTTD} &
    \multicolumn{1}{c}{\footnotesize LRTFR} &
    \multicolumn{1}{c}{\footnotesize ONTRPCA} &
    \multicolumn{1}{c}{\footnotesize BCP-RPCC} &
    \multicolumn{1}{c}{\footnotesize Ground truth}
  \end{tabular}
  \caption{Foreground extraction for the Highway
    dataset. Row 1: Frame 10. Row 2: Frame 25. Row 3: Frame 50.}
  \label{fig:FMHighway}
\end{figure*}

\begin{figure*}[t]
  \centering
  \setlength{\tabcolsep}{0.1mm}
  \begin{tabular}{cccccccccc}
    \includegraphics[width=\smallimage]{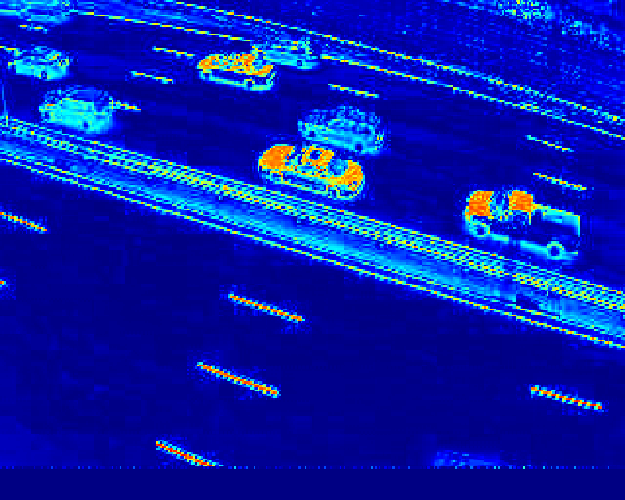} &
    \includegraphics[width=\smallimage]{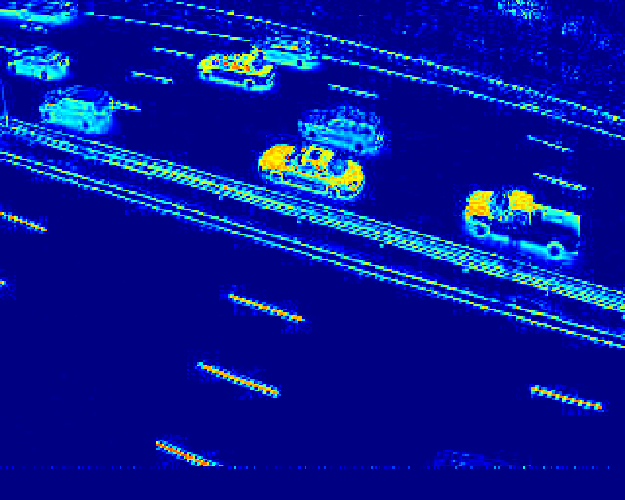} &
    \includegraphics[width=\smallimage]{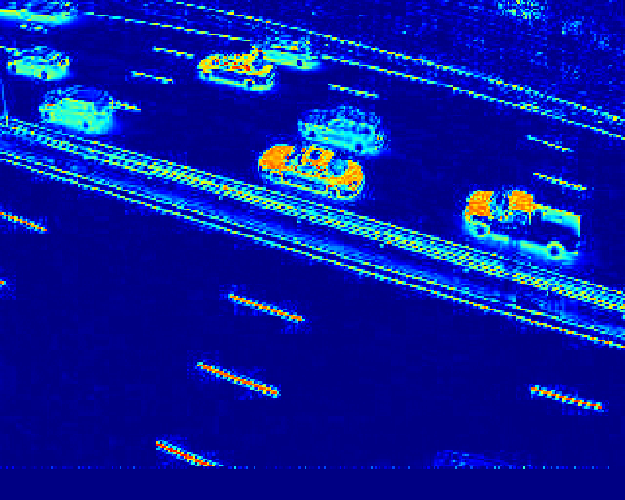} &
    \includegraphics[width=\smallimage]{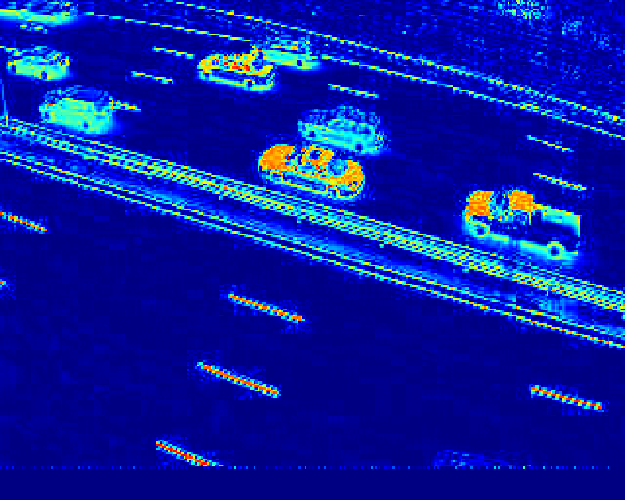} &
    \includegraphics[width=\smallimage]{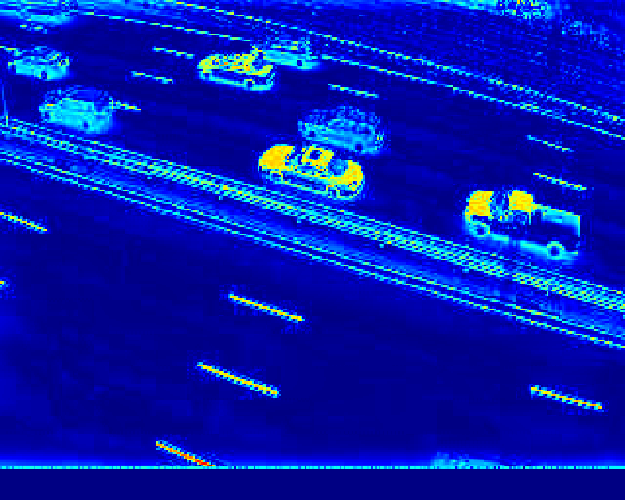} &
    \includegraphics[width=\smallimage]{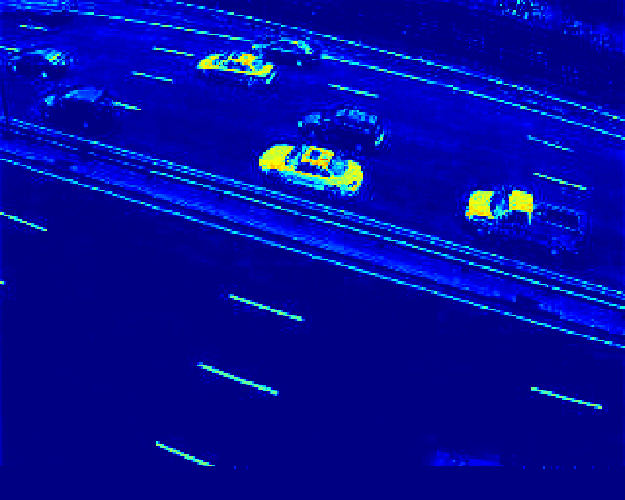} &
    \includegraphics[width=\smallimage]{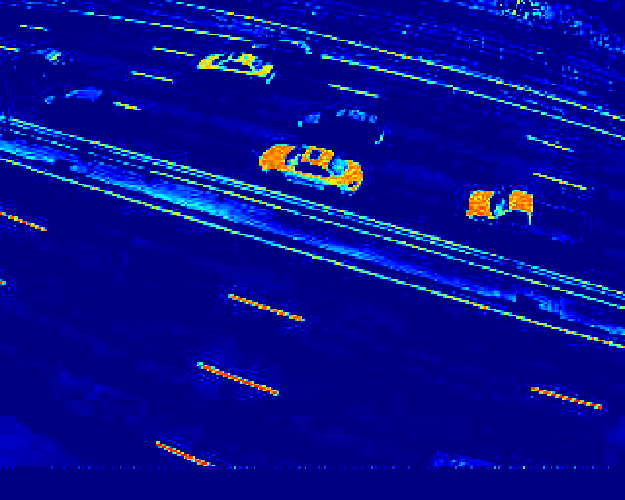} &
    \includegraphics[width=\smallimage]{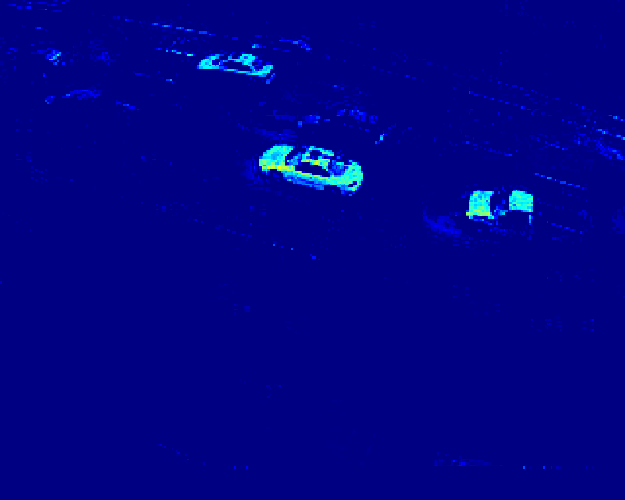} &
    \includegraphics[width=\smallimage]{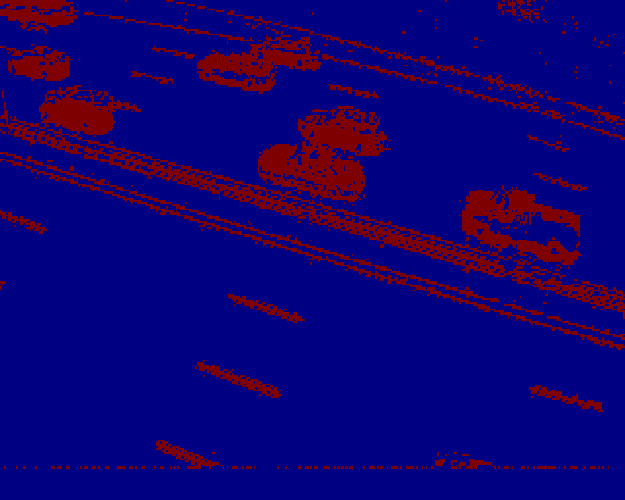} &
    \includegraphics[width=\smallimage]{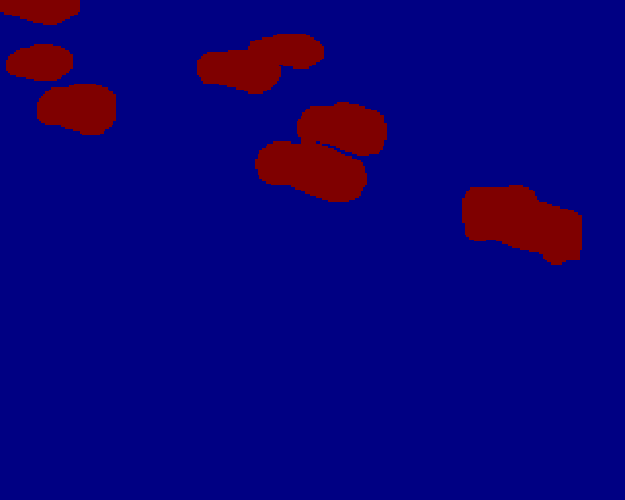}
    \\
    \includegraphics[width=\smallimage]{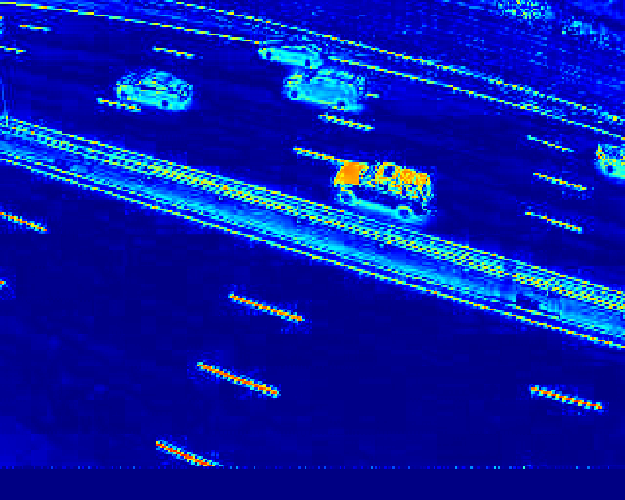} &
    \includegraphics[width=\smallimage]{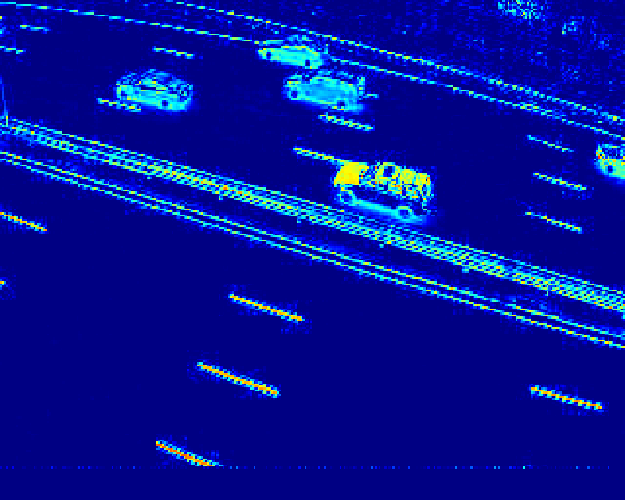} &
    \includegraphics[width=\smallimage]{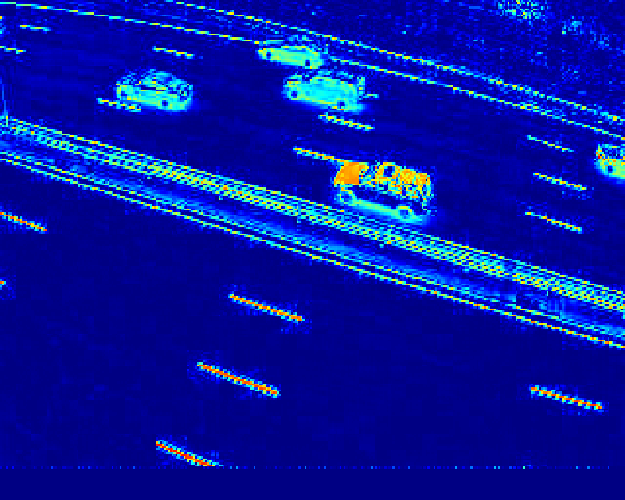} &
    \includegraphics[width=\smallimage]{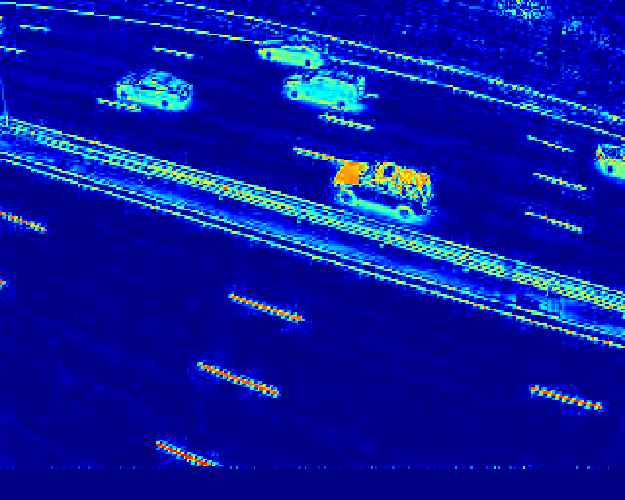} &
    \includegraphics[width=\smallimage]{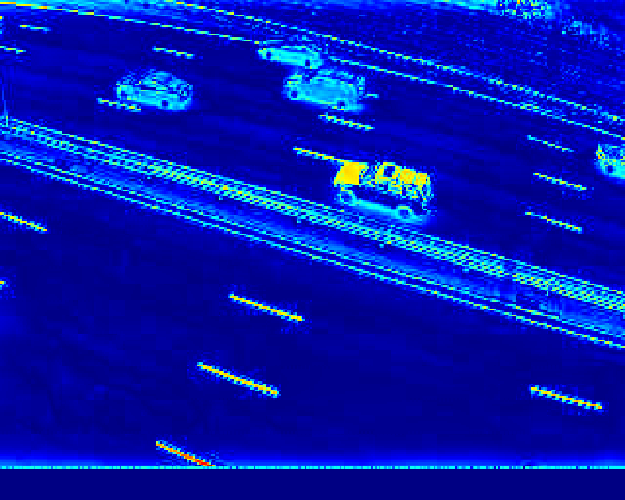} &
    \includegraphics[width=\smallimage]{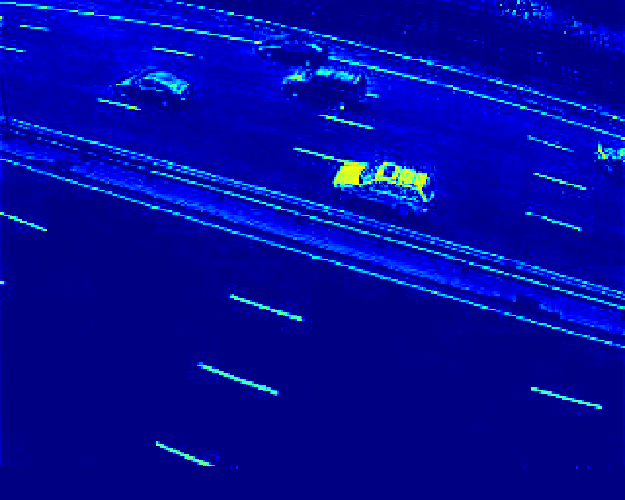} &
    \includegraphics[width=\smallimage]{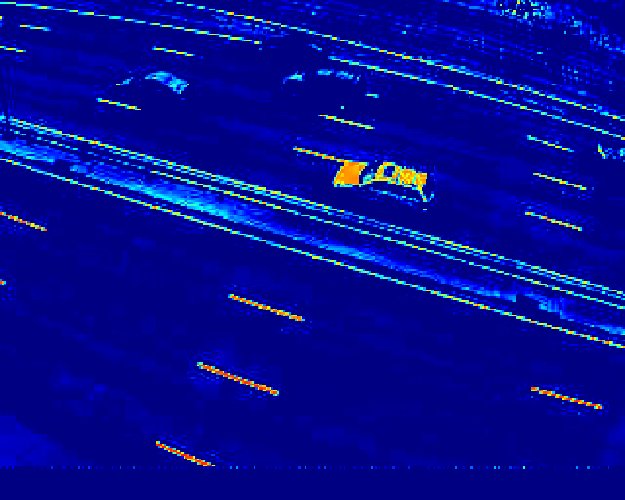} &
    \includegraphics[width=\smallimage]{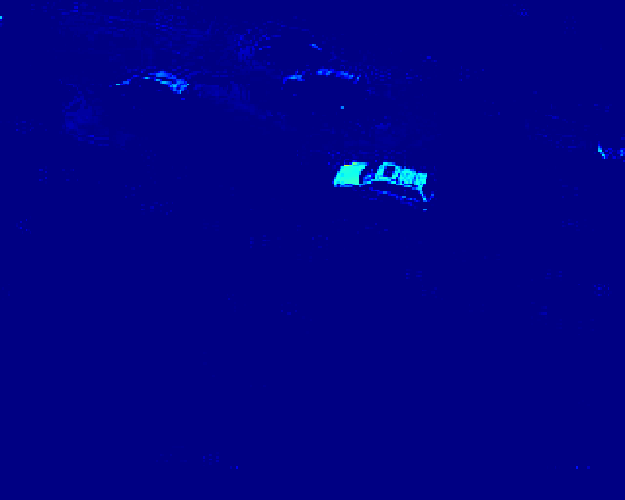} &
    \includegraphics[width=\smallimage]{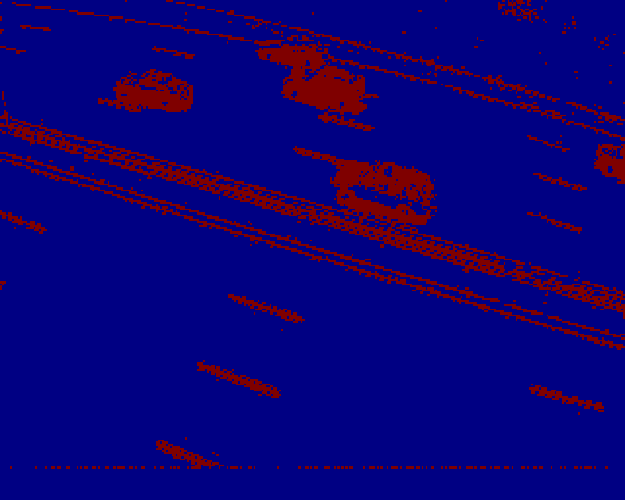} &
    \includegraphics[width=\smallimage]{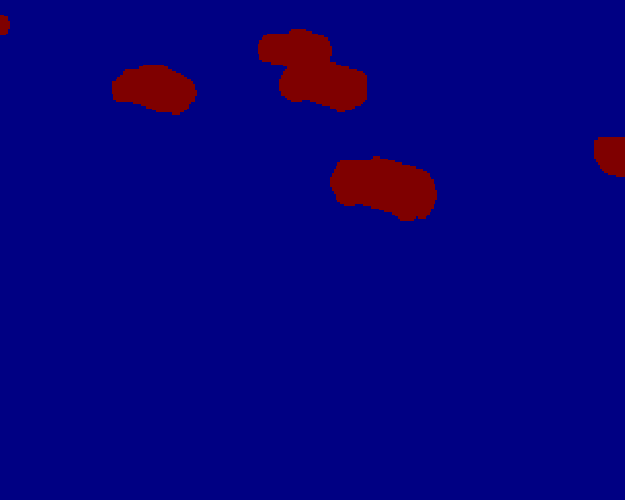}
    \\
    \includegraphics[width=\smallimage]{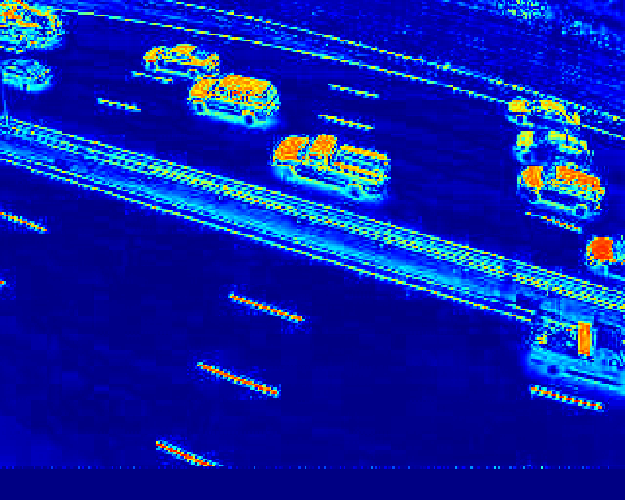} &
    \includegraphics[width=\smallimage]{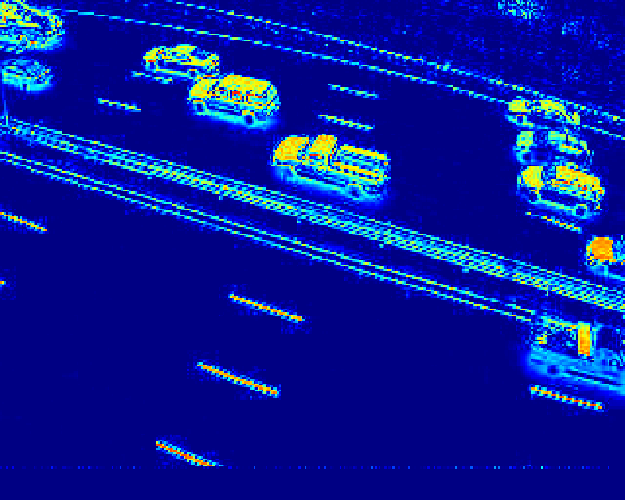} &
    \includegraphics[width=\smallimage]{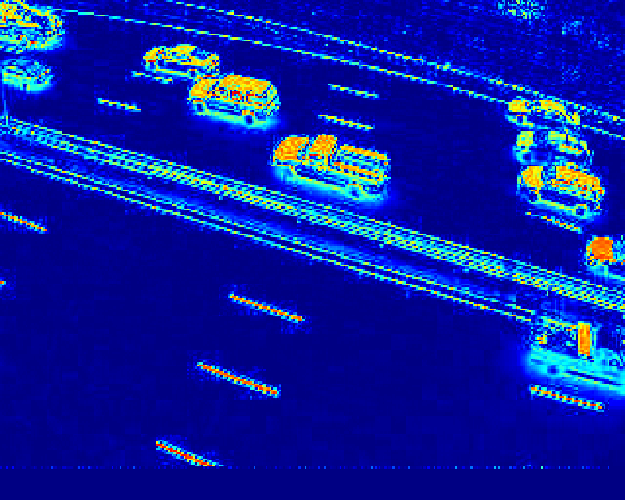} &
    \includegraphics[width=\smallimage]{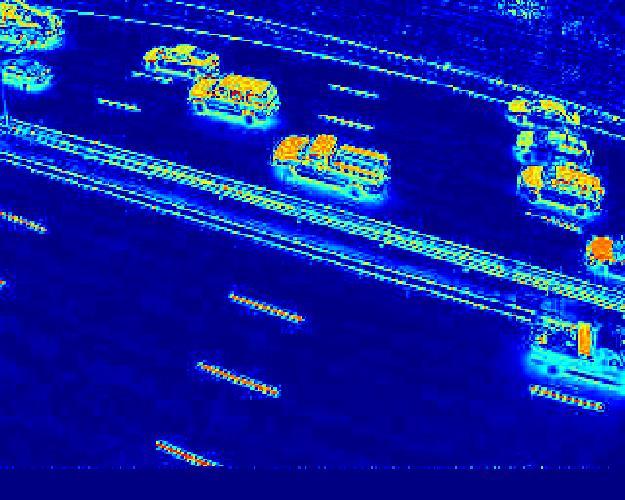} &
    \includegraphics[width=\smallimage]{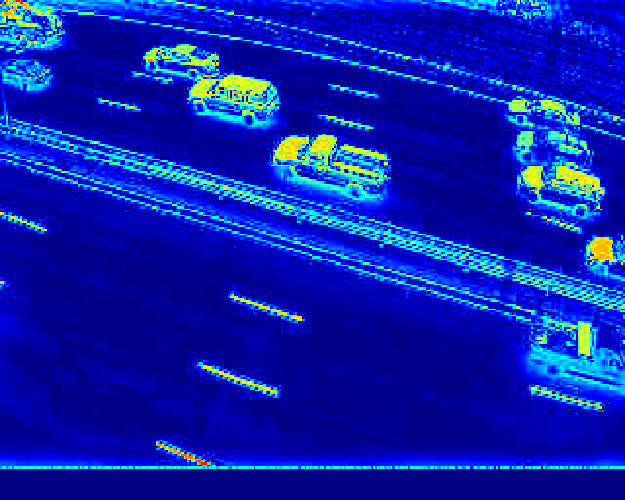} &
    \includegraphics[width=\smallimage]{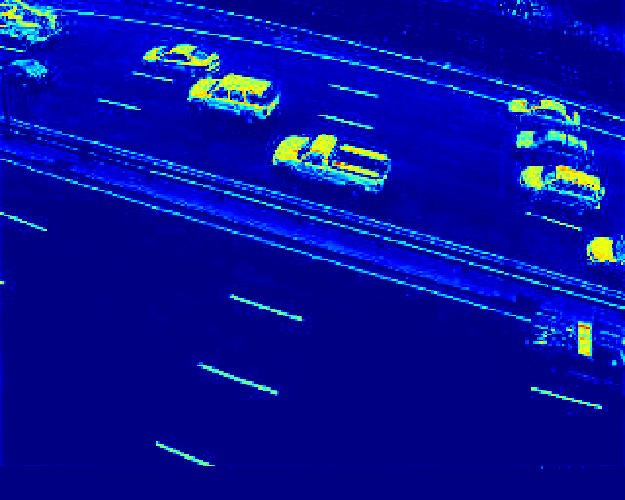} &
    \includegraphics[width=\smallimage]{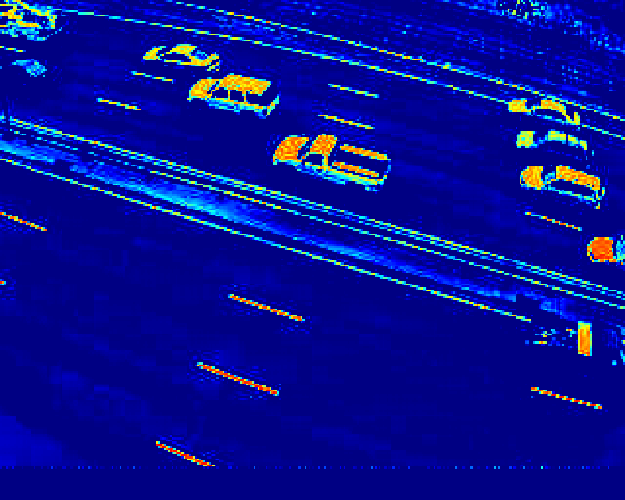} &
    \includegraphics[width=\smallimage]{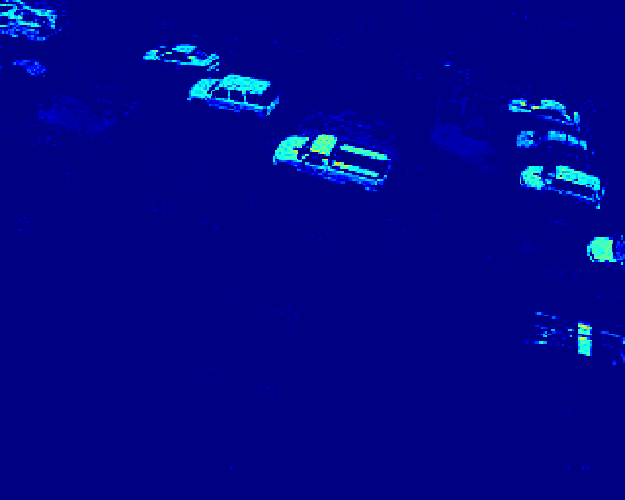} &
    \includegraphics[width=\smallimage]{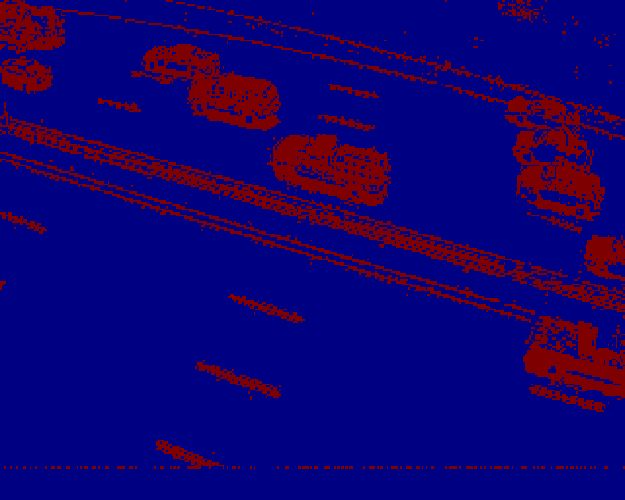} &
    \includegraphics[width=\smallimage]{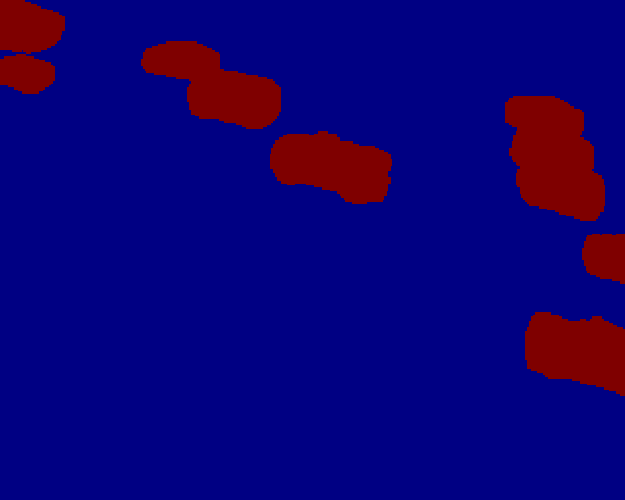}
    \\
    \multicolumn{1}{c}{\footnotesize PCP} &
    \multicolumn{1}{c}{\footnotesize KBR} &
    \multicolumn{1}{c}{\footnotesize TRPCA} &
    \multicolumn{1}{c}{\footnotesize ETRPCA} &
    \multicolumn{1}{c}{\footnotesize t-CTV} &
    \multicolumn{1}{c}{\footnotesize MTTD} &
    \multicolumn{1}{c}{\footnotesize LRTFR} &
    \multicolumn{1}{c}{\footnotesize ONTRPCA} &
    \multicolumn{1}{c}{\footnotesize BCP-RPCC} &
    \multicolumn{1}{c}{\footnotesize Ground truth}
  \end{tabular}
  \caption{Foreground extraction for the Turnpike
    dataset. Row 1: Frame 10. Row 2: Frame 25. Row 3: Frame 50.}
  \label{fig:FMTurnpike}
\end{figure*}

\begin{figure*}[t]
  \centering
  \setlength{\tabcolsep}{0.1mm}
  \begin{tabular}{cccccccccc}
    \includegraphics[width=\smallimage]{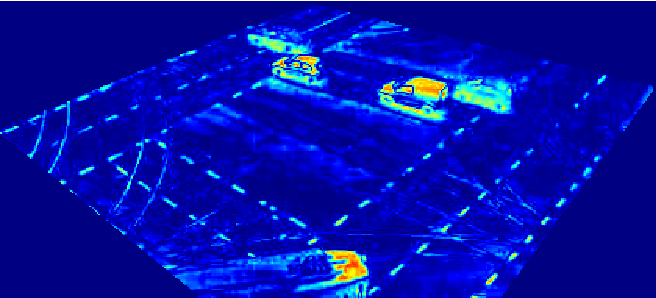} &
    \includegraphics[width=\smallimage]{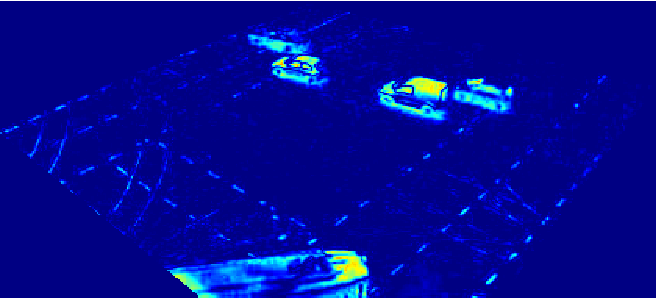} &
    \includegraphics[width=\smallimage]{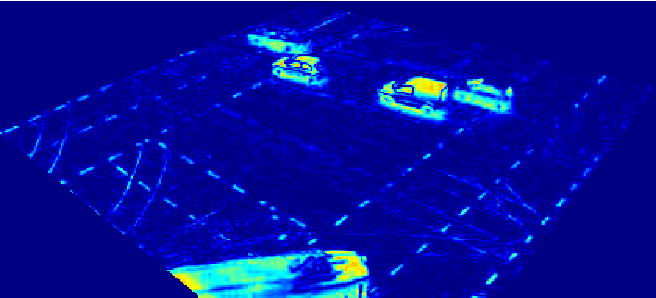} &
    \includegraphics[width=\smallimage]{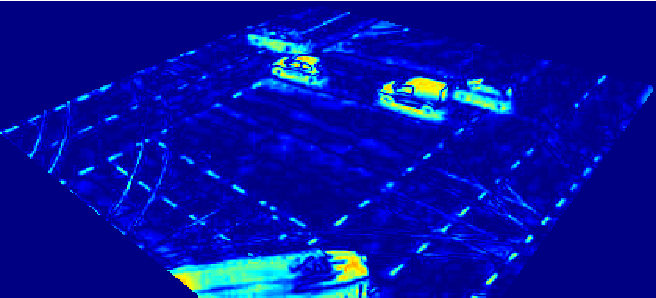} &
    \includegraphics[width=\smallimage]{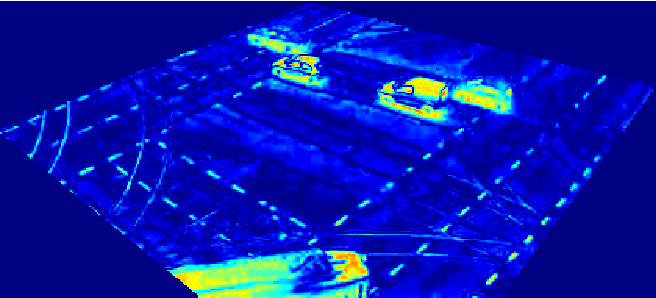} &
    \includegraphics[width=\smallimage]{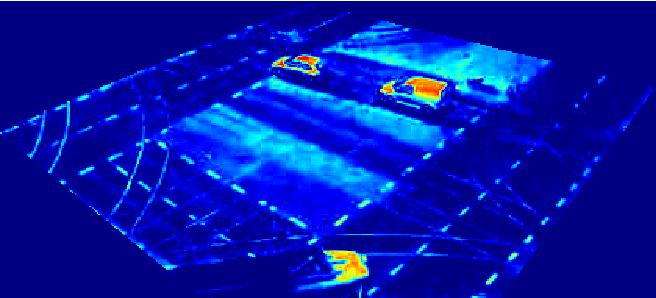} &
    \includegraphics[width=\smallimage]{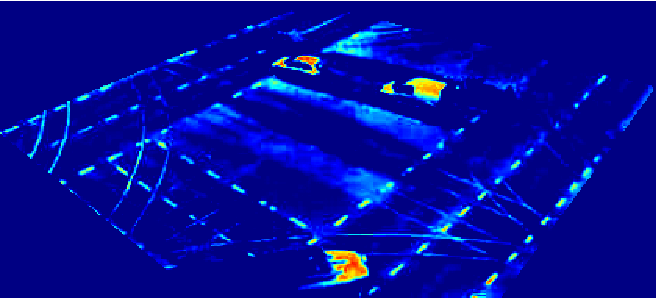} &
    \includegraphics[width=\smallimage]{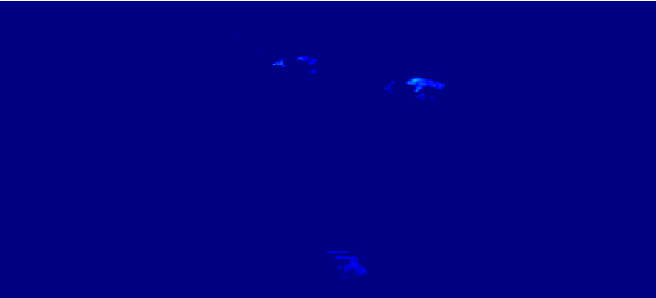} &
    \includegraphics[width=\smallimage]{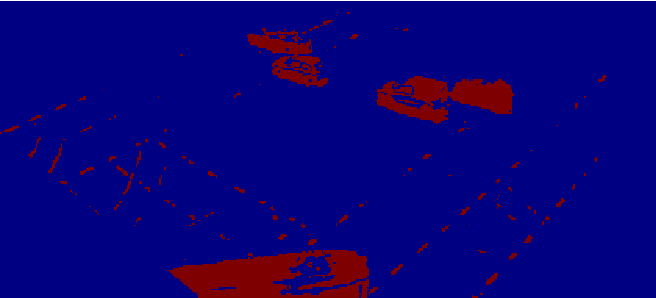} &
    \includegraphics[width=\smallimage]{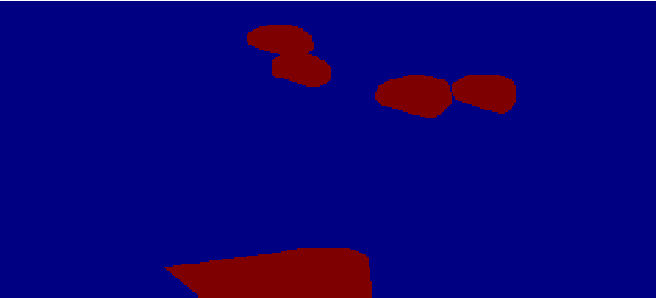}
    \\
    \includegraphics[width=\smallimage]{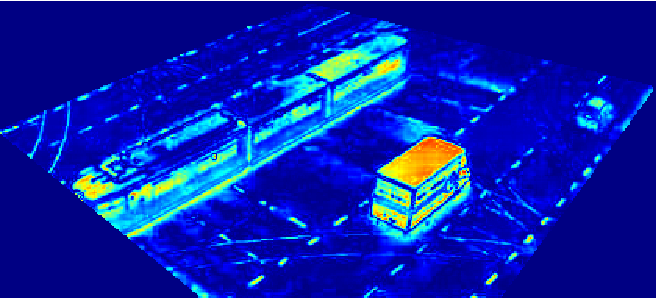} &
    \includegraphics[width=\smallimage]{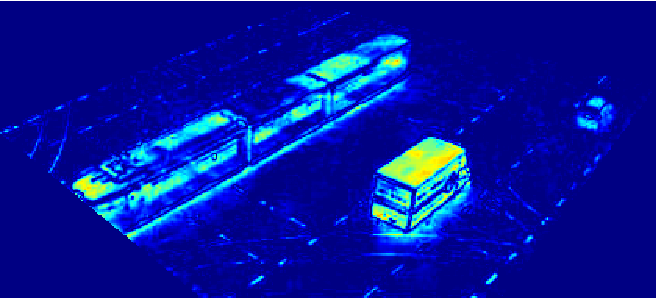} &
    \includegraphics[width=\smallimage]{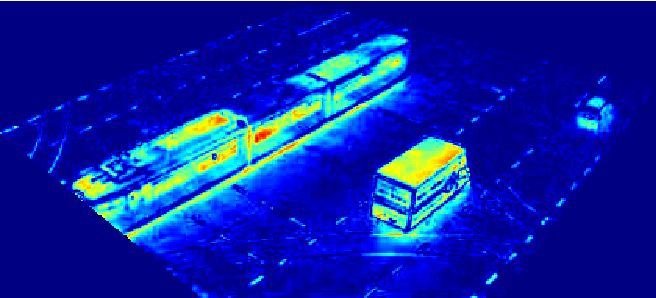} &
    \includegraphics[width=\smallimage]{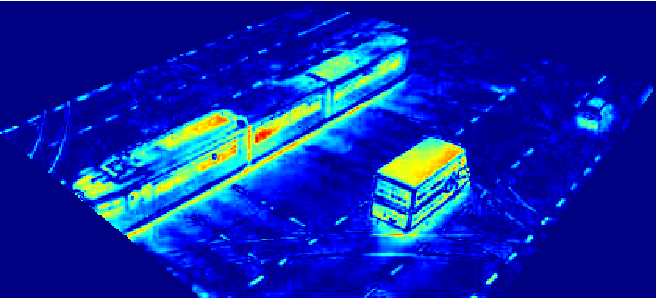} &
    \includegraphics[width=\smallimage]{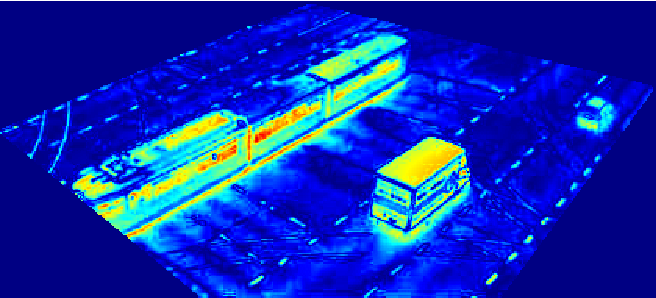} &
    \includegraphics[width=\smallimage]{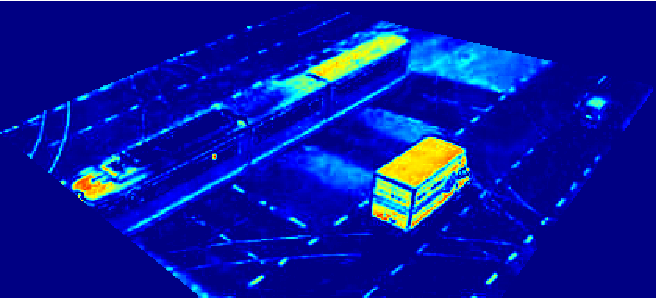} &
    \includegraphics[width=\smallimage]{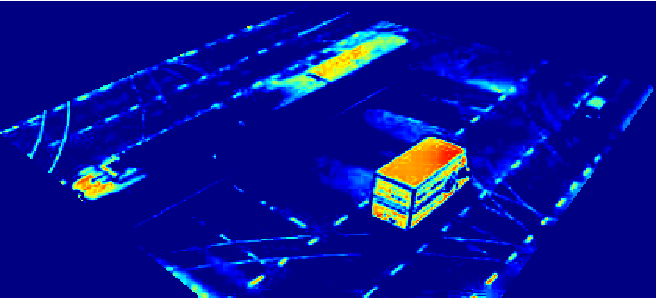} &
    \includegraphics[width=\smallimage]{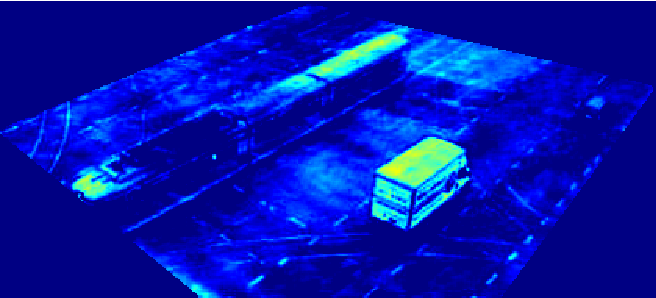} &
    \includegraphics[width=\smallimage]{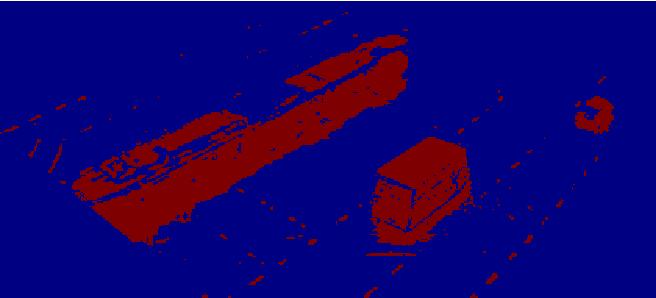} &
    \includegraphics[width=\smallimage]{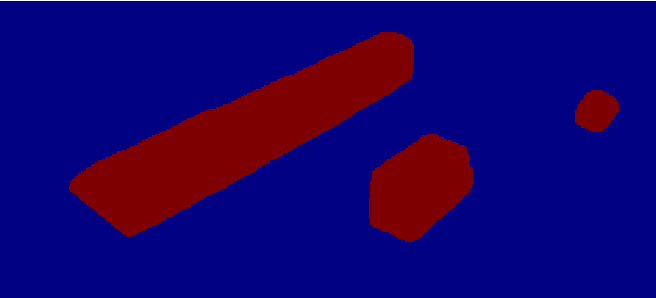}
    \\
    \includegraphics[width=\smallimage]{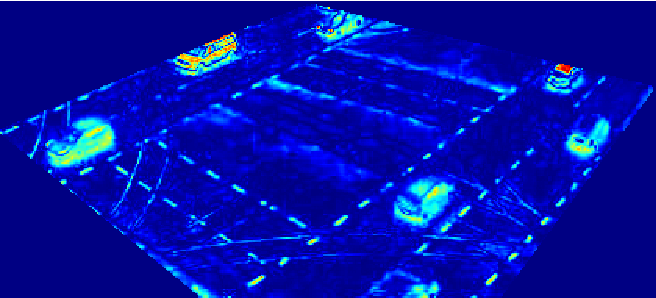} &
    \includegraphics[width=\smallimage]{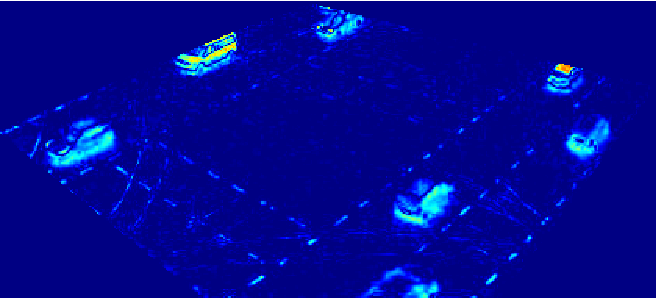} &
    \includegraphics[width=\smallimage]{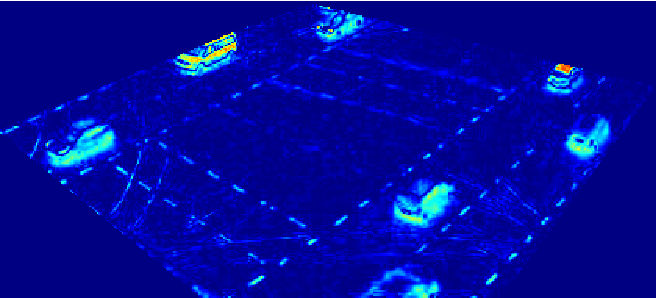} &
    \includegraphics[width=\smallimage]{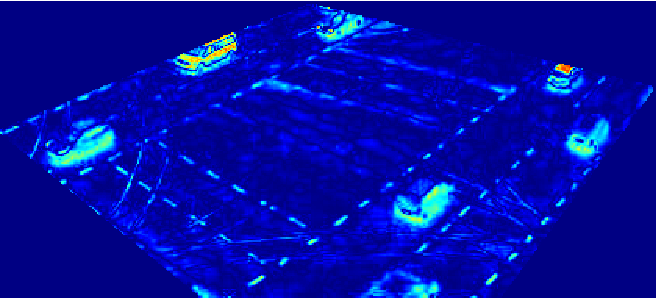} &
    \includegraphics[width=\smallimage]{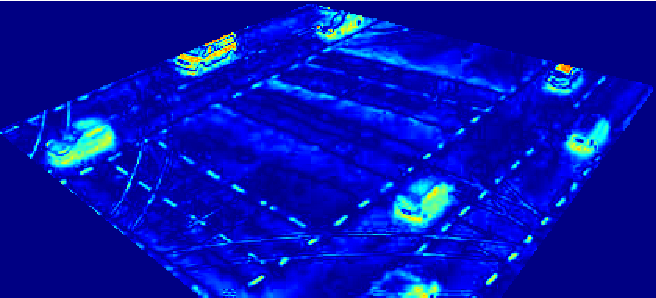} &
    \includegraphics[width=\smallimage]{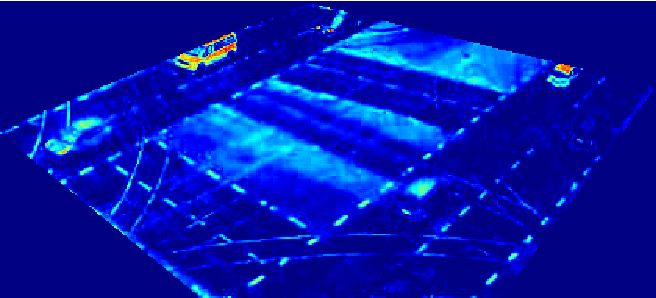} &
    \includegraphics[width=\smallimage]{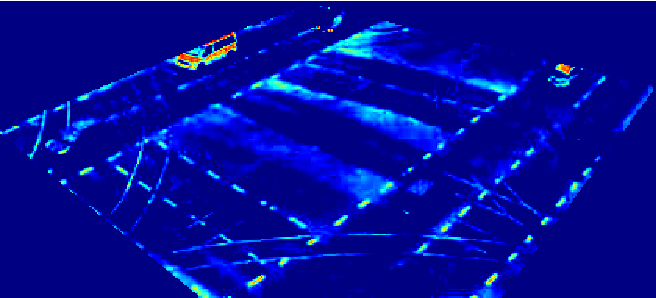} &
    \includegraphics[width=\smallimage]{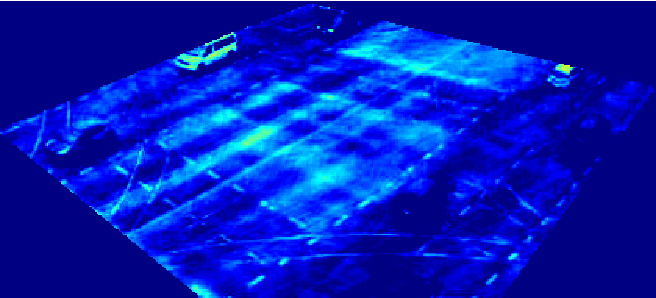} &
    \includegraphics[width=\smallimage]{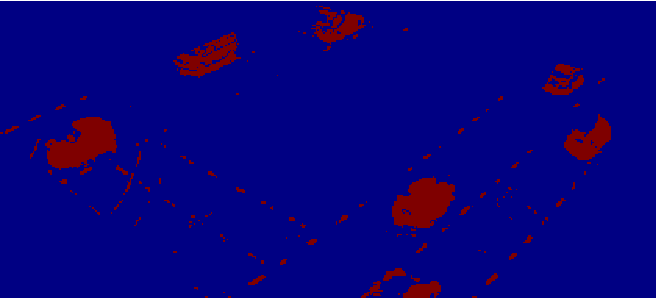} &
    \includegraphics[width=\smallimage]{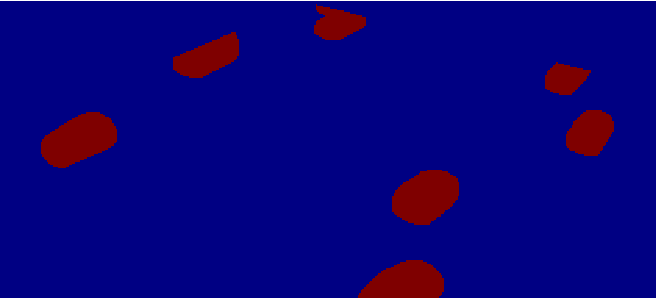}
    \\
    \multicolumn{1}{c}{\footnotesize PCP} &
    \multicolumn{1}{c}{\footnotesize KBR} &
    \multicolumn{1}{c}{\footnotesize TRPCA} &
    \multicolumn{1}{c}{\footnotesize ETRPCA} &
    \multicolumn{1}{c}{\footnotesize t-CTV} &
    \multicolumn{1}{c}{\footnotesize MTTD} &
    \multicolumn{1}{c}{\footnotesize LRTFR} &
    \multicolumn{1}{c}{\footnotesize ONTRPCA} &
    \multicolumn{1}{c}{\footnotesize BCP-RPCC} &
    \multicolumn{1}{c}{\footnotesize Ground truth}
  \end{tabular}
  \caption{Foreground extraction for the Crossroad
    dataset. Row 1: Frame 10. Row 2: Frame 45. Row 3: Frame 50.}
  \label{fig:FMCrossroad}
\end{figure*}

\begin{figure*}[t]
  \centering
  \setlength{\tabcolsep}{0.1mm}
  \begin{tabular}{ccccccccccc}
    \includegraphics[width=\smallimage]{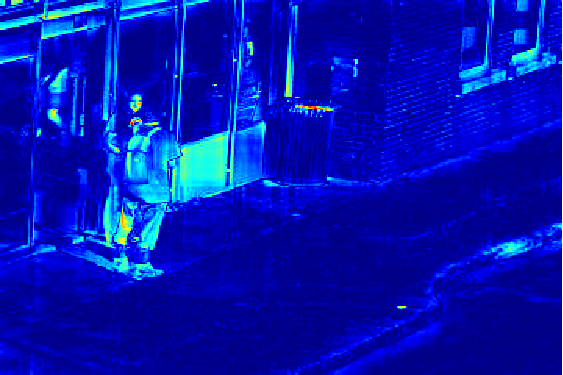} &
    \includegraphics[width=\smallimage]{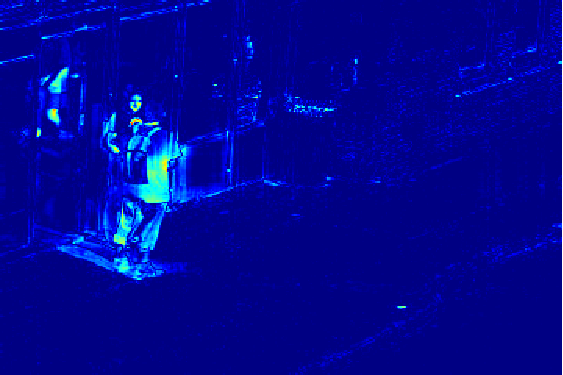} &
    \includegraphics[width=\smallimage]{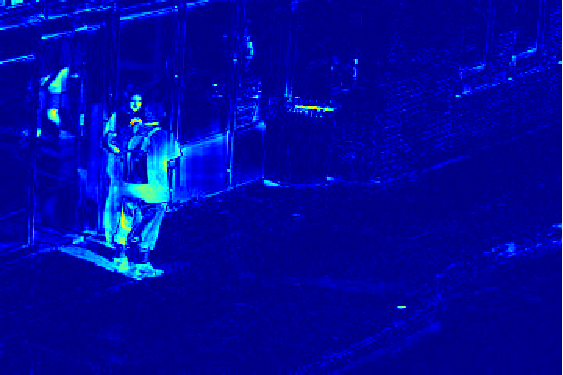} &
    \includegraphics[width=\smallimage]{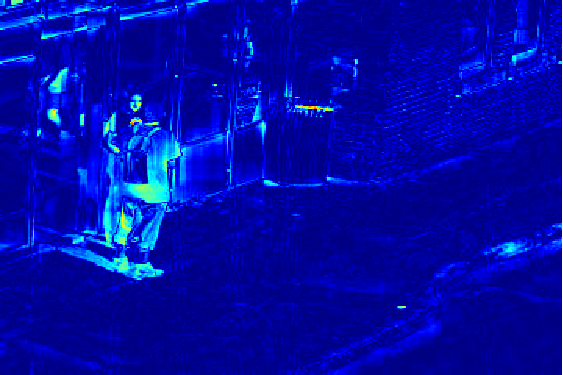} &
    \includegraphics[width=\smallimage]{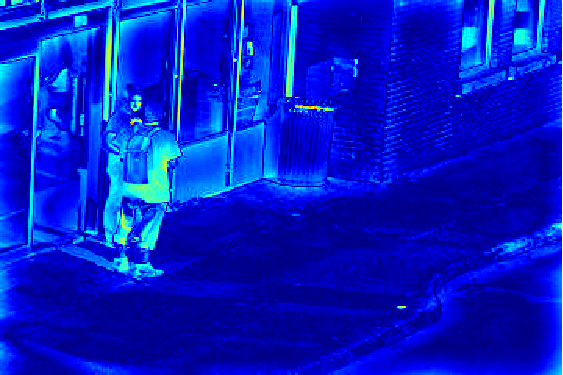} &
    \includegraphics[width=\smallimage]{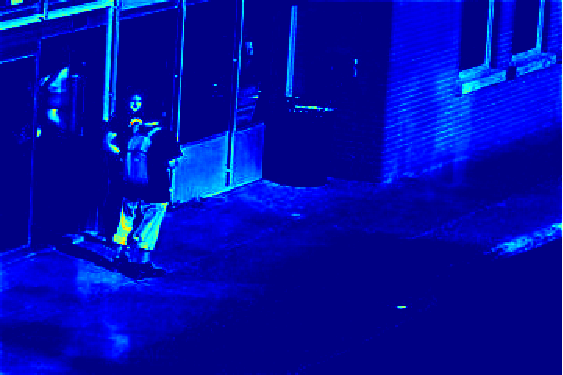} &
    \includegraphics[width=\smallimage]{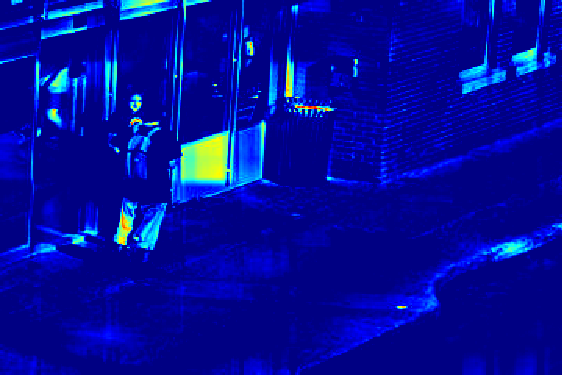} &
    \includegraphics[width=\smallimage]{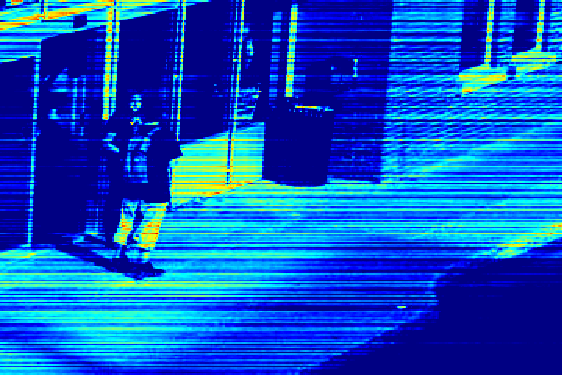} &
    \includegraphics[width=\smallimage]{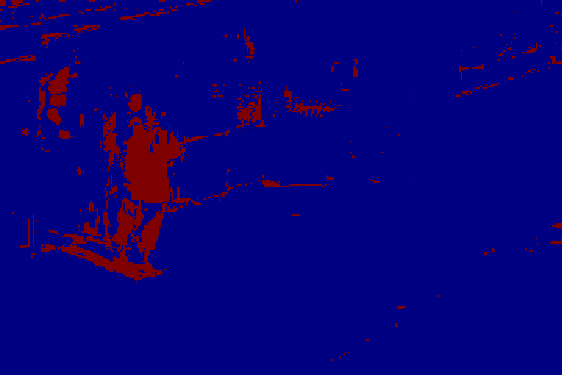} &
    \includegraphics[width=\smallimage]{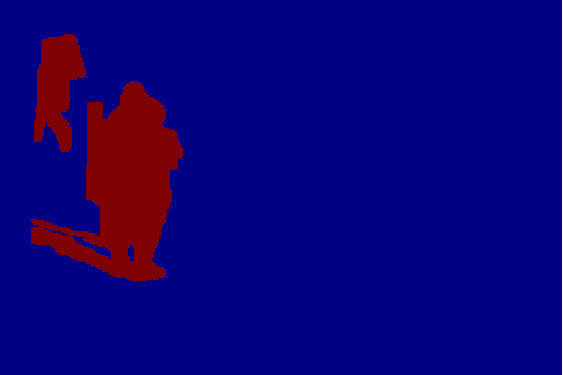}
    \\
    \includegraphics[width=\smallimage]{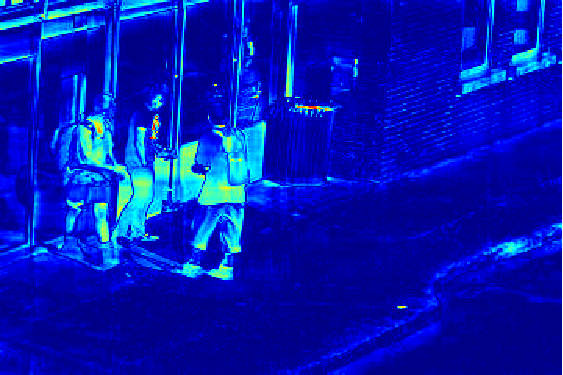} &
    \includegraphics[width=\smallimage]{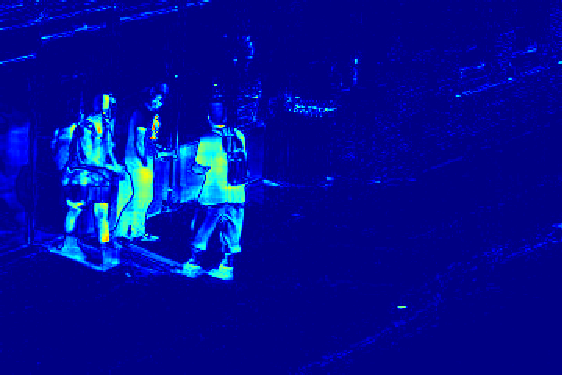} &
    \includegraphics[width=\smallimage]{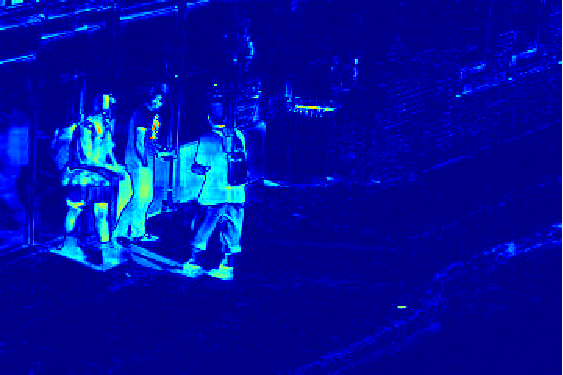} &
    \includegraphics[width=\smallimage]{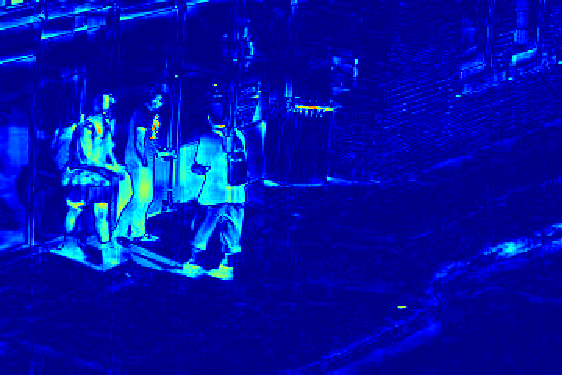} &
    \includegraphics[width=\smallimage]{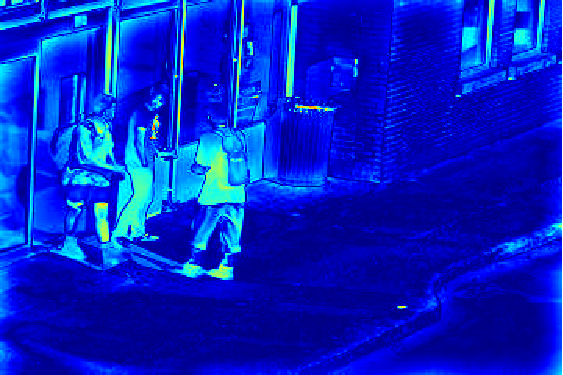} &
    \includegraphics[width=\smallimage]{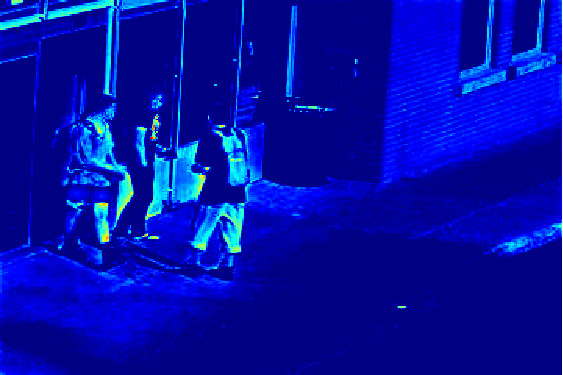} &
    \includegraphics[width=\smallimage]{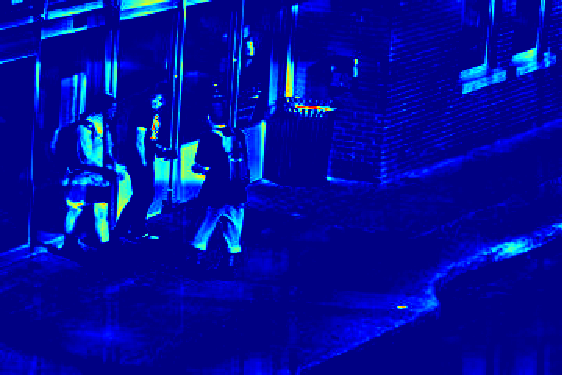} &
    \includegraphics[width=\smallimage]{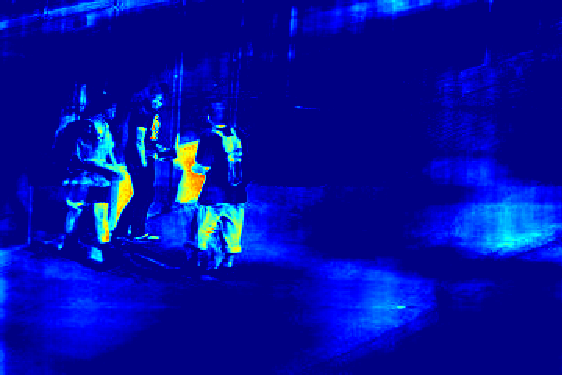} &
    \includegraphics[width=\smallimage]{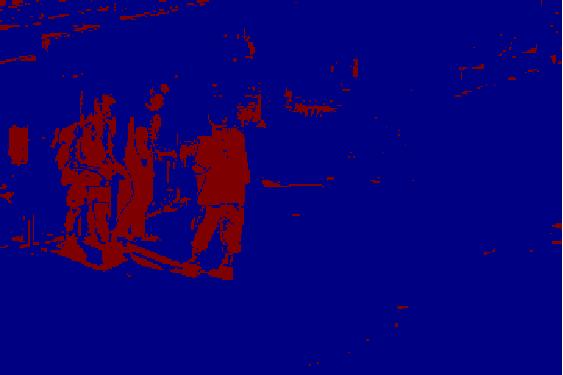} &
    \includegraphics[width=\smallimage]{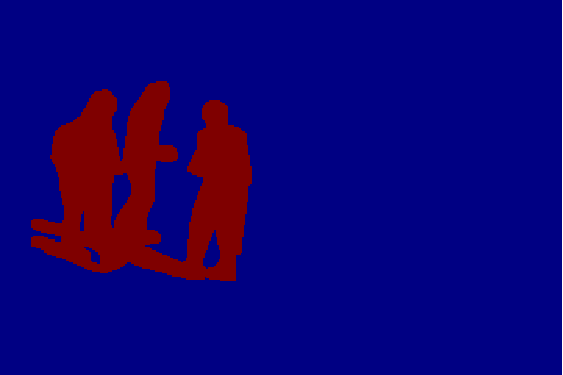}
    \\
    \includegraphics[width=\smallimage]{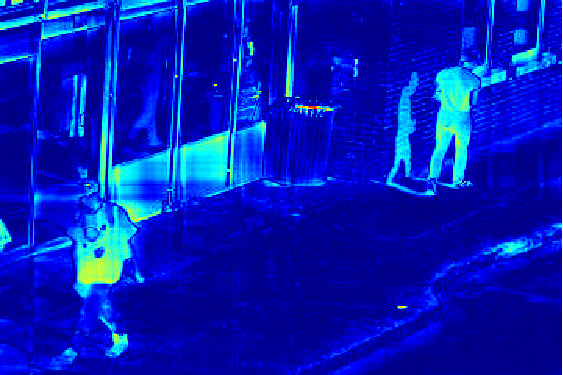} &
    \includegraphics[width=\smallimage]{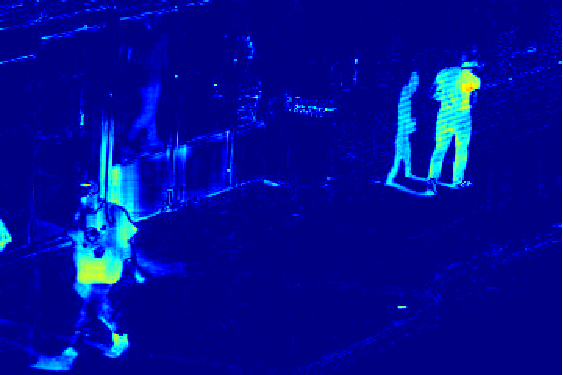} &
    \includegraphics[width=\smallimage]{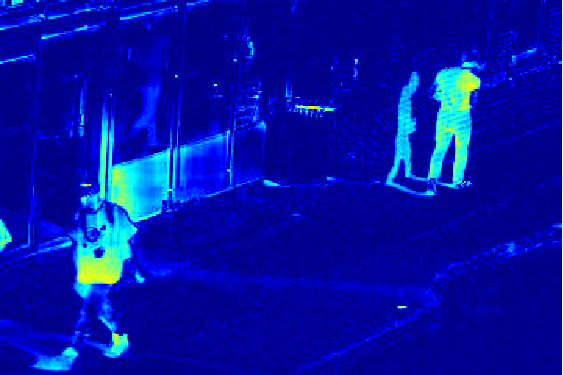} &
    \includegraphics[width=\smallimage]{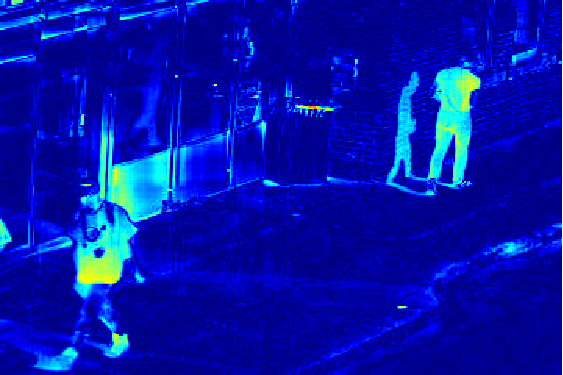} &
    \includegraphics[width=\smallimage]{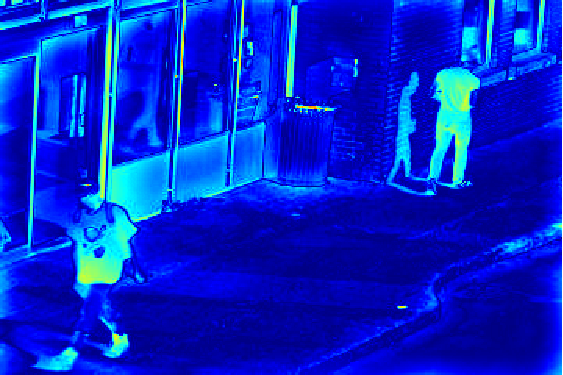} &
    \includegraphics[width=\smallimage]{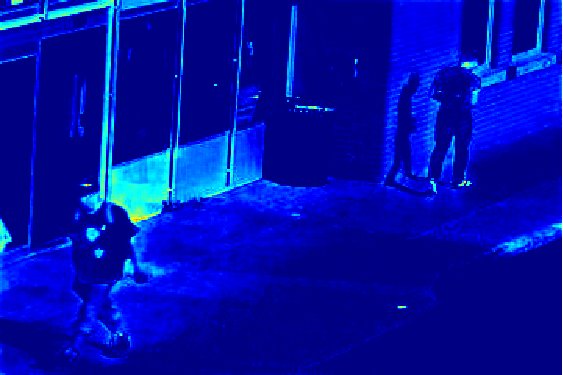} &
    \includegraphics[width=\smallimage]{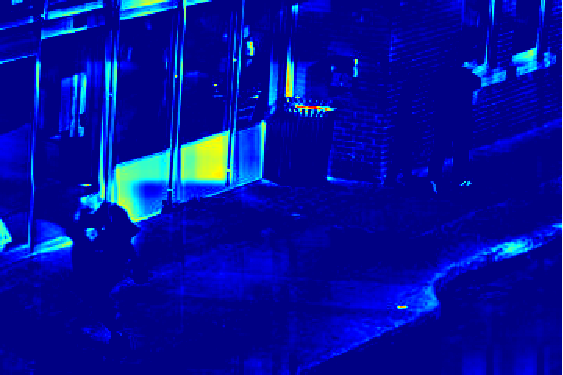} &
    \includegraphics[width=\smallimage]{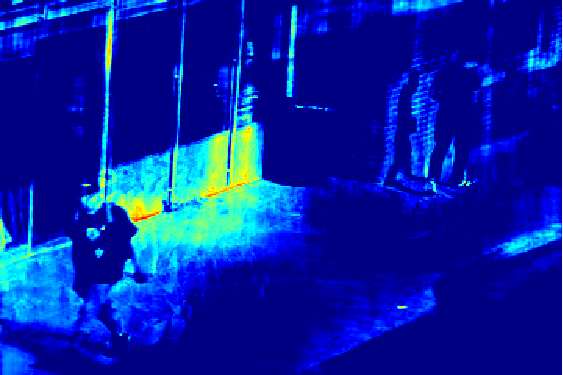} &
    \includegraphics[width=\smallimage]{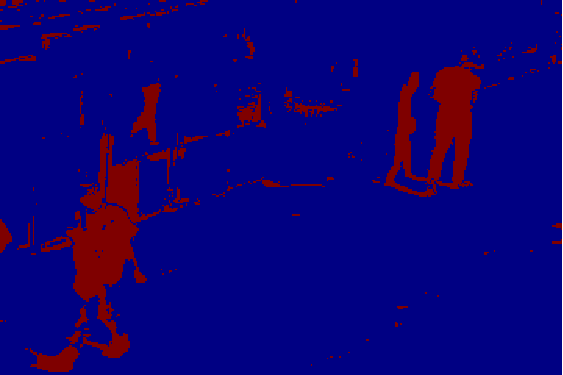} &
    \includegraphics[width=\smallimage]{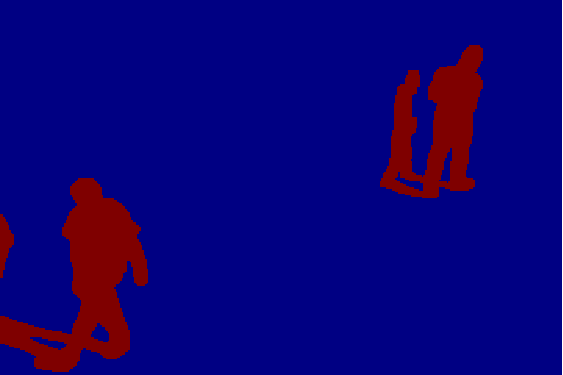}
    \\
    \multicolumn{1}{c}{\footnotesize PCP} &
    \multicolumn{1}{c}{\footnotesize KBR} &
    \multicolumn{1}{c}{\footnotesize TRPCA} &
    \multicolumn{1}{c}{\footnotesize ETRPCA} &
    \multicolumn{1}{c}{\footnotesize t-CTV} &
    \multicolumn{1}{c}{\footnotesize MTTD} &
    \multicolumn{1}{c}{\footnotesize LRTFR} &
    \multicolumn{1}{c}{\footnotesize ONTRPCA} &
    \multicolumn{1}{c}{\footnotesize BCP-RPCC} &
    \multicolumn{1}{c}{\footnotesize Ground truth}
  \end{tabular}
  \caption{Foreground extraction for the Busstation
    dataset. Row 1: Frame 1. Row 2: Frame 25. Row 3: Frame 50.}
  \label{fig:FMBusstation}
\end{figure*}

\begin{table}[t]
  \centering
  \renewcommand{\arraystretch}{1.25}
  \caption{Foreground Extraction for Highway. Best results are in boldface.}
  \begin{tabular}{c|ccc}
    \toprule
    \hline
    \multirow{2}{*}{\textbf{Methods}} &
    \multicolumn{3}{c}{\textbf{Metrics}} \\ \cline{2-4}
    & \multicolumn{1}{c|}{$\text{AUC}_{\operatorname{F1}}\uparrow$} &
    \multicolumn{1}{c|}{$\text{AUC}_{\operatorname{IoU}}\uparrow$} &
    $\text{AUC}_{\operatorname{Fa}}\downarrow$ \\ \hline
    PCP
    & \multicolumn{1}{c|}{0.2395} & \multicolumn{1}{c|}{0.1543} & 0.0292 \\
    KBR
    & \multicolumn{1}{c|}{0.3919} & \multicolumn{1}{c|}{0.2862} & \textbf{0.0060} \\
    TRPCA
    & \multicolumn{1}{c|}{0.3905} & \multicolumn{1}{c|}{0.2779} & 0.0138 \\
    ETRPCA
    & \multicolumn{1}{c|}{0.3607} & \multicolumn{1}{c|}{0.2498} & 0.0197 \\
    t-CTV
    & \multicolumn{1}{c|}{0.3470} & \multicolumn{1}{c|}{0.2310} & 0.0455 \\
    MTTD
    & \multicolumn{1}{c|}{0.1642} & \multicolumn{1}{c|}{0.0957} & 0.0290 \\
    LRTFR
    & \multicolumn{1}{c|}{0.1105} & \multicolumn{1}{c|}{0.0611} & 0.0185 \\
    ONTRPCA
    & \multicolumn{1}{c|}{0.1069} & \multicolumn{1}{c|}{0.0633} & 0.0279 \\
    BCP-RPCC
    & \multicolumn{1}{c|}{\textbf{0.7247}} & \multicolumn{1}{c|}{\textbf{0.5682}} & 0.0174 \\
    \hline
    \bottomrule
  \end{tabular}%
  \label{tab:FMHighway}
\end{table}

\begin{table}[t]
  \centering
  \renewcommand{\arraystretch}{1.25}
  \caption{Foreground Extraction for Turnpike. Best results are in boldface.}
  \begin{tabular}{c|ccc}
    \toprule
    \hline
    \multirow{2}{*}{\textbf{Methods}} &
    \multicolumn{3}{c}{\textbf{Metrics}} \\ \cline{2-4}
    & \multicolumn{1}{c|}{$\text{AUC}_{\operatorname{F1}}\uparrow$} &
    \multicolumn{1}{c|}{$\text{AUC}_{\operatorname{IoU}}\uparrow$} &
    $\text{AUC}_{\operatorname{Fa}}\downarrow$ \\ \hline
    PCP&
    \multicolumn{1}{c|}{0.2501} & \multicolumn{1}{c|}{0.1527} & 0.0713 \\ \hline
    KBR&
    \multicolumn{1}{c|}{0.2466} & \multicolumn{1}{c|}{0.1561} & 0.0394 \\ \hline
    TRPCA&
    \multicolumn{1}{c|}{0.2791} & \multicolumn{1}{c|}{0.1765} & 0.0591 \\\hline
    ETRPCA&
    \multicolumn{1}{c|}{0.2758} & \multicolumn{1}{c|}{0.1736} & 0.0648 \\\hline
    t-CTV
    & \multicolumn{1}{c|}{0.2253} & \multicolumn{1}{c|}{0.1389} & 0.0706 \\\hline
    MTTD
    & \multicolumn{1}{c|}{0.1820} & \multicolumn{1}{c|}{0.1082} & \textbf{0.0347} \\\hline
    LRTFR
    &  \multicolumn{1}{c|}{0.1836} & \multicolumn{1}{c|}{0.1044} & 0.0356 \\\hline
    ONTRPCA
    & \multicolumn{1}{c|}{0.1247} & \multicolumn{1}{c|}{0.0723} & 0.0114 \\\hline
    BCP-RPCC
      &  \multicolumn{1}{c|}{\textbf{0.5813}} & \multicolumn{1}{c|}{\textbf{0.4097}} & 0.0774 \\ \hline\bottomrule
    \end{tabular}%
  \label{tab:FMTurnpike}
\end{table}

\begin{table}[t]
  \centering
  \renewcommand{\arraystretch}{1.25}
  \caption{Foreground Extraction for Crossroad. Best results are in boldface.}
  \begin{tabular}{c|ccc}
    \toprule
    \hline
    \multirow{2}{*}{\textbf{Methods}} &
    \multicolumn{3}{c}{\textbf{Metrics}} \\ \cline{2-4}
    & \multicolumn{1}{c|}{$\text{AUC}_{\operatorname{F1}}\uparrow$} &
    \multicolumn{1}{c|}{$\text{AUC}_{\operatorname{IoU}}\uparrow$} &
    $\text{AUC}_{\operatorname{Fa}}\downarrow$ \\ \hline
    PCP
    &  \multicolumn{1}{c|}{0.2202} & \multicolumn{1}{c|}{0.1361} & 0.0412 \\ \hline KBR
    &  \multicolumn{1}{c|}{0.2319} & \multicolumn{1}{c|}{0.1554} & \textbf{0.0095} \\ \hline
    TRPCA
    &  \multicolumn{1}{c|}{0.2854} & \multicolumn{1}{c|}{0.1949} & 0.0214 \\\hline
    ETRPCA
    &  \multicolumn{1}{c|}{0.2657} & \multicolumn{1}{c|}{0.1752} & 0.0328 \\\hline t
    -CTV
    &  \multicolumn{1}{c|}{0.2807} & \multicolumn{1}{c|}{0.1870} & 0.0510 \\\hline
    MTTD
    &  \multicolumn{1}{c|}{0.1361} & \multicolumn{1}{c|}{0.0760} & 0.0535 \\\hline
    LRTFR
    &  \multicolumn{1}{c|}{0.1476} & \multicolumn{1}{c|}{0.0822} & 0.0326 \\\hline
    ONTRPCA
    &  \multicolumn{1}{c|}{0.0674} & \multicolumn{1}{c|}{0.0363} & 0.0628 \\\hline
    BCP-RPCC
    &  \multicolumn{1}{c|}{\textbf{0.7077}} & \multicolumn{1}{c|}{\textbf{0.5476}} & 0.0126  \\
    \hline
    \bottomrule
    \end{tabular}%
  \label{tab:FMCrossroad}
\end{table}

\begin{table}[t]
  \centering
  \renewcommand{\arraystretch}{1.25}
  \caption{Foreground Extraction for Crossroad. Best results are in boldface.}
  \begin{tabular}{c|ccc}
    \toprule
    \hline
    \multirow{2}{*}{\textbf{Methods}} &
    \multicolumn{3}{c}{\textbf{Metrics}} \\ \cline{2-4}
    & \multicolumn{1}{c|}{$\text{AUC}_{\operatorname{F1}}\uparrow$} &
    \multicolumn{1}{c|}{$\text{AUC}_{\operatorname{IoU}}\uparrow$} &
    $\text{AUC}_{\operatorname{Fa}}\downarrow$ \\ \hline
      PCP
      &  \multicolumn{1}{c|}{0.1658} & \multicolumn{1}{c|}{0.1013} & 0.0631 \\ \hline KBR
      &  \multicolumn{1}{c|}{0.2105} & \multicolumn{1}{c|}{0.1404} & \textbf{0.0124} \\ \hline
      TRPCA
      &  \multicolumn{1}{c|}{0.2031} & \multicolumn{1}{c|}{0.1335} & 0.0265 \\\hline
      ETRPCA
      &  \multicolumn{1}{c|}{0.1923} & \multicolumn{1}{c|}{0.1229} & 0.0391 \\\hline
      t-CTV
      &  \multicolumn{1}{c|}{0.1461} & \multicolumn{1}{c|}{0.0874} & 0.0925 \\\hline
      MTTD
      &  \multicolumn{1}{c|}{0.0833} & \multicolumn{1}{c|}{0.0460} & 0.0476 \\\hline
      LRTFR
      &  \multicolumn{1}{c|}{0.0735} & \multicolumn{1}{c|}{0.0398} & 0.0403 \\\hline
      ONTRPCA
      &  \multicolumn{1}{c|}{0.0836} & \multicolumn{1}{c|}{0.0458} & 0.0370 \\\hline
      BCP-RPCC
      &  \multicolumn{1}{c|}{\textbf{0.6448}} & \multicolumn{1}{c|}{\textbf{0.4757}} & 0.0238 \\
      \hline
      \bottomrule
    \end{tabular}%
  \label{tab:FMBusstation}
\end{table}

\begin{figure}[t]
  \centering
  \setlength{\tabcolsep}{0.15mm}
  \begin{tabular}{ccc}
    \includegraphics[width=\smallplot]{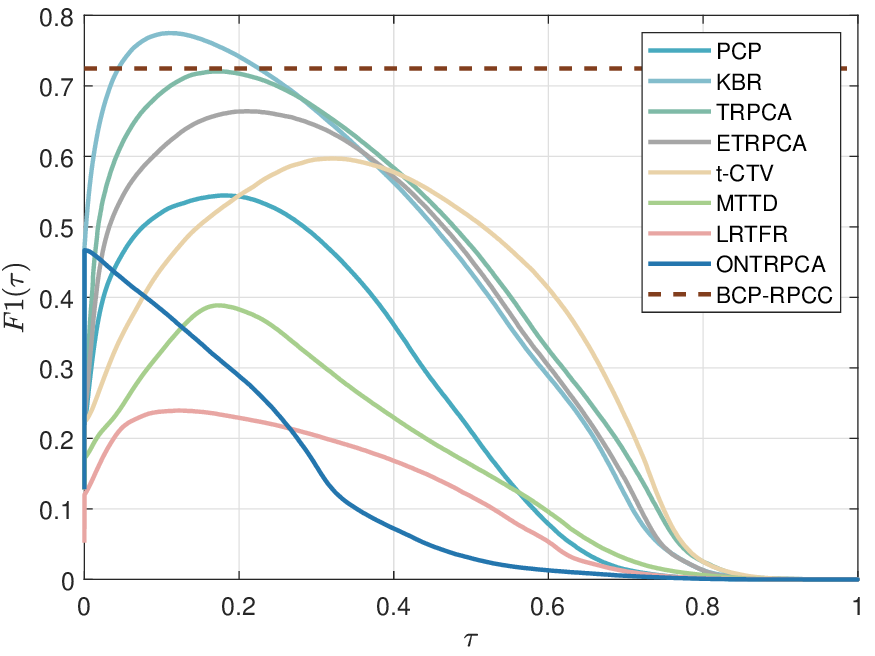} &
    \includegraphics[width=\smallplot]{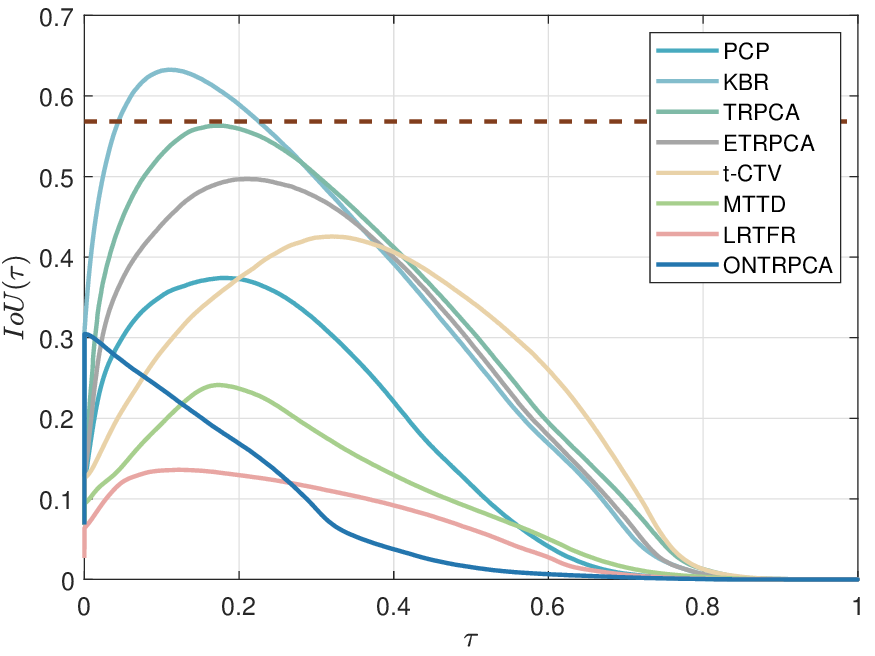} &
    \includegraphics[width=\smallplot]{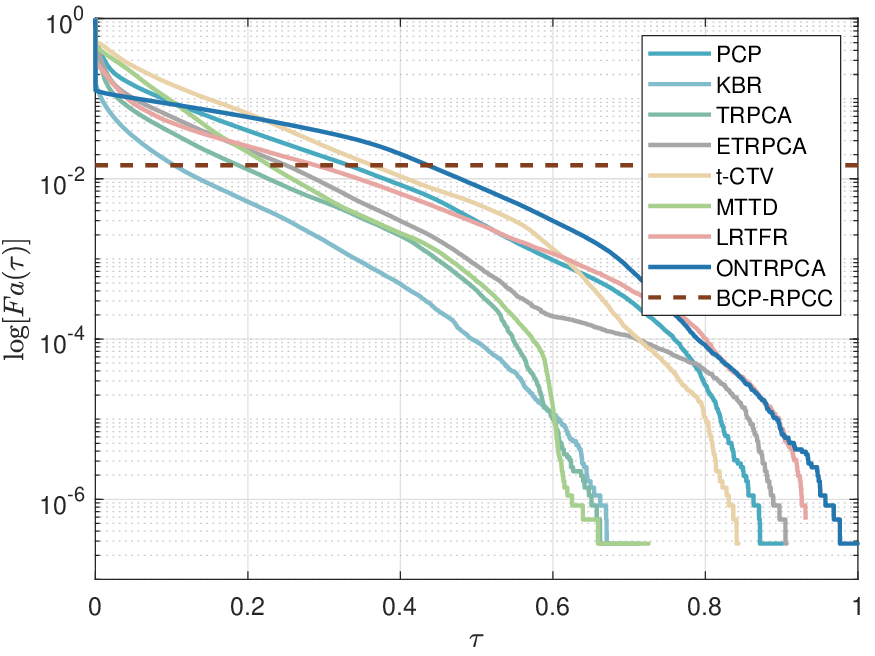}
    \\
    \includegraphics[width=\smallplot]{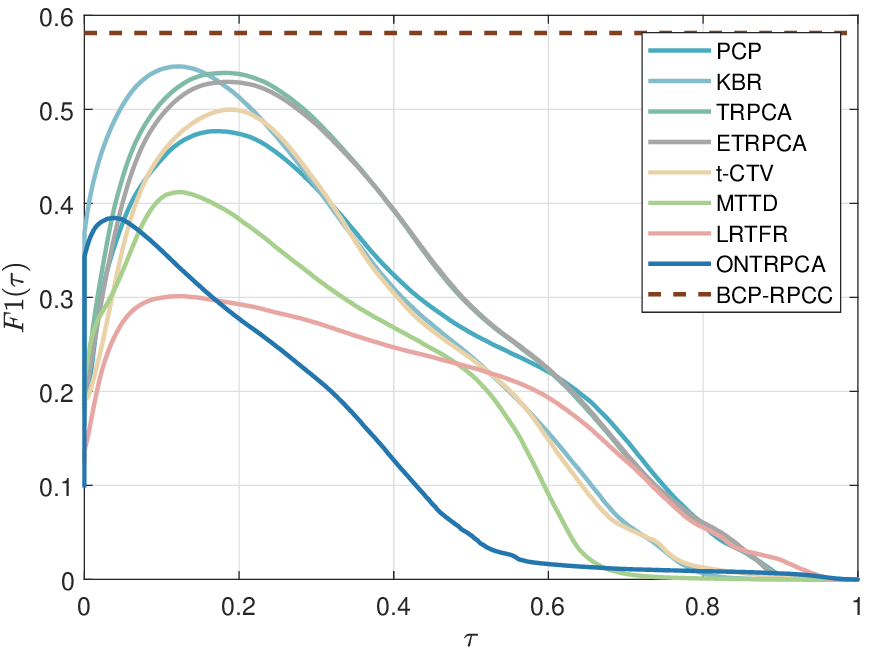} &
    \includegraphics[width=\smallplot]{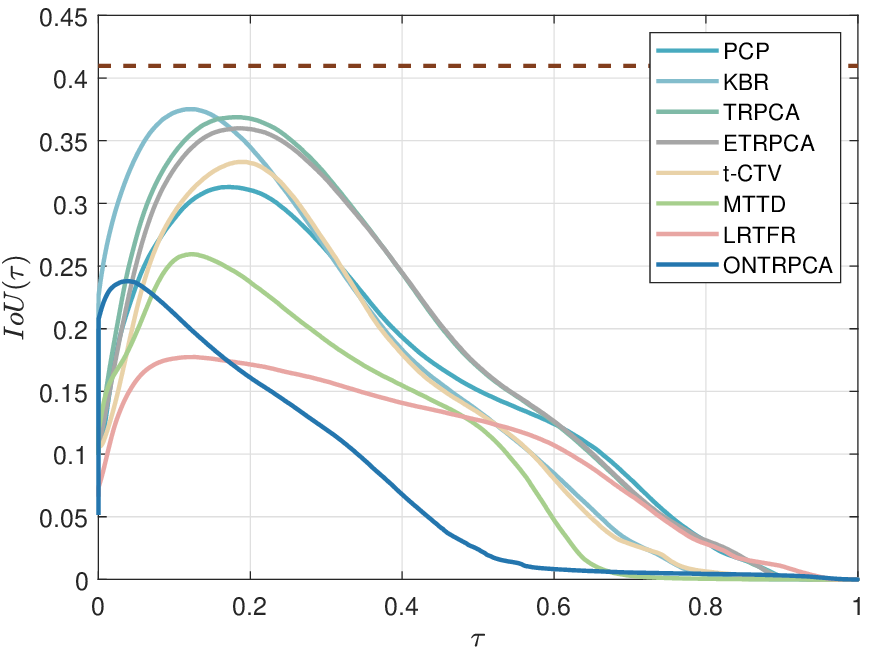} &
    \includegraphics[width=\smallplot]{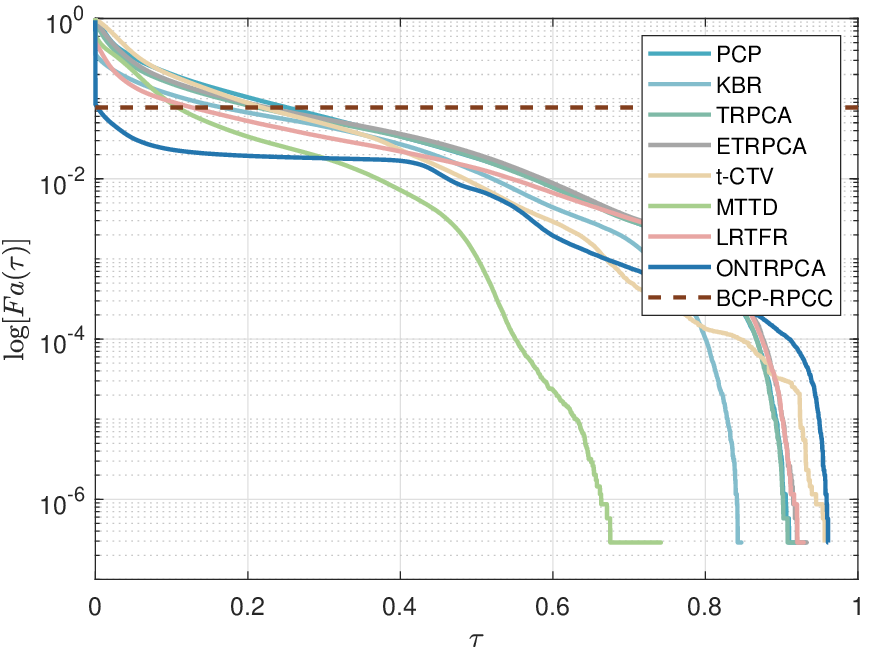}
    \\
    \includegraphics[width=\smallplot]{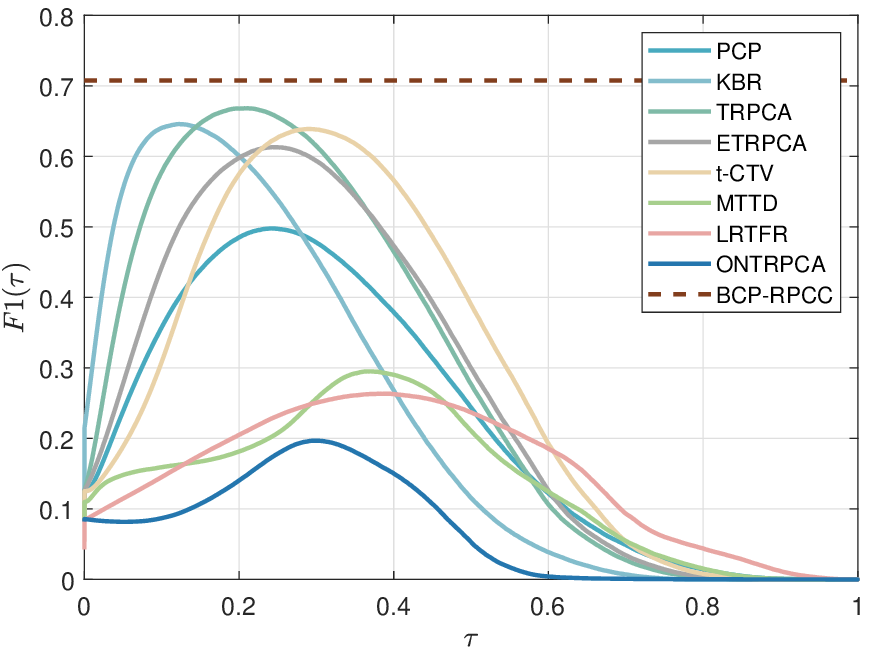} &
    \includegraphics[width=\smallplot]{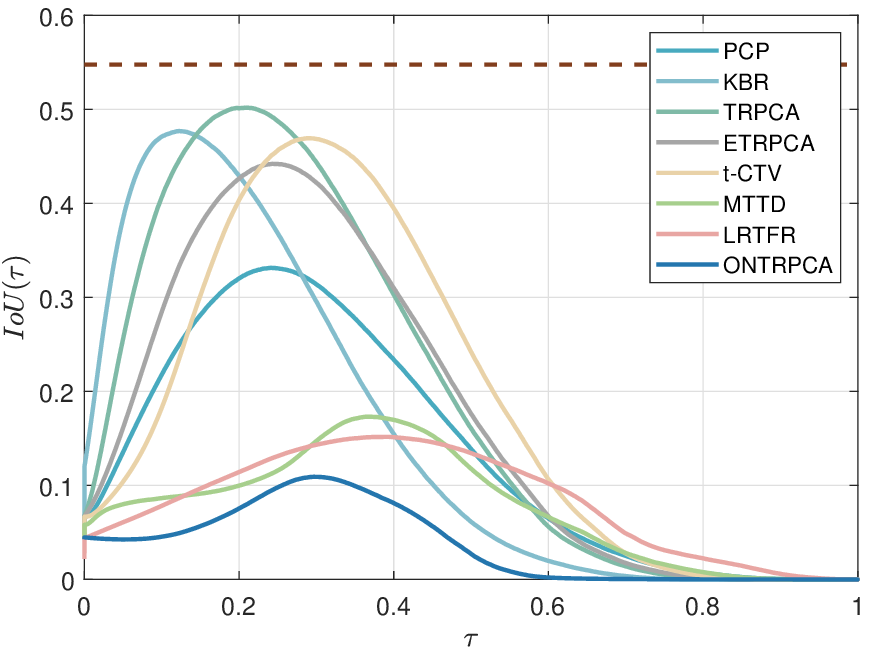} &
    \includegraphics[width=\smallplot]{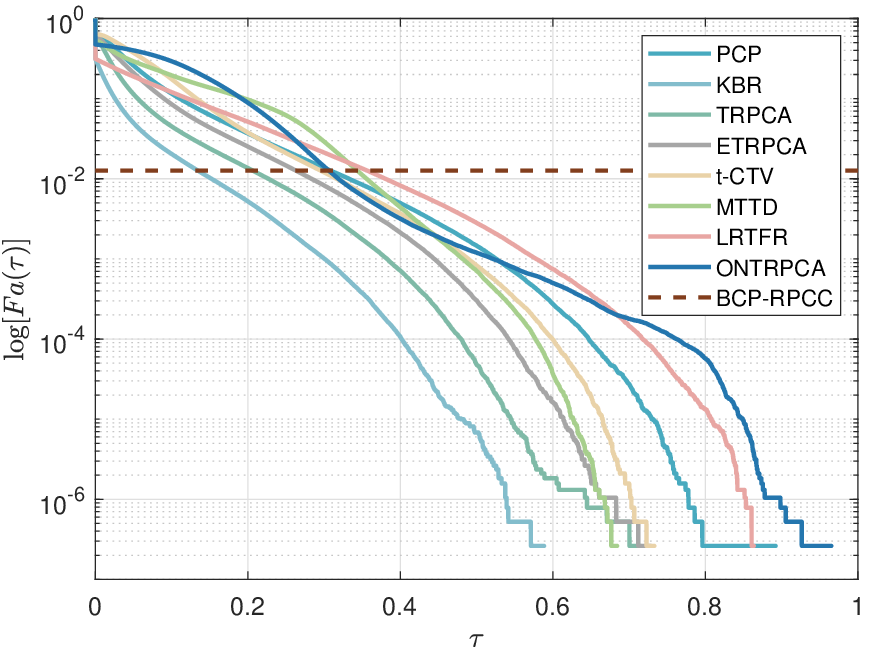}
    \\
    \includegraphics[width=\smallplot]{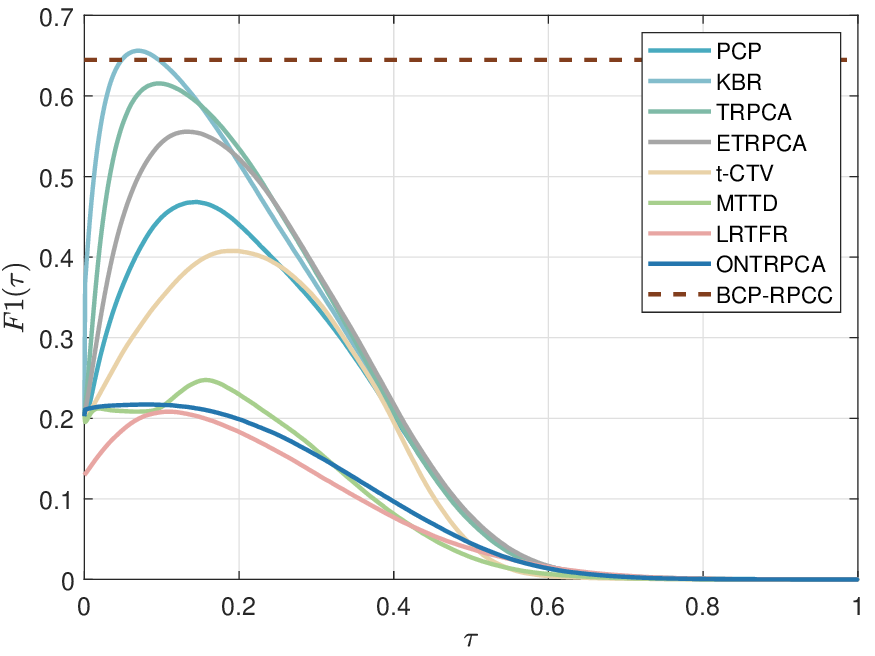} &
    \includegraphics[width=\smallplot]{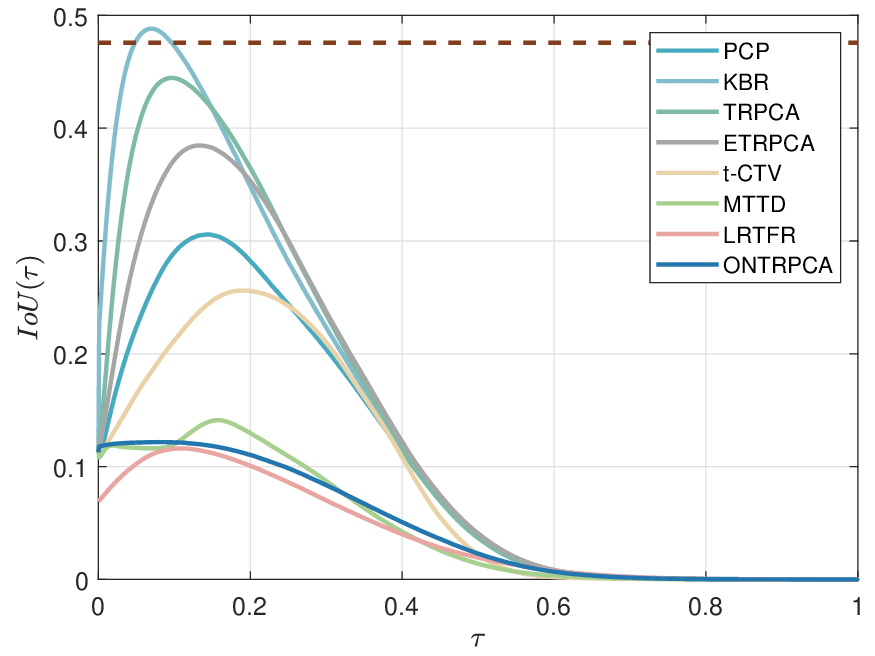} &
    \includegraphics[width=\smallplot]{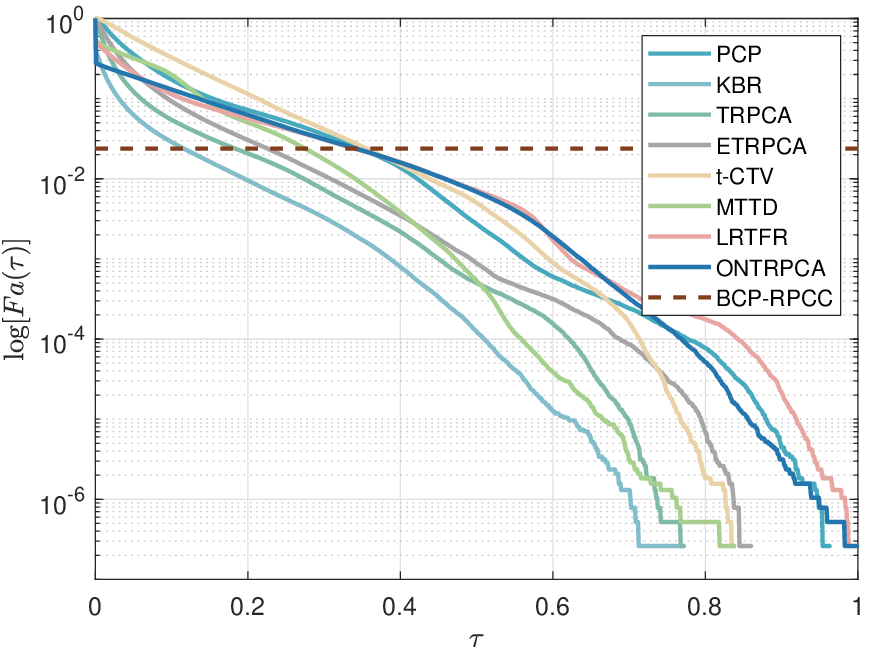}
  \end{tabular}
  \caption{Foreground-extraction performance in terms of
    $\operatorname{F1}(\tau)$, $\operatorname{IoU}(\tau)$, and
    $\operatorname{Fa}(\tau)$: Highway (row~1), Turnpike (row~2),
    Crossroad (row~3), and Busstation (row~4).}
  \label{fig:FMcurves}
\end{figure}

The foreground-extraction problem considers a 
video composed of a set of sparsely-supported
foreground objects moving against a relatively static
background. In the context of RPCC,
the desired foreground is $\mathcal{S}$, the low-rank video background
is $\mathcal{L}$, the mask separating foreground from background
is
$\mathtt{\Omega}$, and the observed video is
$\mathcal{Y} = \mathscr{P}_{\mathtt{\Omega}^{\bot}}\left[\mathcal{L}\right] + \mathcal{S}$. We then employ the BCP-RPCC of Alg.~\ref{alg:BCP-RPCC}
to estimate $\widehat{\mathcal{S}}$.

To conduct empirical results on real-world video, we use four color
videos from the CDnet dataset \cite{WJP2014} which are illustrated in
Fig.~\ref{fig:FgMdata}.  For each video, we draw 50 frames to create a
$50\times H\times W\times 3$ tensor, where $H$ and $W$ correspond to
the height and width of each frame, respectively, and $3$ corresponds
to RGB channels.  Each original video in the CDnet dataset is
accompanied by a manually segmented ground-truth foreground mask along
with a region of interest (ROI) indicating the possible spatial range of the
foreground objects. For the results here, $\mathcal{S}$ is forcibly confined
to lie with the specified ROI for each video.

We compare to several RPCA-driven methods from prior literature.
These include the original matrix-based PCP solution to RPCA presented
in \cite{CLM2011}, along with a number of the tensor-based approaches
discussed previously in Sec.~\ref{sec:backgroundrpca}---namely, KBR
\cite{XZM2017}, TRPCA \cite{LFC2020}, ETRPCA \cite{GZX2021}, t-CTV
\cite{WPQ2023}, MTTD \cite{FZL2023}, ONTRPCA \cite{FLL2025}, and LRTFR
\cite{LZL2024a}.

We note that, while BCP-RPCC implements a hard classifier as discussed
in Sec.~\ref{sec:convergence}, the RPCA-based methods to which we
compare are effectively soft classifiers which require a
subsequent threholding to definitively estimate the target foreground
component $\mathcal{S}$.  To the best of our knowledge, simultaneously
evaluating both hard and soft classifiers is a challenge that
has been largely unaddressed in the past. Consequently, 
we are forced to propose new measures to this end.
Specifically,
we first choose three common measures for hard classifiers, namely
the F1-score,
\begin{equation}
  \operatorname{F1}=\frac{2\times \text{Precision}\times
    \text{Recall}}{\text{Precision}+ \text{Recall}},
\end{equation}
the IoU from \eqref{eq:iou},
and the false-alarm rate,
\begin{equation}
  \operatorname{Fa}=\frac{\text{False
      Positive}}{\text{False Positive} + \text{True Negative}}.
\end{equation}
Then, for the soft classifiers, these three measures become functions
of the threshold $\tau$, which are denoted as
$\operatorname{F1}(\tau)$, $\operatorname{IoU}(\tau)$, and
$\operatorname{Fa}(\tau)$, respectively, and we measure
area under the
curve (AUC) for each, i.e.,
\begin{align}
    \text{AUC}_{\operatorname{F1}}&=\int_{0}^{1}\operatorname{F1}(\tau)\,d\tau,\\
    \text{AUC}_{\operatorname{IoU}}&=\int_{0}^{1}\operatorname{IoU}(\tau)\,d\tau,\\
    \text{AUC}_{\operatorname{Fa}}&=\int_{0}^{1}\operatorname{Fa}(\tau)\,d\tau .
\end{align}
We also note that, when comparing BCP-RPCC to the RPCA-based methods,
$\operatorname{F1}(\tau)$, $\operatorname{IoU}(\tau)$, and
$\operatorname{Fa}(\tau)$ are constant.


For BCP-RPCC, the main hyperparameters are the block sizes
$\left\{J_n\right\}_{n=1}^4$, the block numbers
$\left\{K_n\right\}_{n=1}^4$, the CP rank $R$, and the variance
$\sigma$ generating the noise $\mathcal{E}$.  For the
foreground-extraction problem, we cast each pixel as one
block. Accordingly, we have $J_1=J_2=J_3=1$ and $J_4=3$, which makes
$K_1=50$, $K_2=H$, $K_3=W$, and $K_4=1$. Prop.~\ref{Prop:HardClass}
concludes that a smaller $\sigma$ brings BCP-RPCC closer to a hard
classifier. However, its proof reveals that, if $\sigma=o\Bigl(E\bigl[
  \bigl(\widehat{\mathbf{y}}^{[K]}_k - \mathbf{l}^{[K]}_k\bigr)^T
  \bigl(\widehat{\mathbf{y}}^{[K]}_k - \mathbf{l}^{[K]}_k\bigr)
  \bigr]\Bigr)$, the model will classify all blocks as foreground,
thus resulting in complete failure. Hence, both $R$ and $\sigma$ must
be tuned carefully, which is visualized in
Fig.~\ref{fig:FgMTuning}. Holding $R=25$, we see that both F1 and IoU
reach peak performance when $\sigma$ is around $10^{-3}$. For this value, Fig.~\ref{fig:FgMTuning}
indicates monotonically improving performance with increasing
$R$ for Highway, Turnpike and Busstation. Accordingly, to balance solution quality with computational
cost while simultaneously considering the performance drop on Crossroad when $R>30$, we set $R=30$.


Performance of foreground extraction for all the methods under
consideration is depicted in
Figs.~\ref{fig:FMHighway}--\ref{fig:FMBusstation}. Apparent is that
only the proposed BCP-RPCC is provides binary predictions, which sets
it apart from the RPCA-driven methods. Additionally, the foreground
region detected by BCP-RPCC achieves the largest intersection with the
ground-truth map. Notably, in Figs.~\ref{fig:FMHighway} and
\ref{fig:FMTurnpike}, the MTTD, LRTFR, and ONTRPCA methods struggle to
identify car windows, while dark clothes in
Fig.~\ref{fig:FMBusstation} present similar difficulties for these
same techniques.  BCP-RPCC nevertheless offers clear foreground
separation in both cases.

Quantitative performance is tabulated in
Tables~\ref{tab:FMHighway}--\ref{tab:FMBusstation} which consistently
show BCP-RPCC outperforming the other techniques for the
$\operatorname{AUC}_{\operatorname{F1}}$ and
$\operatorname{AUC}_{\operatorname{IoU}}$ measures. Furthermore,
Fig.~\ref{fig:FMcurves} plots the $\operatorname{F1}(\tau)$,
$\operatorname{IoU}(\tau)$, and $\operatorname{Fa}(\tau)$
curves. Therein, we see that the constant BCP-RPCC curve is larger
than most of the other curves, which all feature a relatively sharp
peak. The tightly-peaked nature of these curves indicates that it is
exceedingly difficult to choose an appropriate soft-classifier
threshold for these techniques in practice since real-world scenarios
would not permit access to ground truth.  Thus, while BCP-RPCC is
marginally outperformed in terms of false-alarm rate, this pales in
comparison to its substantial advantages in
$\operatorname{AUC}_{\operatorname{F1}}$ and
$\operatorname{AUC}_{\operatorname{IoU}}$ performance.

\subsection{Hyperspectral Anomaly Detection}
\label{sec:resultsanomaly}
\begin{figure}[t]
  \centering
  \setlength{\tabcolsep}{0.1mm}
  \begin{tabular}{cccc}
    \multicolumn{1}{c}{\makecell{\footnotesize Belcher\\\footnotesize{$150\times150\times150$}}} &
    \multicolumn{1}{c}{\makecell{\footnotesize Urban\\\footnotesize{$100\times100\times204$}}} &
    \multicolumn{1}{c}{\makecell{\footnotesize Beach\\\footnotesize{$150\times150\times102$}}} &
    \multicolumn{1}{c}{\makecell{\footnotesize Salinas\\\footnotesize{$100\times100\times204$}}}
    \\
    \includegraphics[width=\mediumimage]{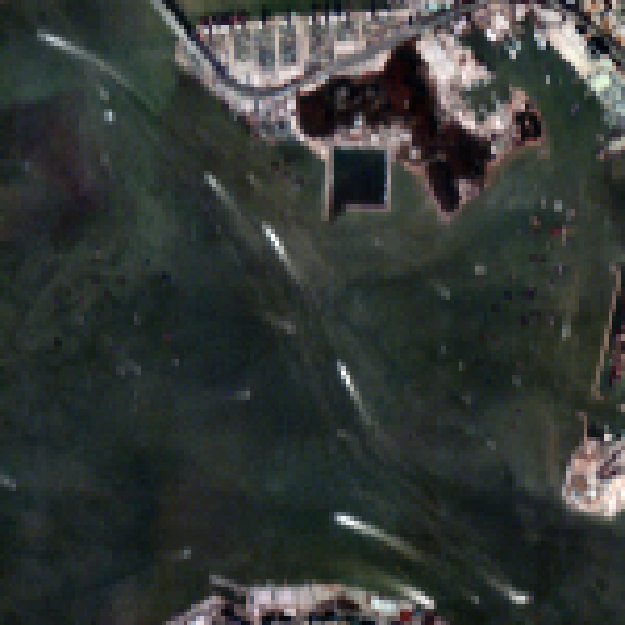} &
    \includegraphics[width=\mediumimage]{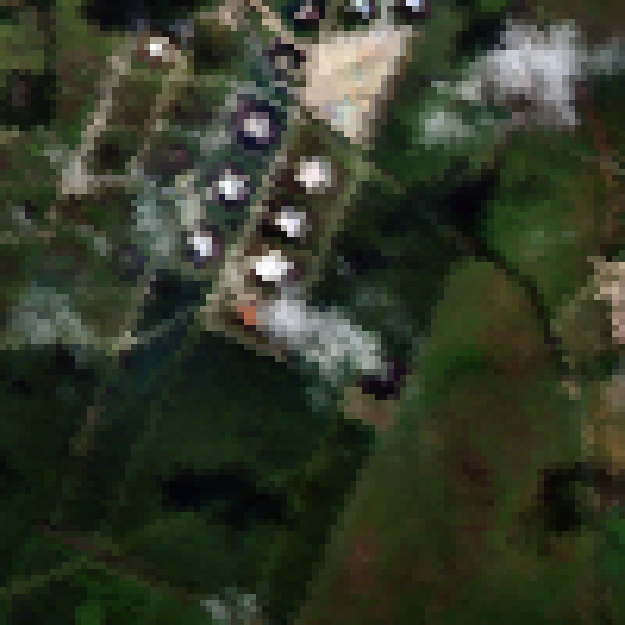} &
    \includegraphics[width=\mediumimage]{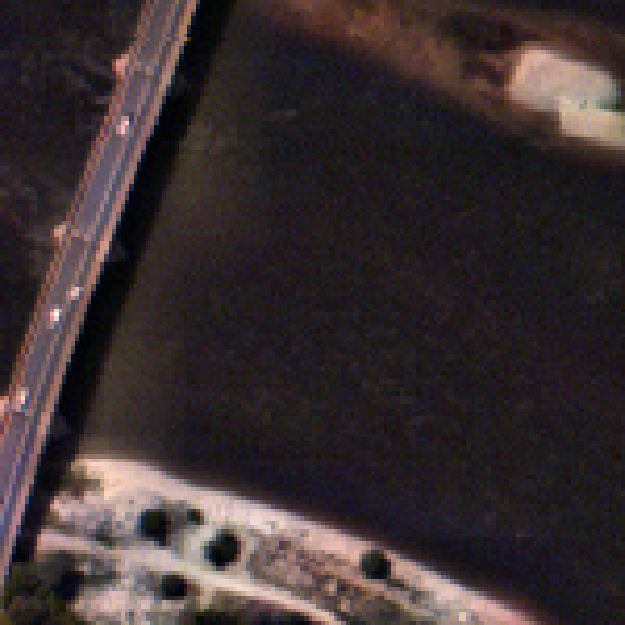} &
    \includegraphics[width=\mediumimage]{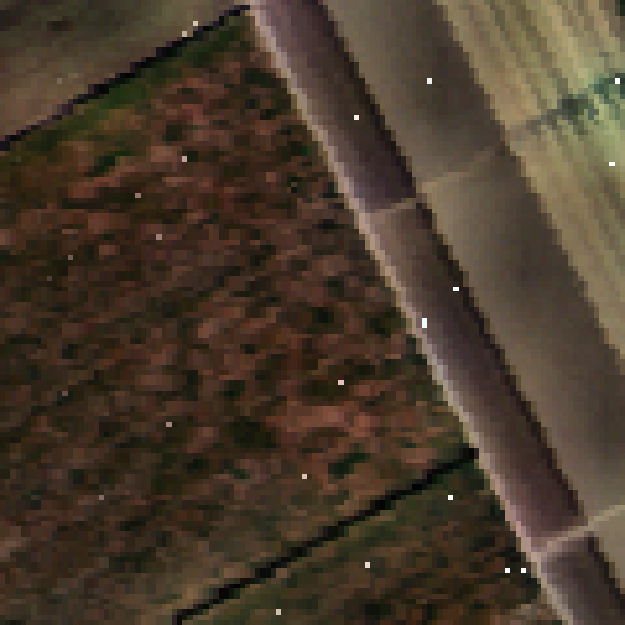}\\
    \includegraphics[width=\mediumimage]{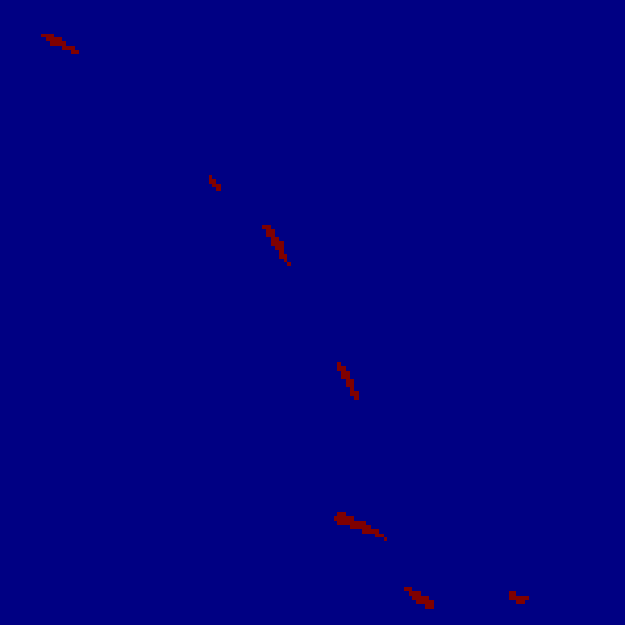} &
    \includegraphics[width=\mediumimage]{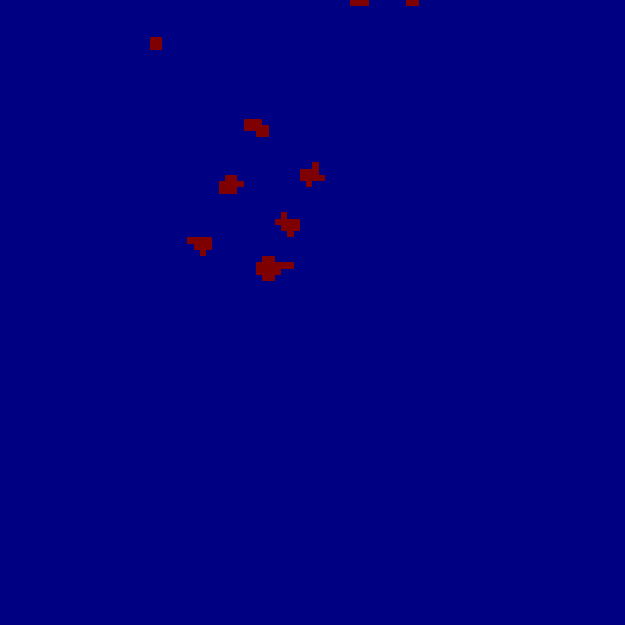} &
    \includegraphics[width=\mediumimage]{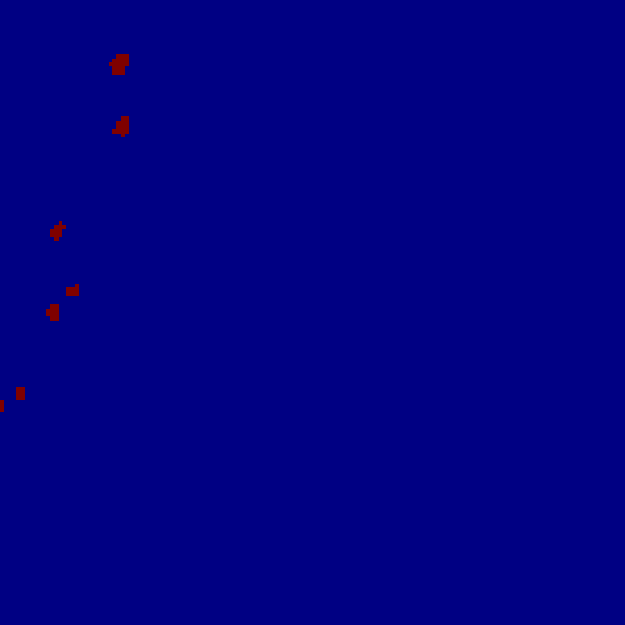} &
    \includegraphics[width=\mediumimage]{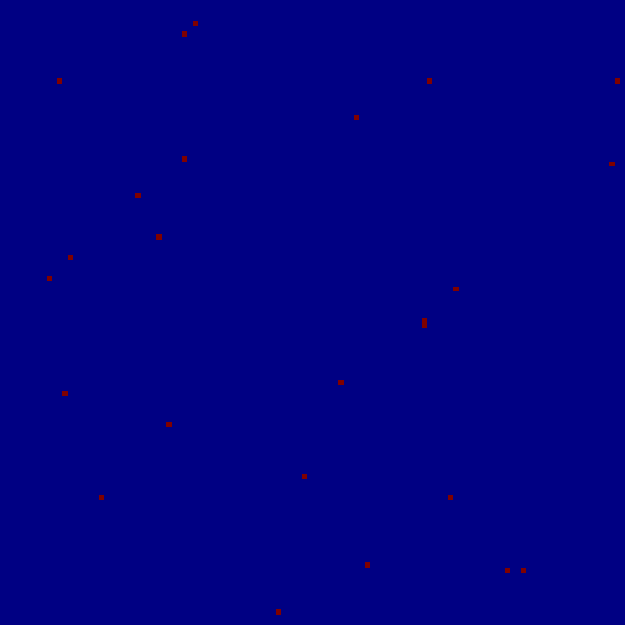}
    \\
    \multicolumn{4}{c}{\includegraphics[width=3\mediumimage]{Figures/FM/Colorbar.pdf}}
  \end{tabular}
  \caption{Hyperspectral datasets for anomaly detection; second row is ground truth.}
  \label{fig:HADdata}
\end{figure}

\begin{figure}[t]
  \centering
  \setlength{\tabcolsep}{0.2mm}
  \begin{tabular}{cc}
    \includegraphics[width=\mediumplot]{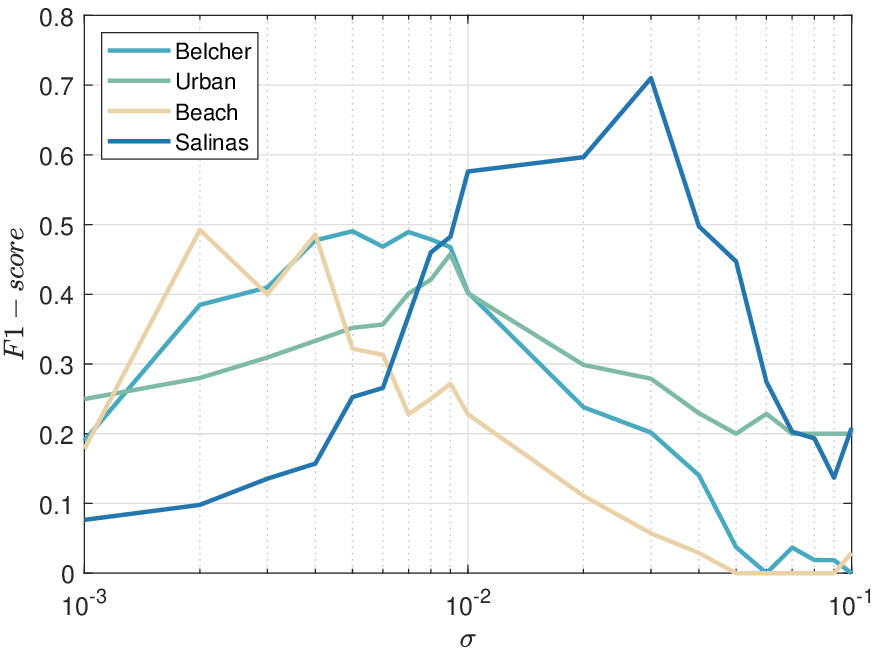} &
    \includegraphics[width=\mediumplot]{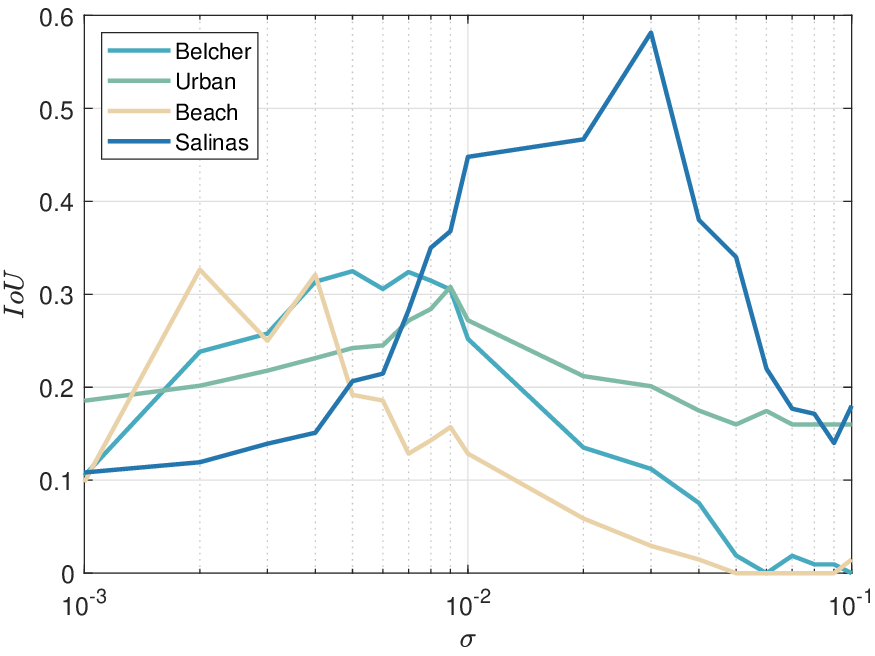}
    \\
    \includegraphics[width=\mediumplot]{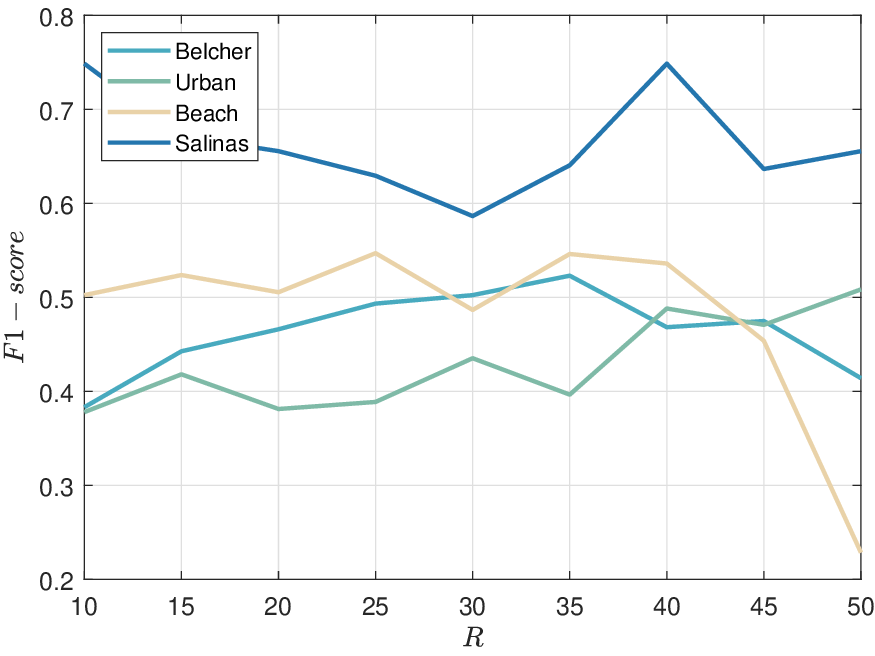} &
    \includegraphics[width=\mediumplot]{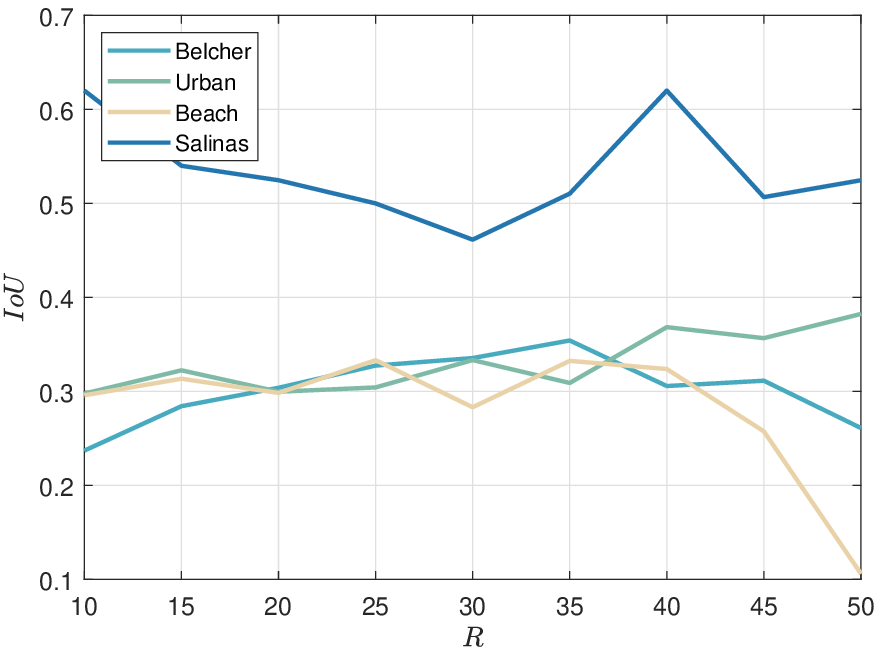}
  \end{tabular}
  \caption{Hyperparameter tuning for hyperspectral anomaly
    detection. Row 1: Tuning $\sigma$ when $R=25$. Row 2: Tuning $R$
    when $\sigma=4\times10^{-3}$ for Belcher and Beach, $\sigma=9\times10^{-3}$ for Urban and $\sigma=3\times10^{-2}$ for Salinas.}
  \label{fig:HADTuning}
\end{figure} 

\begin{table}[t]
  \centering
  \renewcommand{\arraystretch}{1.25}
  \caption{Anomaly Detection for Belcher. Best results are in boldface.}
  \begin{tabular}{c|ccc}
    \toprule
    \hline
    \multirow{2}{*}{\textbf{Methods}} &
    \multicolumn{3}{c}{\textbf{Metrics}} \\ \cline{2-4}
    & \multicolumn{1}{c|}{$\text{AUC}_{\operatorname{F1}}\uparrow$} &
    \multicolumn{1}{c|}{$\text{AUC}_{\operatorname{IoU}}\uparrow$} &
    $\text{AUC}_{\operatorname{Fa}}\downarrow$ \\ \hline
    PCP &
    \multicolumn{1}{c|}{0.1717}  &
    \multicolumn{1}{c|}{0.0992}  &
    0.0101
    \\ \hline
    KBR &
    \multicolumn{1}{c|}{0.2135}  &
    \multicolumn{1}{c|}{0.1297}  &
    0.0088
    \\ \hline
    TRPCA &
    \multicolumn{1}{c|}{0.2084}  &
    \multicolumn{1}{c|}{0.1218}  &
    0.0329
    \\\hline
    LRTFR &
    \multicolumn{1}{c|}{0.2785}  &
    \multicolumn{1}{c|}{0.1751}  &
    0.0167
    \\\hline
    PTA &
    \multicolumn{1}{c|}{0.0921}  &
    \multicolumn{1}{c|}{0.0490}  &
    0.1093
    \\\hline
    LSDM-MoG &
    \multicolumn{1}{c|}{0.2410}  &
    \multicolumn{1}{c|}{0.1466}  &
    0.0056
    \\\hline
    TLRSR &
    \multicolumn{1}{c|}{0.2802}  &
    \multicolumn{1}{c|}{0.1747}  &
    0.0144
    \\\hline
    BTD &
    \multicolumn{1}{c|}{0.2419}  &
    \multicolumn{1}{c|}{0.1482}  &
    0.0623
    \\\hline
    BCP-RPCC &
    \multicolumn{1}{c|}{\textbf{0.5308}}  &
    \multicolumn{1}{c|}{\textbf{0.3613}} &
    \textbf{0.0022}
    \\ \hline
    \bottomrule
  \end{tabular}%
  \label{tab:HADBelcher}
\end{table}

\begin{table}[t]
  \centering
  \renewcommand{\arraystretch}{1.25}
  \caption{Anomaly Detection for Urban. Best results are in boldface.}
  \begin{tabular}{c|ccc}
    \toprule
    \hline
    \multirow{2}{*}{\textbf{Methods}} &
    \multicolumn{3}{c}{\textbf{Metrics}} \\ \cline{2-4}
    & \multicolumn{1}{c|}{$\text{AUC}_{\operatorname{F1}}\uparrow$} &
    \multicolumn{1}{c|}{$\text{AUC}_{\operatorname{IoU}}\uparrow$} &
    $\text{AUC}_{\operatorname{Fa}}\downarrow$ \\ \hline
    PCP  &
    \multicolumn{1}{c|}{0.1656}  &
    \multicolumn{1}{c|}{0.0931}  &
    0.1365
    \\ \hline
    KBR  &
    \multicolumn{1}{c|}{0.2672}  &
    \multicolumn{1}{c|}{0.1715} & 0.1052
    \\ \hline
    TRPCA  &
    \multicolumn{1}{c|}{0.2987}  &
    \multicolumn{1}{c|}{0.1956} & 0.1504
    \\\hline
    LRTFR  &
    \multicolumn{1}{c|}{0.2354}  &
    \multicolumn{1}{c|}{0.1453} & 0.1058
    \\\hline
    PTA  &
    \multicolumn{1}{c|}{0.1403}  &
    \multicolumn{1}{c|}{0.0792} & 0.3321
    \\\hline
    LSDM-MoG  &
    \multicolumn{1}{c|}{0.3353}  &
    \multicolumn{1}{c|}{0.2195}  &
    0.0344
    \\\hline
    TLRSR  &
    \multicolumn{1}{c|}{0.2765}  &
    \multicolumn{1}{c|}{0.1748}  &
    0.0935
    \\\hline
    BTD  &
    \multicolumn{1}{c|}{0.3186}  &
    \multicolumn{1}{c|}{0.2104}  &
    0.1278
    \\\hline
    BCP-RPCC  &
    \multicolumn{1}{c|}{\textbf{0.4686}}  &
    \multicolumn{1}{c|}{\textbf{0.3023}} & \textbf{0.0019}
    \\ \hline\bottomrule
  \end{tabular}%
  \label{tab:HADUrban}
\end{table}

\begin{table}[t]
  \centering
  \renewcommand{\arraystretch}{1.25}
  \caption{Anomaly Detection for Beach. Best results are in boldface.}
  \begin{tabular}{c|ccc}
    \toprule
    \hline
    \multirow{2}{*}{\textbf{Methods}} &
    \multicolumn{3}{c}{\textbf{Metrics}} \\ \cline{2-4}
    & \multicolumn{1}{c|}{$\text{AUC}_{\operatorname{F1}}\uparrow$} &
    \multicolumn{1}{c|}{$\text{AUC}_{\operatorname{IoU}}\uparrow$} &
    $\text{AUC}_{\operatorname{Fa}}\downarrow$ \\ \hline
    PCP      &
    \multicolumn{1}{c|}{0.0960} &
    \multicolumn{1}{c|}{0.0520} &
    0.0030
    \\ \hline
    KBR      &
    \multicolumn{1}{c|}{0.1391} &
    \multicolumn{1}{c|}{0.0770} &
    0.0284
    \\ \hline
    TRPCA      &
    \multicolumn{1}{c|}{0.1285} &
    \multicolumn{1}{c|}{0.0714} &
    0.0411
    \\ \hline
    LRTFR      &
    \multicolumn{1}{c|}{0.2469} &
    \multicolumn{1}{c|}{0.1552} &
    0.0275
    \\ \hline
    PTA      &
    \multicolumn{1}{c|}{0.0703} &
    \multicolumn{1}{c|}{0.0378} &
    0.1100
    \\ \hline
    LSDM-MoG      &
    \multicolumn{1}{c|}{0.1549} &
    \multicolumn{1}{c|}{0.0864} &
    0.0602
    \\ \hline
    TLRSR      &
    \multicolumn{1}{c|}{0.2264} &
    \multicolumn{1}{c|}{0.1418} &
    0.0170
    \\ \hline
    BTD      &
    \multicolumn{1}{c|}{0.2022} &
    \multicolumn{1}{c|}{0.1267} &
    0.0429
    \\ \hline
    BCP-RPCC      &
    \multicolumn{1}{c|}{\textbf{0.4717}} &
    \multicolumn{1}{c|}{\textbf{0.3086}} &
    \textbf{0.0006}
    \\ \hline
    \bottomrule
  \end{tabular}%
  \label{tab:HADBeach}
\end{table}

\begin{table}[t]
  \centering
  \renewcommand{\arraystretch}{1.25}
  \caption{Anomaly Detection for Salinas. Best results are in boldface.}
  \begin{tabular}{c|ccc}
    \toprule
    \hline
    \multirow{2}{*}{\textbf{Methods}} &
    \multicolumn{3}{c}{\textbf{Metrics}} \\ \cline{2-4}
    & \multicolumn{1}{c|}{$\text{AUC}_{\operatorname{F1}}\uparrow$} &
    \multicolumn{1}{c|}{$\text{AUC}_{\operatorname{IoU}}\uparrow$} &
    $\text{AUC}_{\operatorname{Fa}}\downarrow$ \\ \hline
    PCP      &
    \multicolumn{1}{c|}{0.4369} &
    \multicolumn{1}{c|}{0.3056} &
    0.0090
    \\ \hline
    KBR      &
    \multicolumn{1}{c|}{0.5613} &
    \multicolumn{1}{c|}{0.4311} &
    0.0221
    \\ \hline
    TRPCA      &
    \multicolumn{1}{c|}{0.4733} &
    \multicolumn{1}{c|}{0.3537} &
    0.0583
    \\ \hline
    LRTFR      &
    \multicolumn{1}{c|}{0.5533} &
    \multicolumn{1}{c|}{0.4317} &
    0.0222
    \\ \hline
    PTA      &
    \multicolumn{1}{c|}{0.2946} &
    \multicolumn{1}{c|}{0.2094} &
    0.2352
    \\ \hline
    LSDM-MoG      &
    \multicolumn{1}{c|}{0.5046} &
    \multicolumn{1}{c|}{0.3735} &
    0.0162
    \\ \hline
    TLRSR      &
    \multicolumn{1}{c|}{0.5403} &
    \multicolumn{1}{c|}{0.4145} &
    0.0094
    \\ \hline
    BTD      &
    \multicolumn{1}{c|}{0.4953} &
    \multicolumn{1}{c|}{0.3741} &
    0.0330
    \\ \hline
    BCP-RPCC      &
    \multicolumn{1}{c|}{\textbf{0.7295}} &
    \multicolumn{1}{c|}{\textbf{0.5714}} & \textbf{0.0002}
    \\ \hline
    \bottomrule
  \end{tabular}%
  \label{tab:HADSalinas}
\end{table}

\begin{figure*}[t]
  \centering
  \setlength{\tabcolsep}{0.1mm}
  \begin{tabular}{cccccccccc}
    \includegraphics[width=\smallimage]{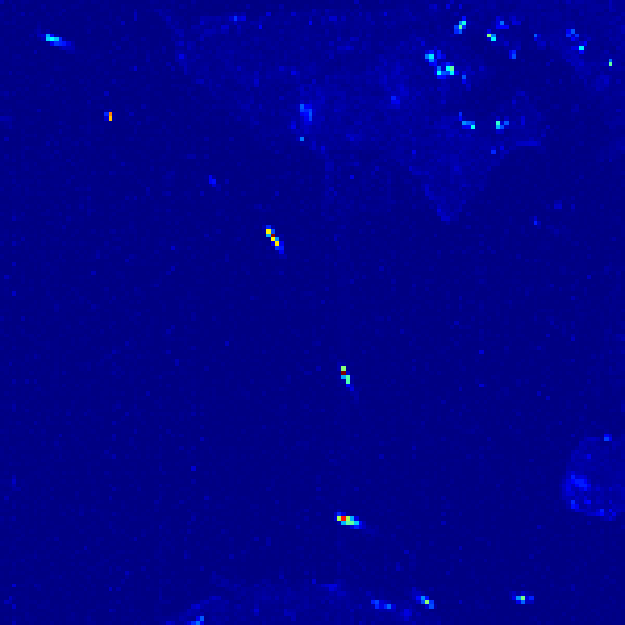} &
    \includegraphics[width=\smallimage]{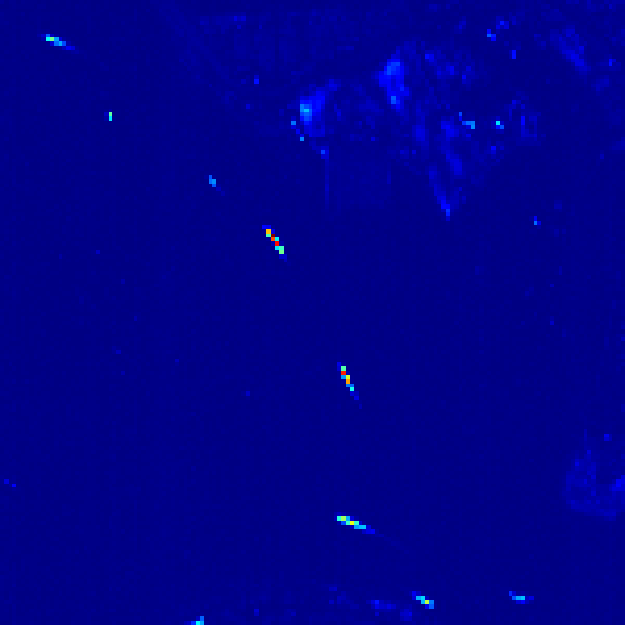} &
    \includegraphics[width=\smallimage]{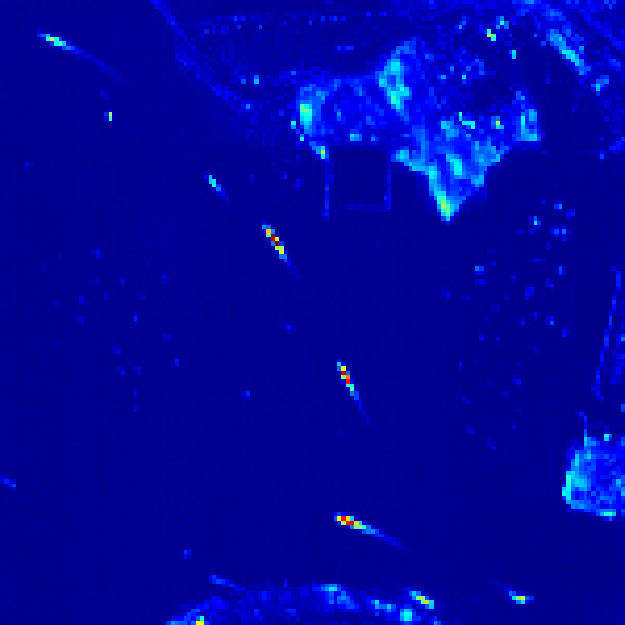} &
    \includegraphics[width=\smallimage]{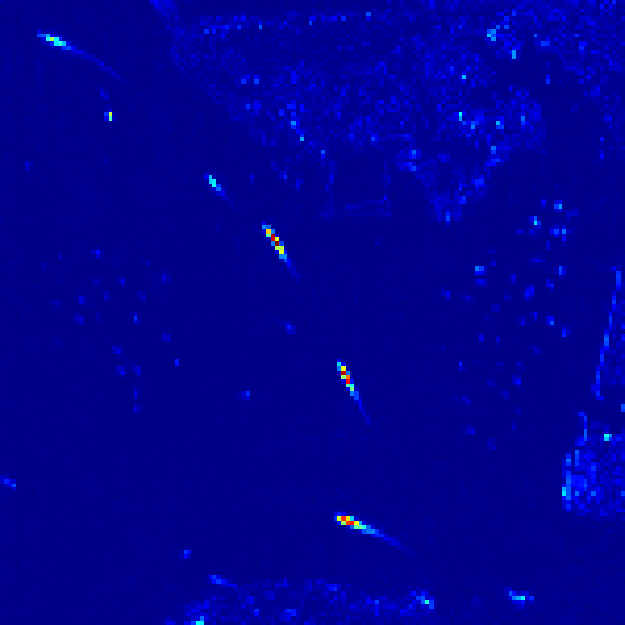} &
    \includegraphics[width=\smallimage]{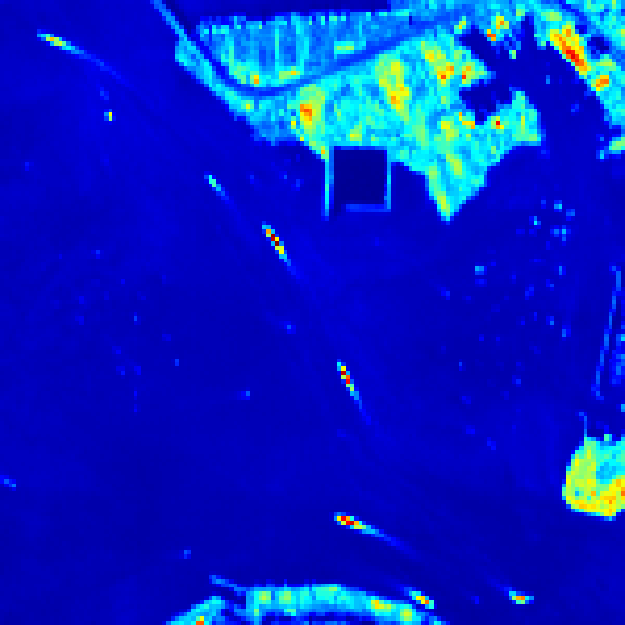} &
    \includegraphics[width=\smallimage]{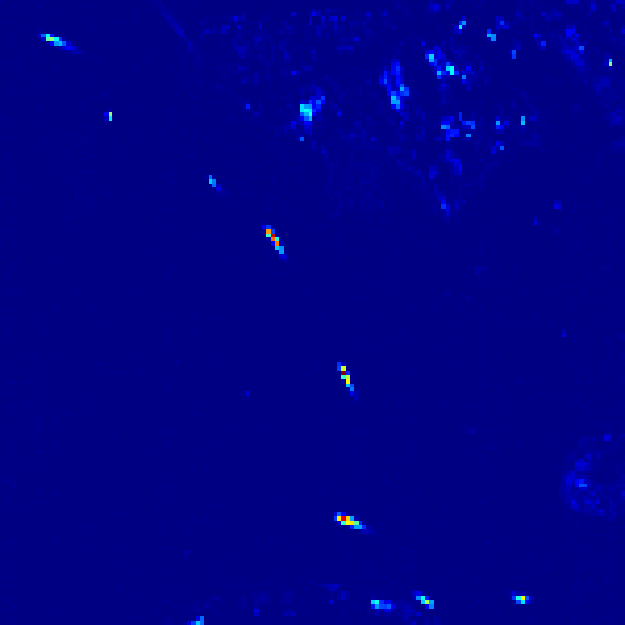} &
    \includegraphics[width=\smallimage]{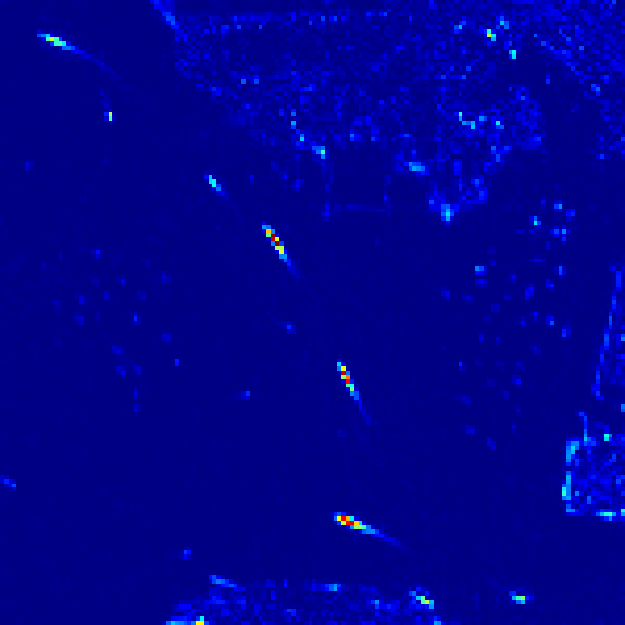} &
    \includegraphics[width=\smallimage]{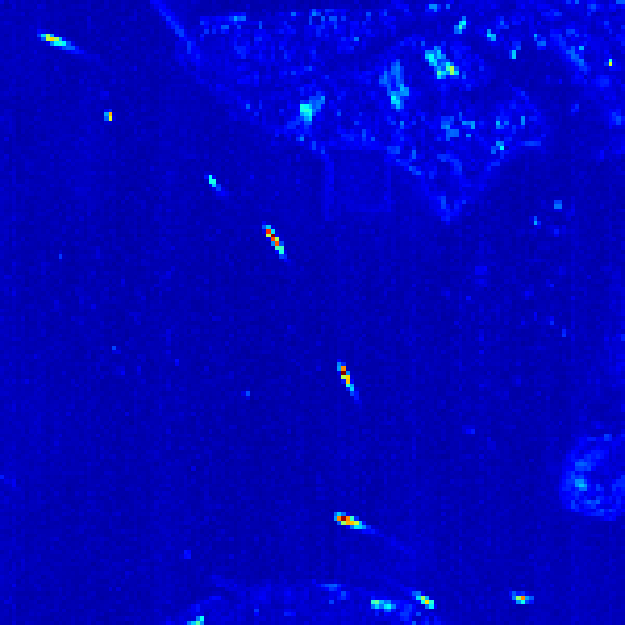} &
    \includegraphics[width=\smallimage]{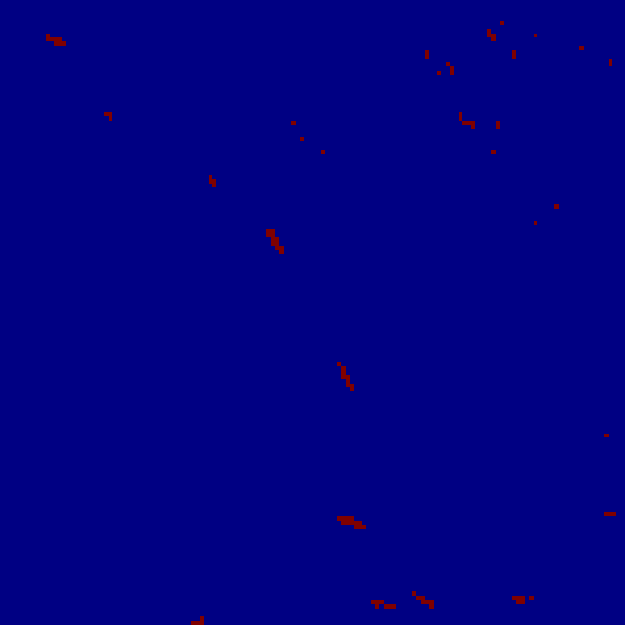} &
    \includegraphics[width=\smallimage]{Figures/HAD/Belcher/GT.eps}
    \\
    \includegraphics[width=\smallimage]{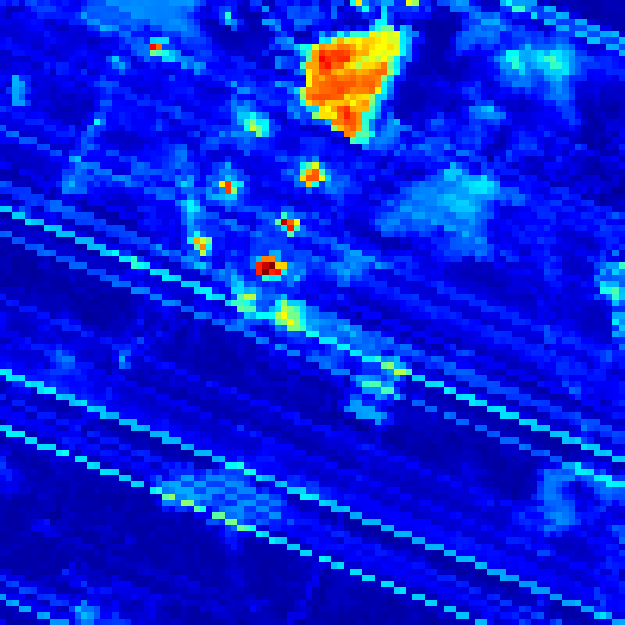} &
    \includegraphics[width=\smallimage]{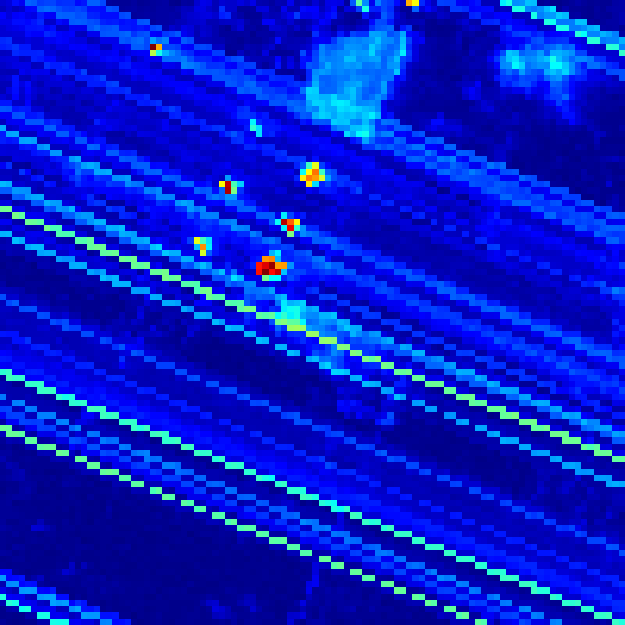} &
    \includegraphics[width=\smallimage]{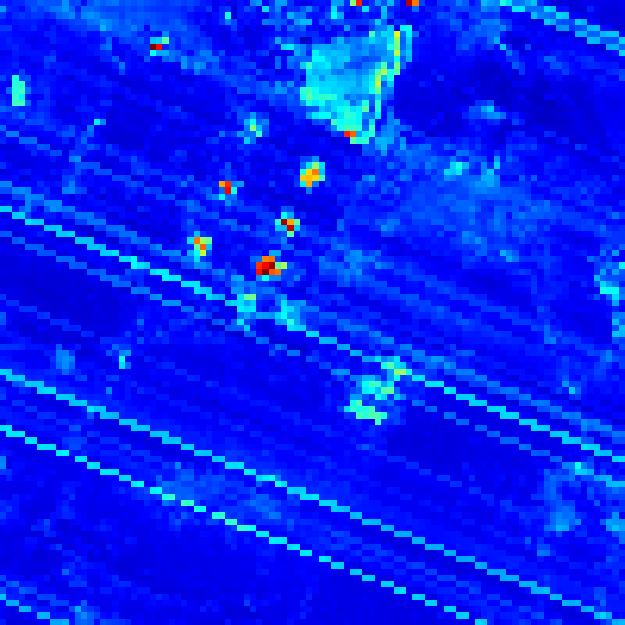} &
    \includegraphics[width=\smallimage]{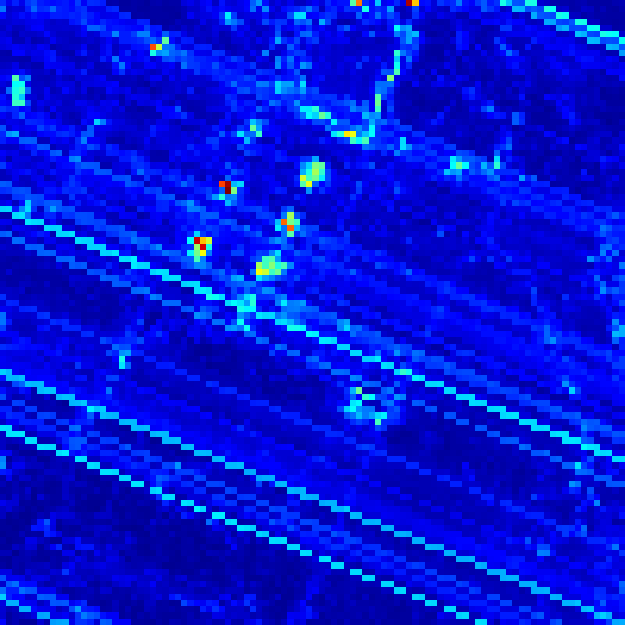} &
    \includegraphics[width=\smallimage]{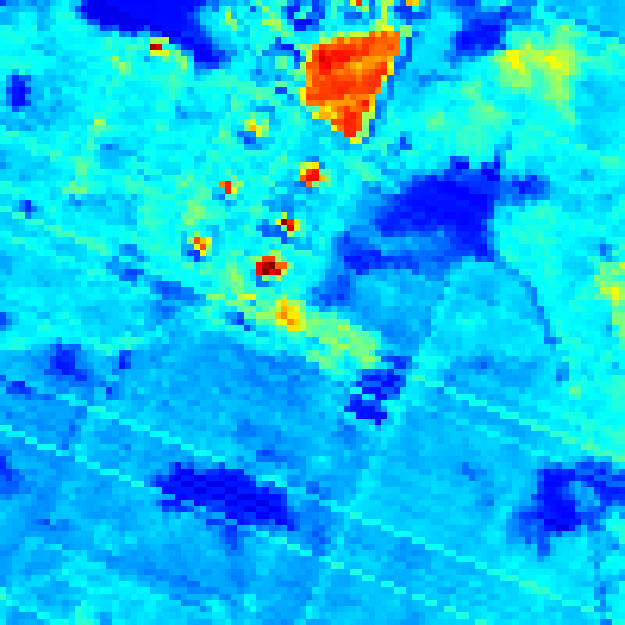} &
    \includegraphics[width=\smallimage]{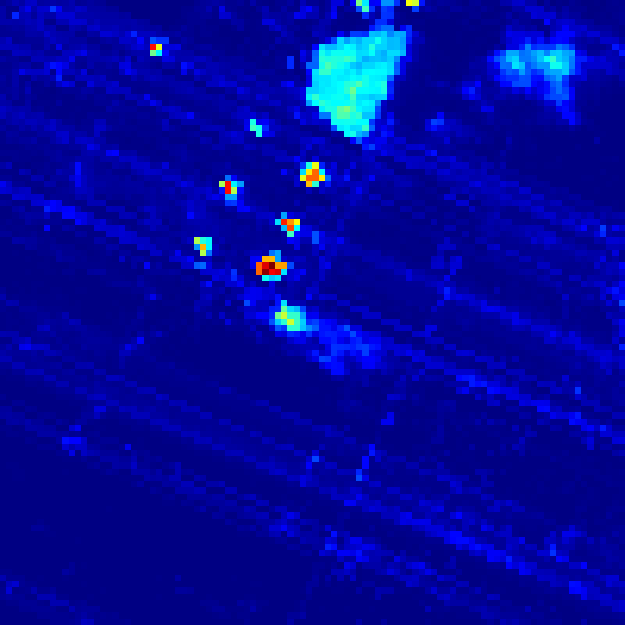} &
    \includegraphics[width=\smallimage]{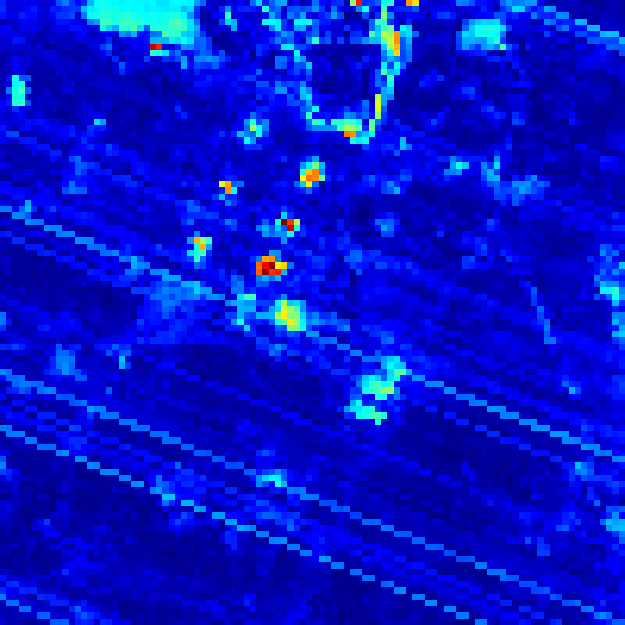} &
    \includegraphics[width=\smallimage]{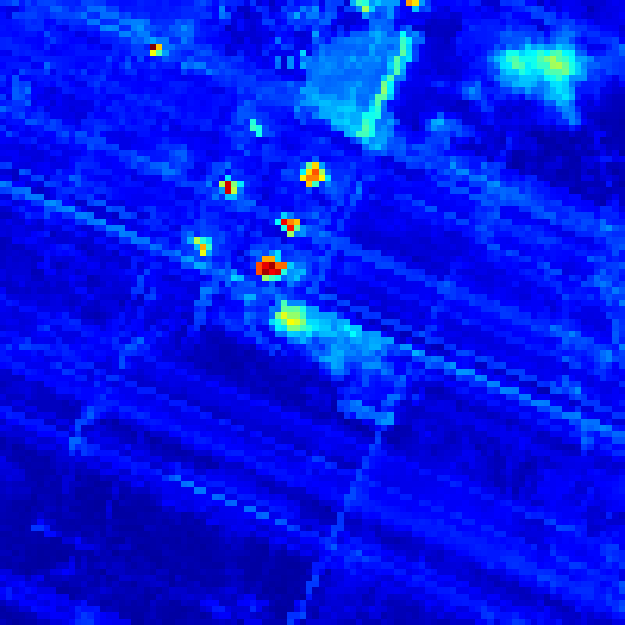} &
    \includegraphics[width=\smallimage]{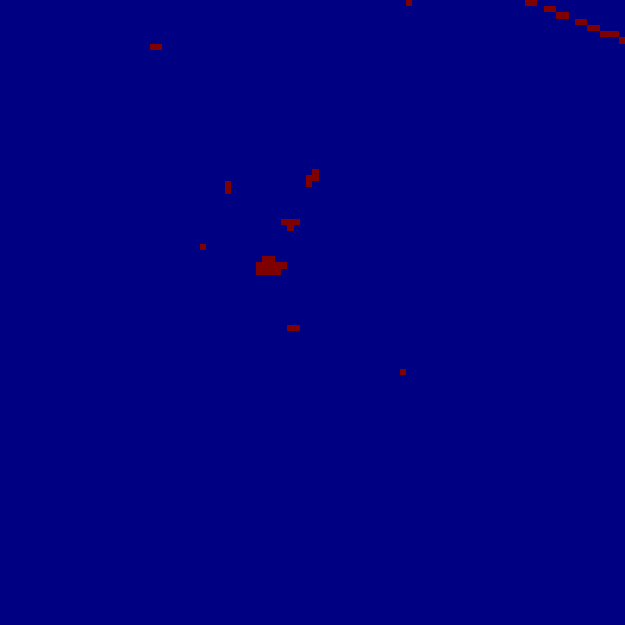} &
    \includegraphics[width=\smallimage]{Figures/HAD/Urban/GT.eps}
    \\
    \includegraphics[width=\smallimage]{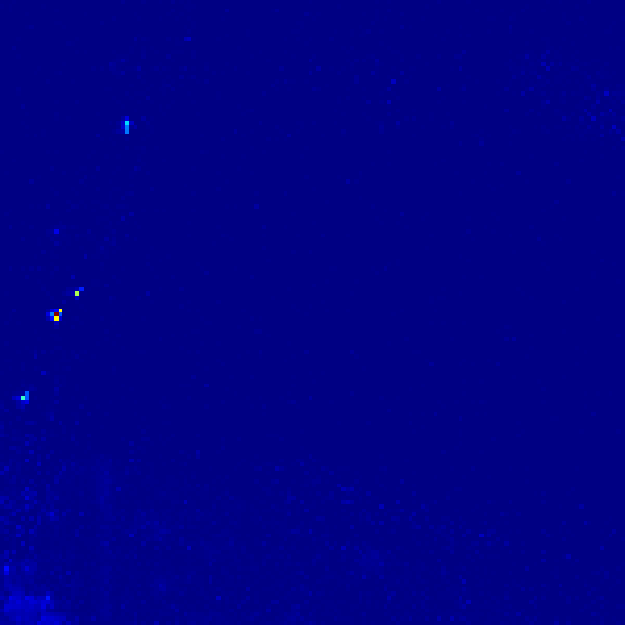} &
    \includegraphics[width=\smallimage]{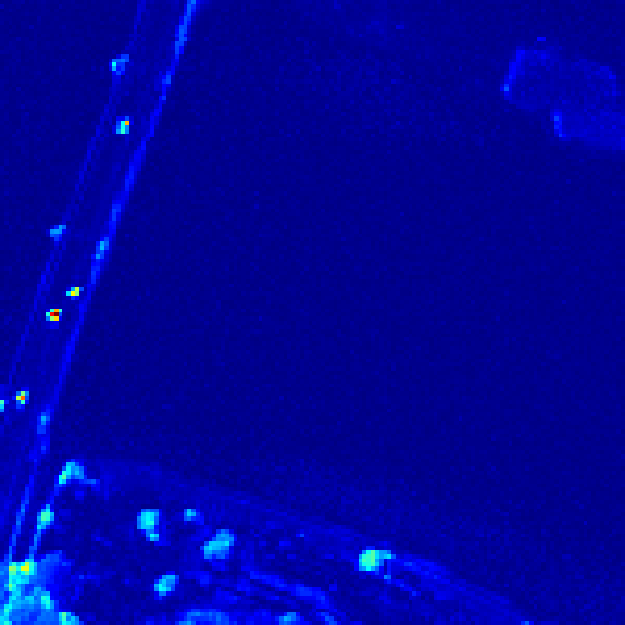} &
    \includegraphics[width=\smallimage]{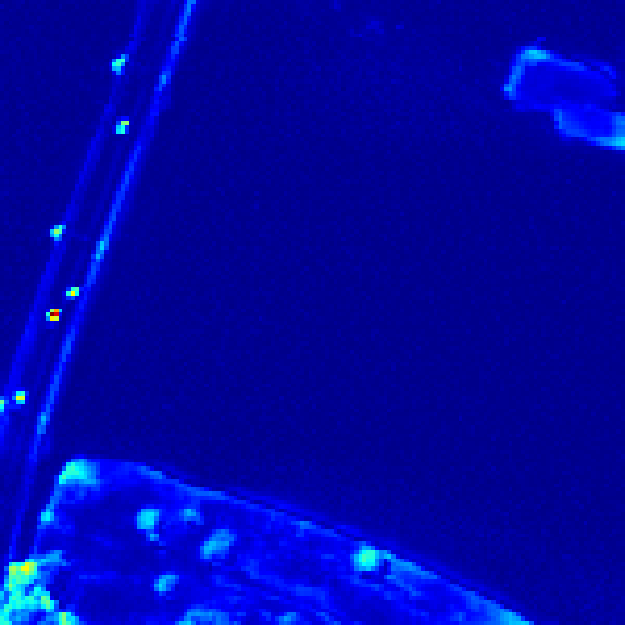} &
    \includegraphics[width=\smallimage]{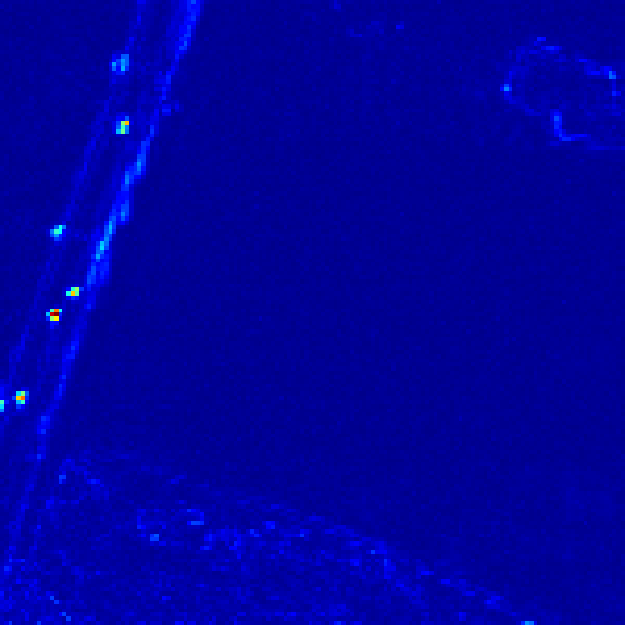} &
    \includegraphics[width=\smallimage]{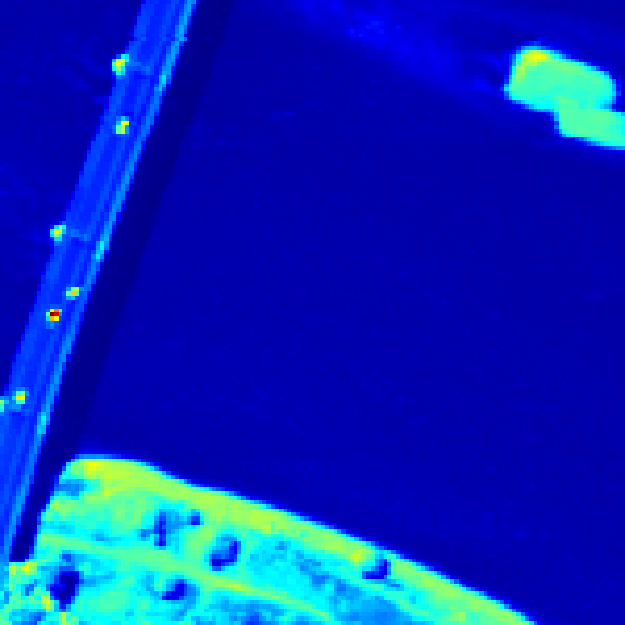} &
    \includegraphics[width=\smallimage]{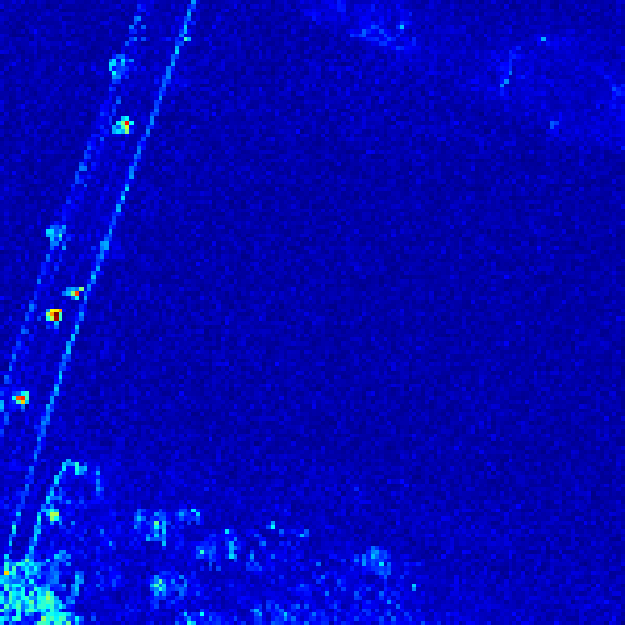} &
    \includegraphics[width=\smallimage]{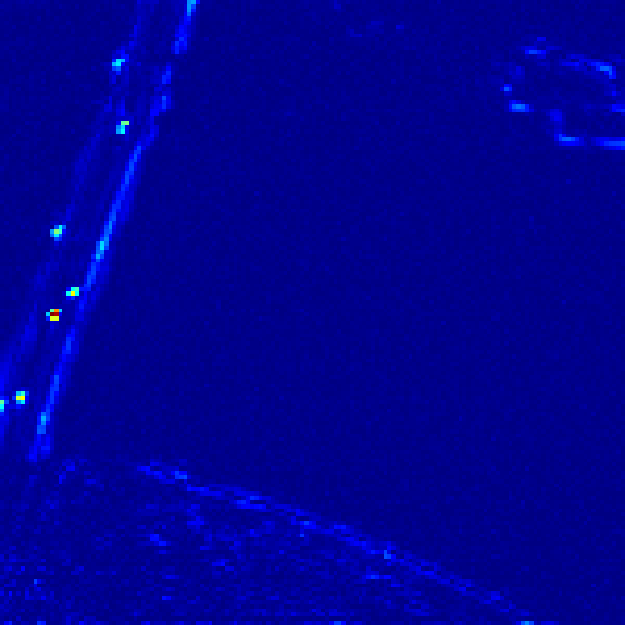} &
    \includegraphics[width=\smallimage]{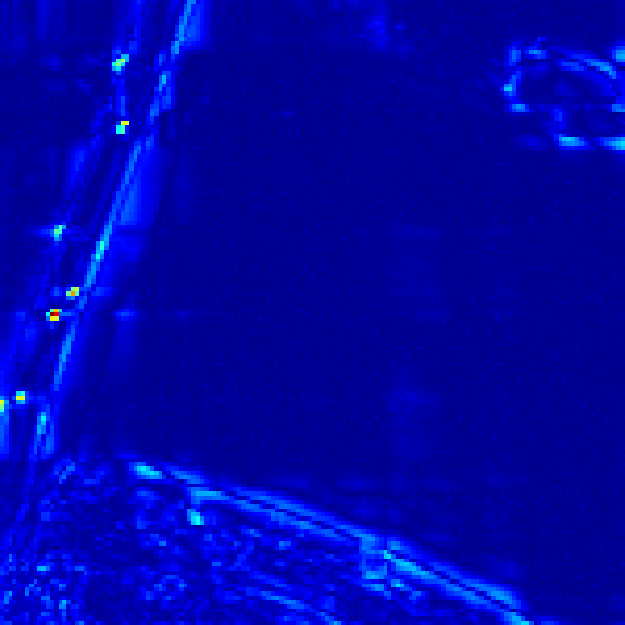} &
    \includegraphics[width=\smallimage]{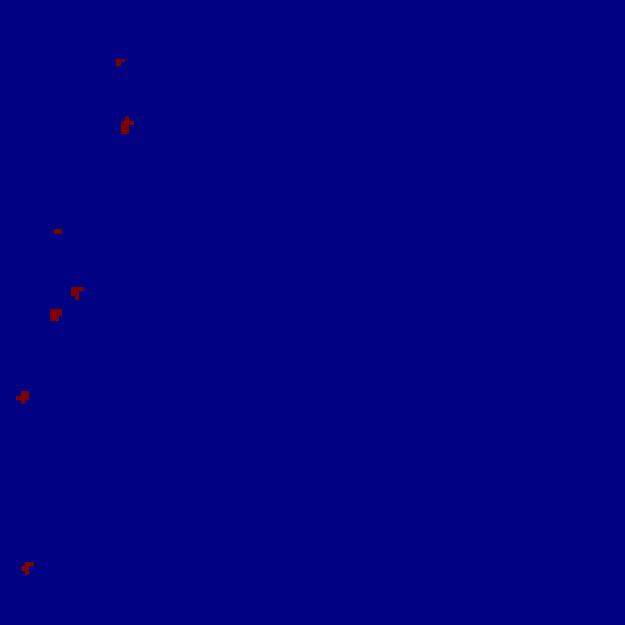} &
    \includegraphics[width=\smallimage]{Figures/HAD/Beach/GT.eps}
    \\
    \includegraphics[width=\smallimage]{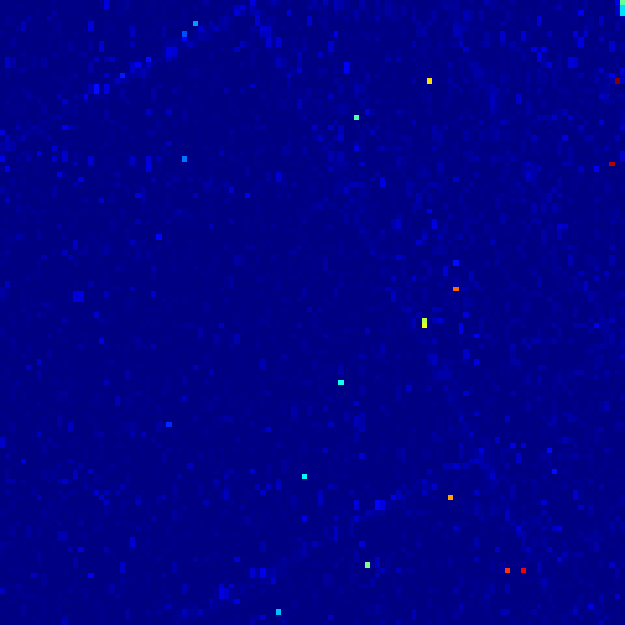} &
    \includegraphics[width=\smallimage]{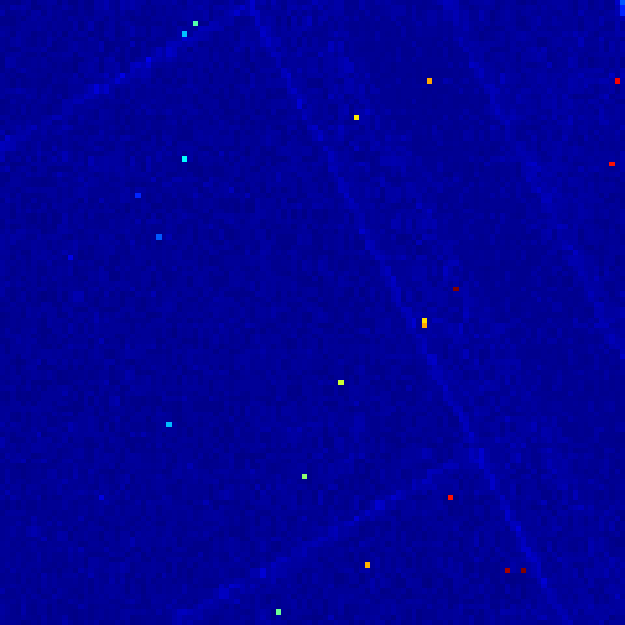} &
    \includegraphics[width=\smallimage]{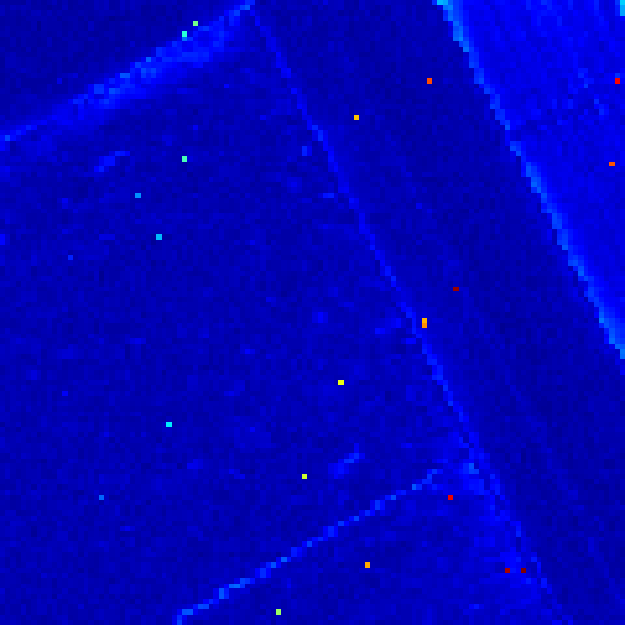} &
    \includegraphics[width=\smallimage]{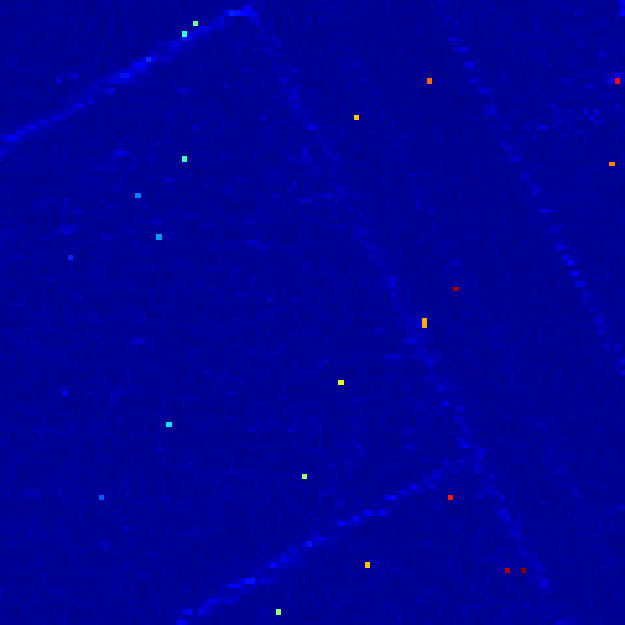} &
    \includegraphics[width=\smallimage]{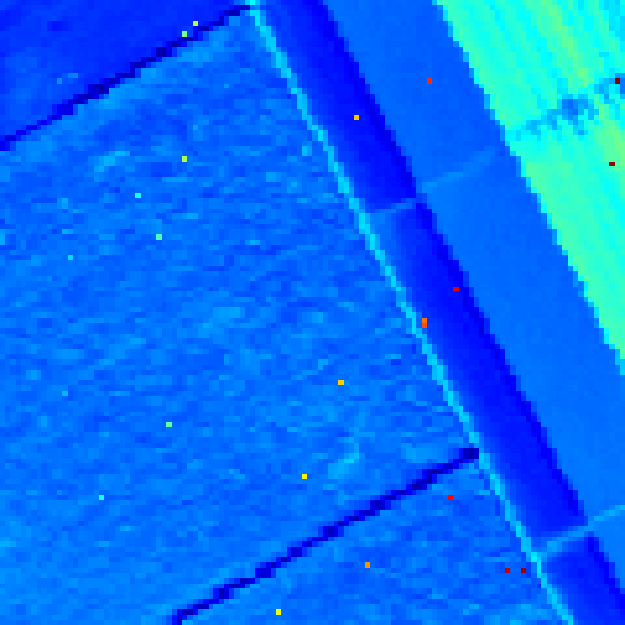} &
    \includegraphics[width=\smallimage]{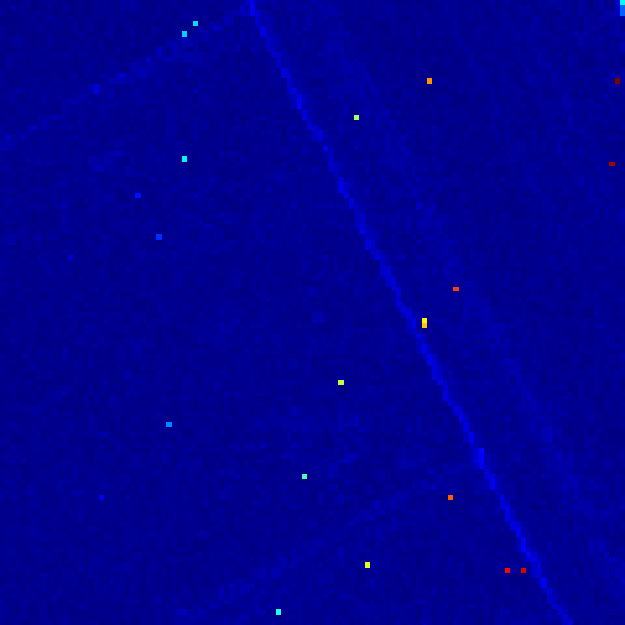} &
    \includegraphics[width=\smallimage]{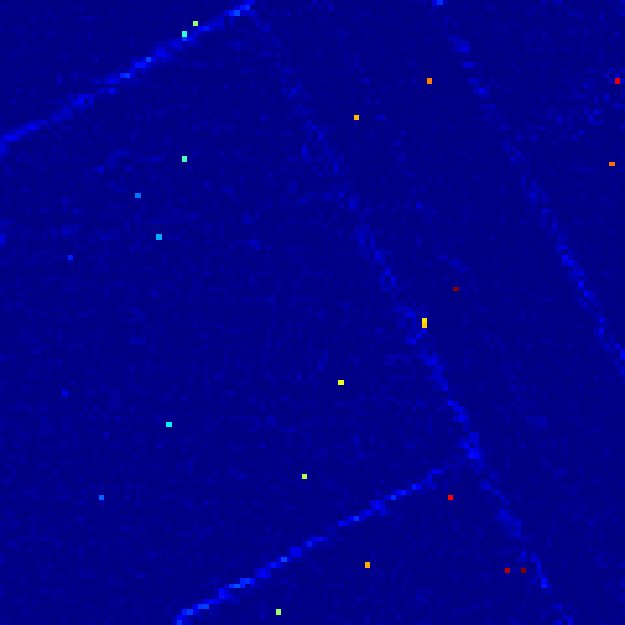} &
    \includegraphics[width=\smallimage]{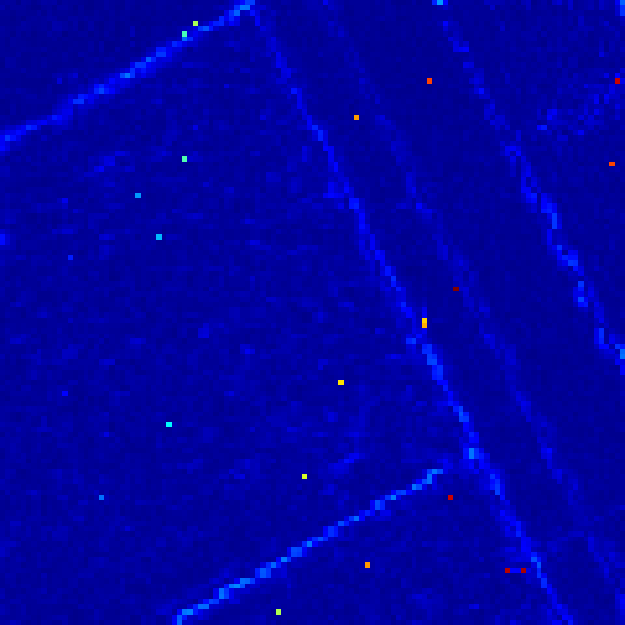} &
    \includegraphics[width=\smallimage]{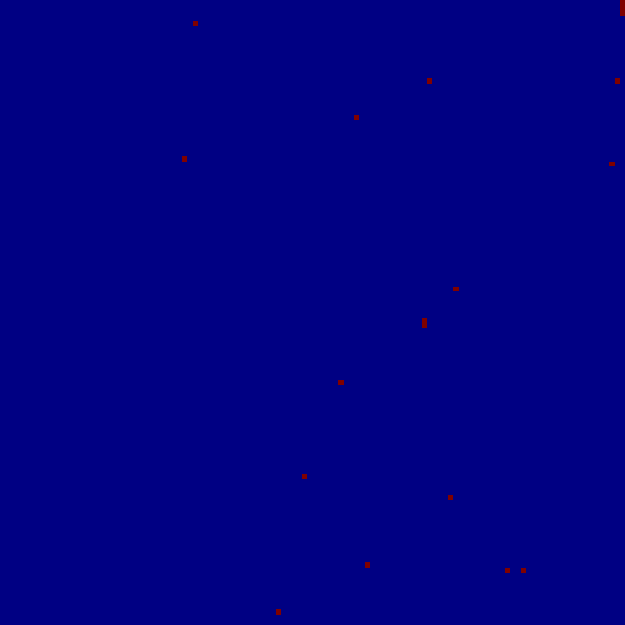} &
    \includegraphics[width=\smallimage]{Figures/HAD/Salinas/GT.eps}
    \\
    \multicolumn{1}{c}{\footnotesize PCP} &
    \multicolumn{1}{c}{\footnotesize KBR} &
    \multicolumn{1}{c}{\footnotesize TRPCA} &
    \multicolumn{1}{c}{\footnotesize LRTFR} &
    \multicolumn{1}{c}{\footnotesize PTA} &
    \multicolumn{1}{c}{\footnotesize LSDM-MoG} &
    \multicolumn{1}{c}{\footnotesize TLRSR} &
    \multicolumn{1}{c}{\footnotesize BTD} &
    \multicolumn{1}{c}{\footnotesize BCP-RPCC} &
    \multicolumn{1}{c}{\footnotesize Ground truth}
  \end{tabular}
  \caption{Anomaly detection on Belcher (row 1),
    Urban (row 2), Beach (row 3), and Salinas (row 4).}
  \label{fig:HADvisual}
\end{figure*}

\begin{figure}[t]
  \centering
  \setlength{\tabcolsep}{0.15mm}
  \begin{tabular}{ccc}
    \includegraphics[width=\smallplot]{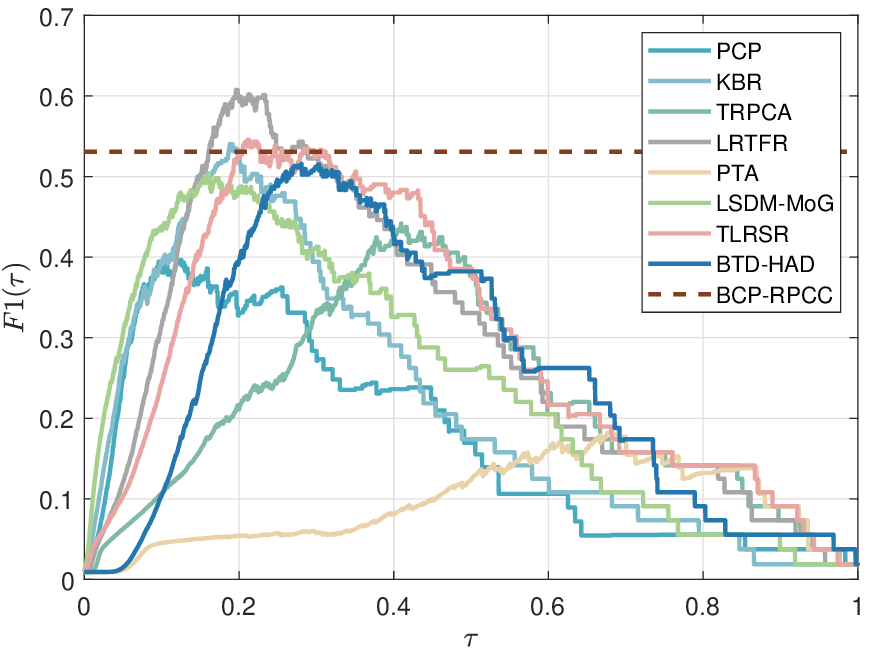} &
    \includegraphics[width=\smallplot]{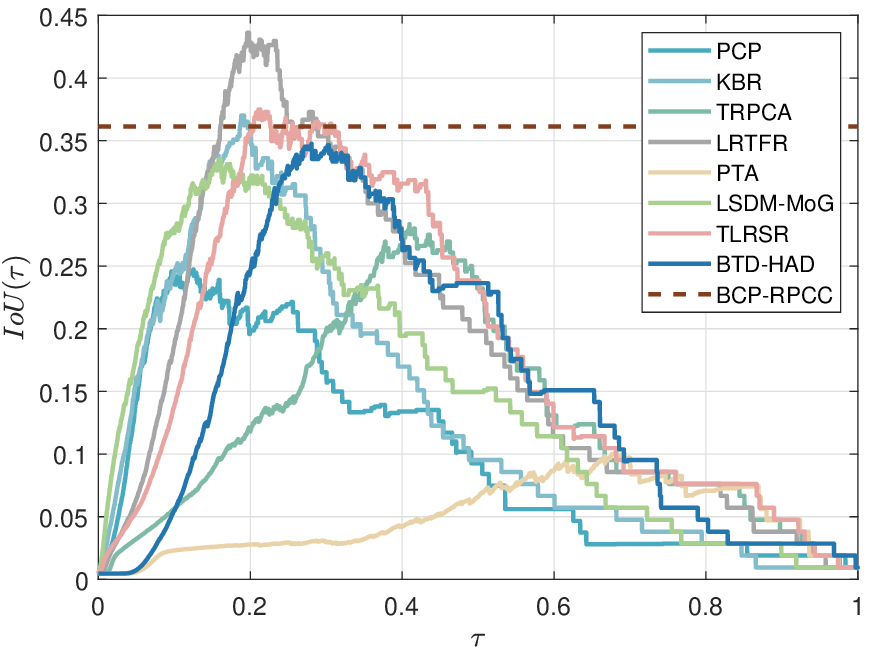} &
    \includegraphics[width=\smallplot]{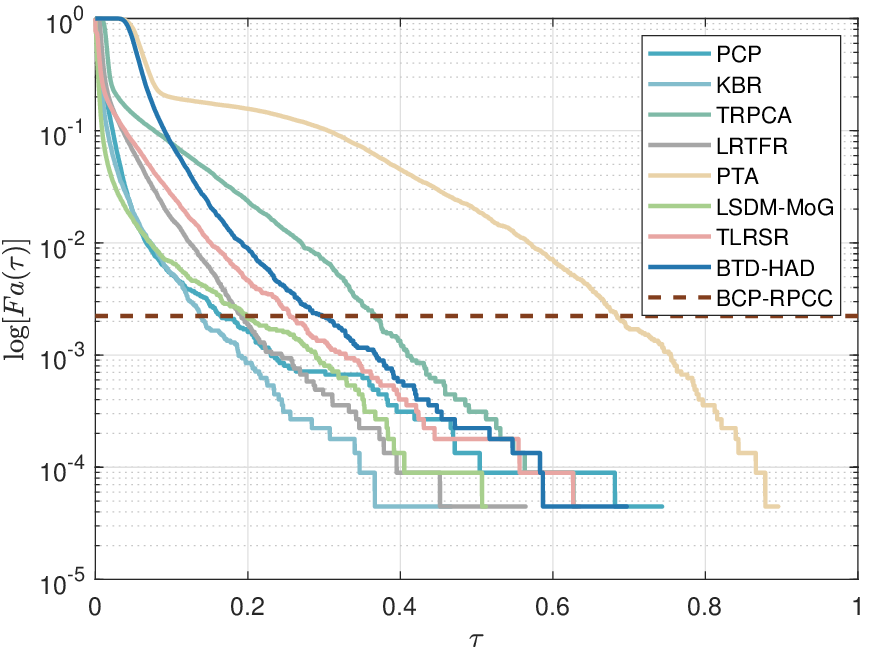}
    \\
    \includegraphics[width=\smallplot]{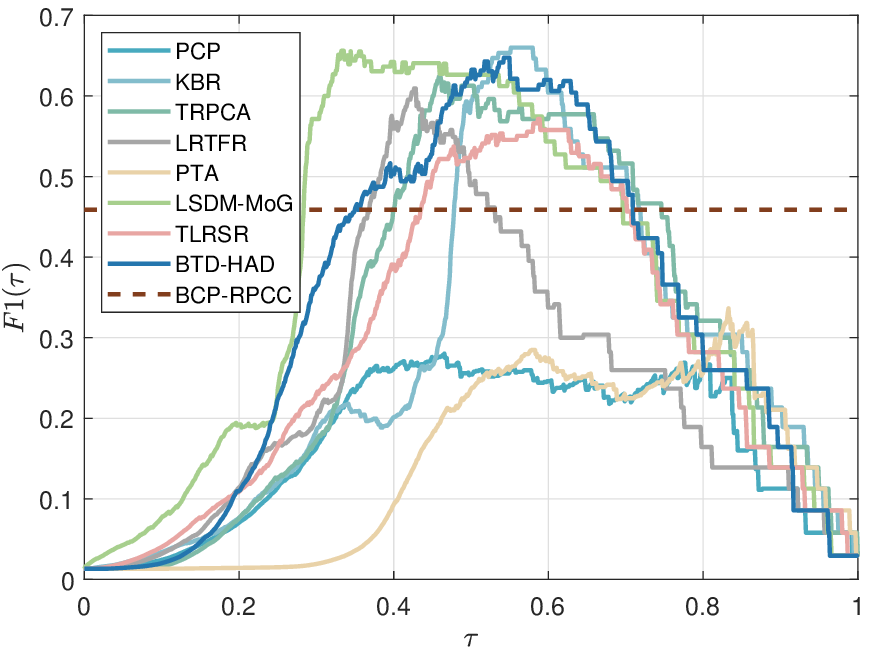}  &
    \includegraphics[width=\smallplot]{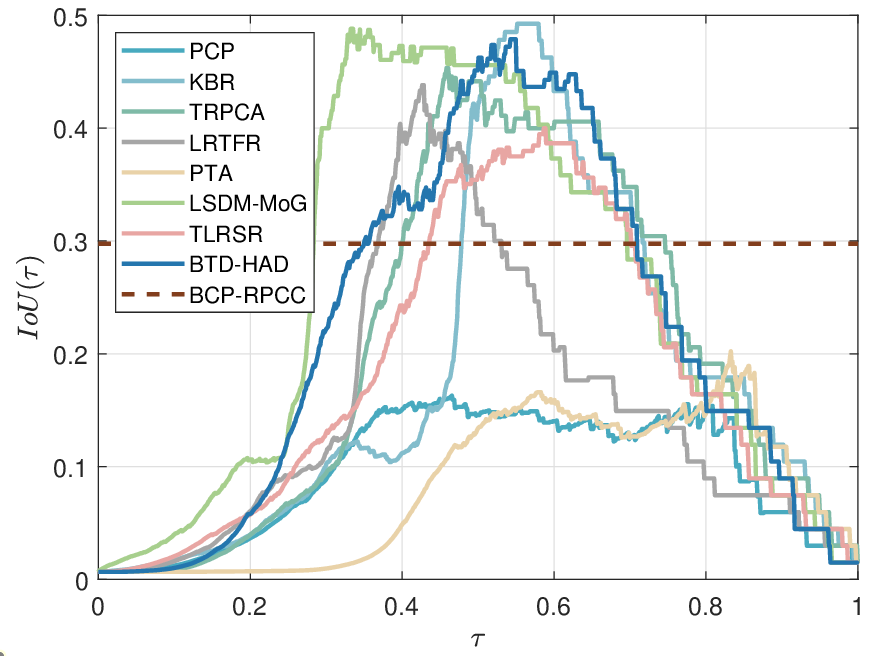} &
    \includegraphics[width=\smallplot]{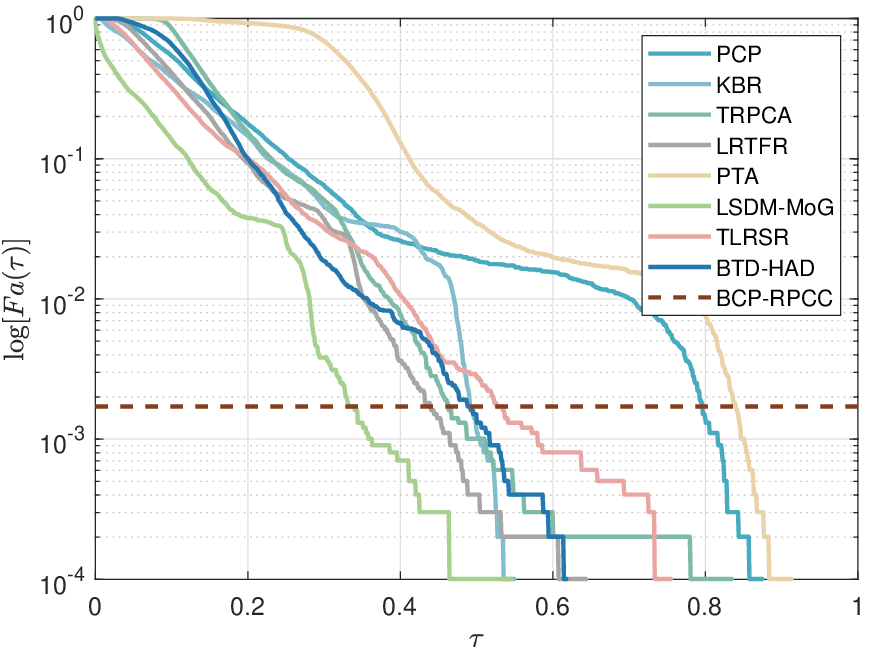}
    \\
    \includegraphics[width=\smallplot]{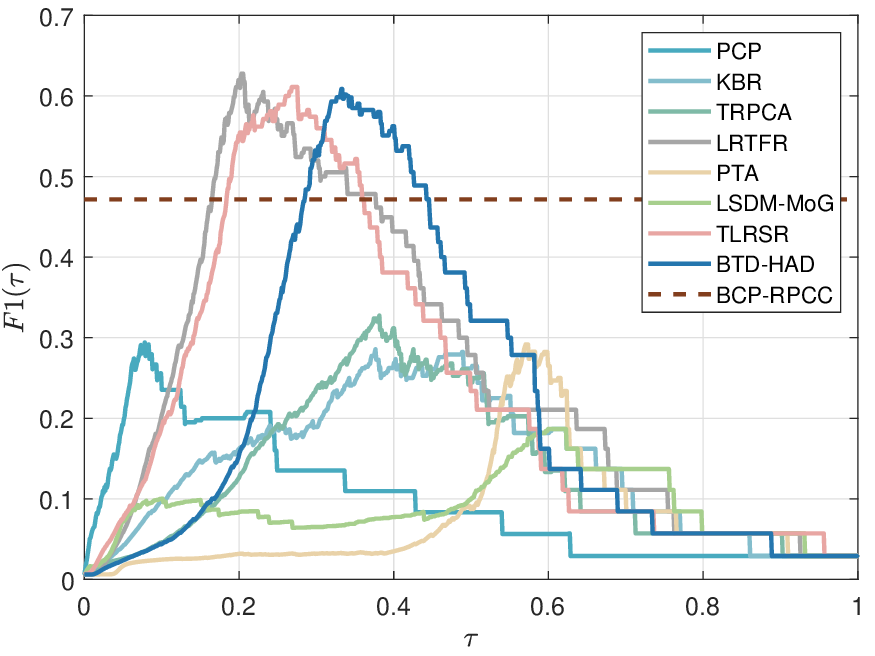} &
    \includegraphics[width=\smallplot]{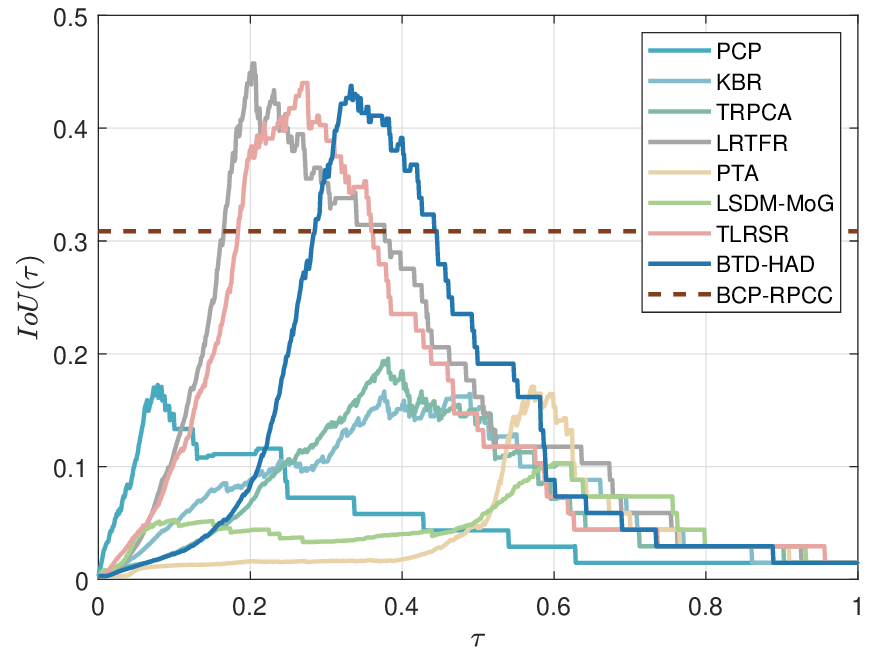} &
    \includegraphics[width=\smallplot]{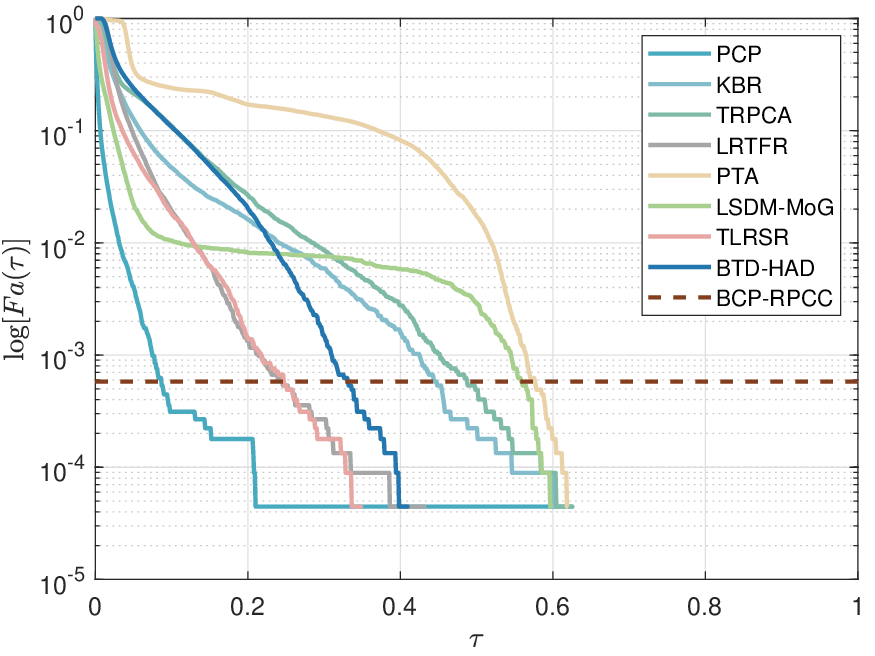}
    \\
    \includegraphics[width=\smallplot]{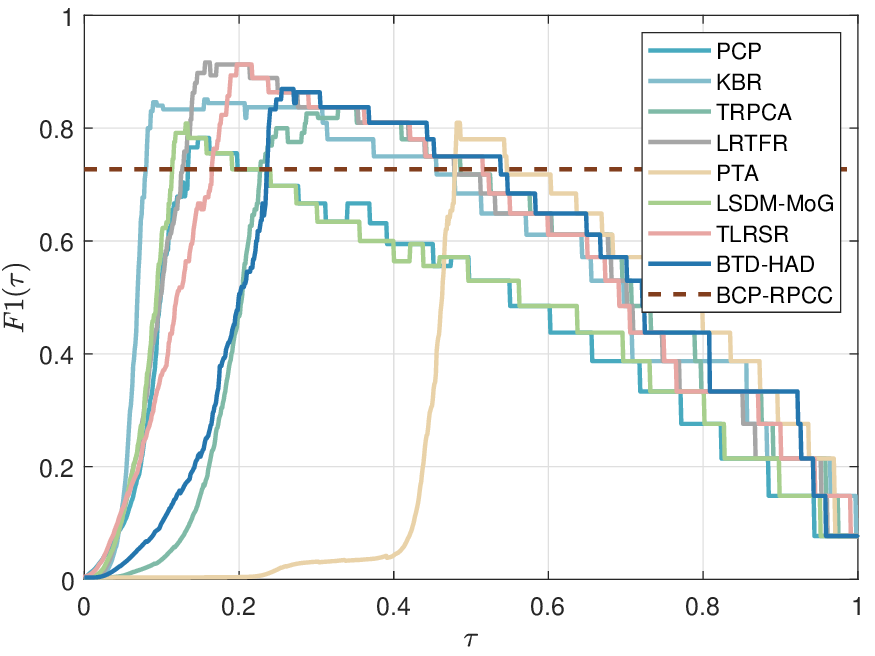} &
    \includegraphics[width=\smallplot]{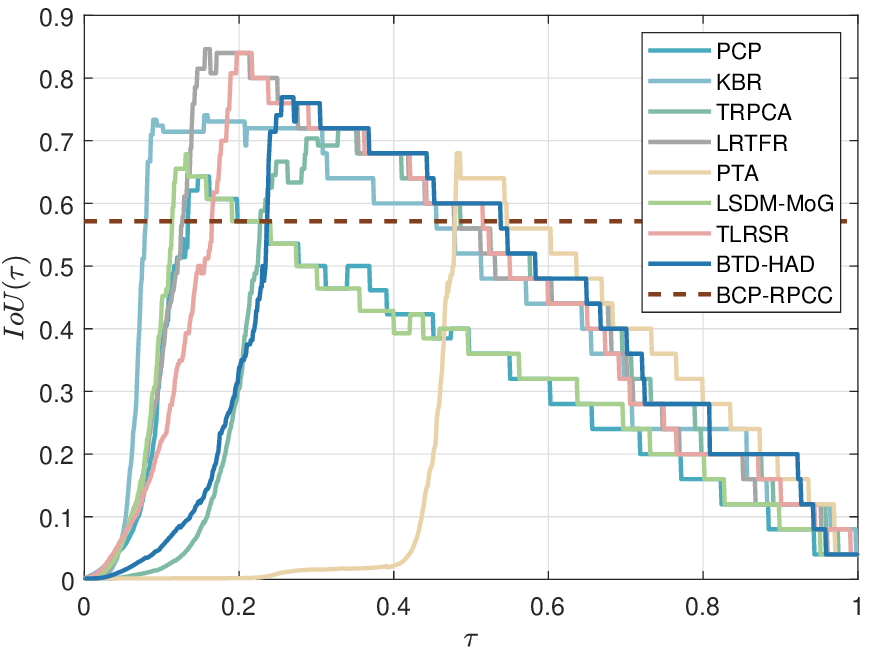} &
    \includegraphics[width=\smallplot]{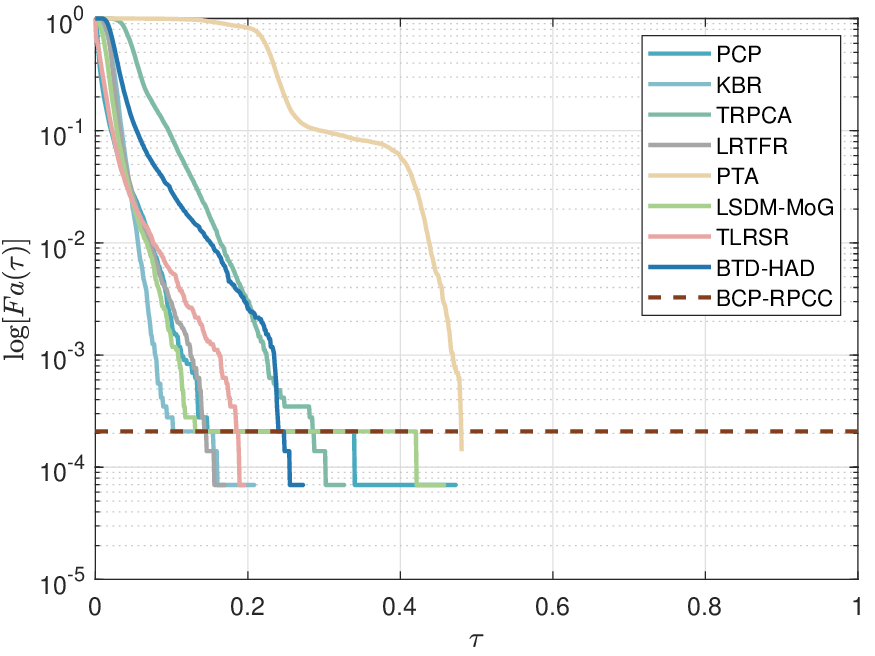}
  \end{tabular}
  \caption{Anomaly-detection performance in terms of
    $\operatorname{F1}(\tau)$, $\operatorname{IoU}(\tau)$ and
    $\operatorname{Fa}(\tau)$: Belcher (row~1), Urban (row~2), Beach
    (row~3) and Salinas (row~4).}
  \label{fig:HADcurves}
\end{figure}

We now consider the problem of identifying anomalies within a
hypspectral image. In this scenario, BCP-RPCC is used to separate the
anomalies as $\mathcal{S}$ from the underlying hyperspectral image,
$\mathcal{L}$.  For empirical comparison, we employ four hyperspectral
datasets: Belcher and Salinas from \cite{WLG2025a}, as well as Urban
and Beach from \cite{KZL2017}. These four datasets are depicted in
Fig.\ref{fig:HADdata} along with their ground-truth maps for known
anomalies with the datasets.  Mathematically, each dataset represented
by a $H\times W \times C$ tensor, where $H$, $W$, and $C$ correspond
to the spatial height, spatial width, and number of spectral bands,
respectively. We compare to four RPCA-driven methods, namely, PCP
\cite{CLM2011}, KBR \cite{XZM2017}, TRPCA \cite{LFC2020}, and LRTFR
\cite{LZL2024a}. Additionally, we consider four techniques developed
specifically for the hyperspectral-anomaly-detection task: prior-based
tensor approximation (PTA) \cite{LLQ2022};
low-rank and sparse decomposition with mixture of Gaussian (LSDM-MoG)
\cite{LLD2021}; tensor low-tank and sparse representation (TLRSR)
\cite{WWH2023}; and block-term decomposition (BTD) \cite{WLG2025a}.
We note that the most common measure of performance for hyperspectral
anomaly detection---the AUC of the receiver operating characteristic
(ROC) curve---does not apply for a hard classifier such as BCP-RPCC,
since the ROC curve degrades to a single point with no AUC.
Therefore, we use the same performance measures as in
Sec.~\ref{sec:resultsforeground}.


As an order-3 tensor with sizable dimensions in all three modes, a
hyperspectral cube can be very computationally costly to process, and
this is exacerbated for the fully probabilistic treatment taken by
BCP-RPCC.  Consequently, we follow exactly \cite{WWH2023} to reduce the spectral dimension of the cube via PCA
prior to anomaly detection, which transforms the hyperspectral volumes into size $H\times
W\times6$.  For BCP-RPCC, we treat each hyperspectral pixel vector as
its own block such that $J_1=1$, $J_2=1$, and $J_3=6$ with $K_1=H$,
$K_2=W$, and $K_3=1$. Fig.~\ref{fig:HADTuning} depicts the tuning of
$R$ and $\sigma$, which yields $\sigma=4\times10^{-3}$ for Belcher and Beach, $\sigma=9\times10^{-3}$ for Urban and $\sigma=3\times10^{-2}$ for Salinas, while we use $R=35$ for
Belcher and Beach, $R=40$ for Urban and Salinas.


Anomaly-detection results are shown in Fig.~\ref{fig:HADvisual}, with
quantitative results tabulated in
Tables~\ref{tab:HADBelcher}--\ref{tab:HADSalinas} and plotted in
Fig.~\ref{fig:HADcurves}. Once again, the proposed BCP-RPCC is the
only technique providing binary detection results. As can be seen in
Fig.~\ref{fig:HADvisual}, BCP-RPCC produces very few false alarms
whereas the other methods are prone to numerous false alarms.  In
Tables~\ref{tab:HADBelcher}--\ref{tab:HADSalinas}, BCP-RPCC delivers
the best performance by all three metrics across all four
datasets. Additionally, Fig.~\ref{fig:HADcurves} indicates that, while
it is possible that the other methods outperform BCP-RPCC with
their peak performance, the interval for such an ideal threshold is
very narrow. Furthermore, the best threshold varies significantly
between the datasets, suggesting that determining a suitable threshold
would be exceedingly difficult in practice.

\section{Conclusion}
\label{sec:conclusion}
In many applications, foreground elements replace, or occlude,
background elements, setting up a mismatch from the classic additive
formulation of RPCA and real-world scenarios.  Accordingly, in this
paper, we proposed a new low-rank and sparse decomposition in the form
of RPCC. To solve this accurate, yet NP-hard, model, a fully
probabilistic treatment based on BSTF was derived, and, to deliver
necessary randomness without over-parameterization, Gaussian noise
with manually adjustable variance was added to the noiseless
observation.  This unexpectedly led to a hard classification between
foreground and background as this variance approaches zero. We
observed that such hard-classifier performance stood in contrast to
conventional RPCA approaches which provide only soft classification
and require post-hoc thresholding to definitively establish foreground
support.  Experimental findings showed that the proposed model
achieved a near-optimal solution to the NP-hard RPCC problem on
synthetic datasets as well as robust foreground-extraction
and anomaly-detection performance on color video and hyperspectral
datasets, respectively.

\section{Further Discussion}
\label{sec:Discussion}
While BCP-RPCC may not outperform traditional RPCA in terms of peak performance, that is not the main message. By directly estimating the support of the sparse component, it entirely eliminates the need for a post-hoc thresholding step---a step that is often trickier than solving the RPCA model itself. One might argue that thresholding still exists implicitly in the form of the hyperparameter $\sigma$, which represents the variance of the added noise. However, there is an essential difference: by transforming the post-hoc threshold $\tau$ into the pre-hoc “threshold” $\sigma$, the thresholding step becomes an integral part of the low-rank/sparse decomposition model, whereas this is not the case with $\tau$. This shift opens the possibility of determining $\sigma$ theoretically, much like the balancing parameter $\lambda$ in PCP \cite{CLM2011}. As established by the proof of Prop. \ref{Prop:HardClass}, $\sigma$ is strongly correlated with the fitting error of the low-rank component. With the post-hoc threshold $\tau$, by contrast, we have almost no such theoretical grounding.

That said, this paper serves as a starting point for multiple research directions with both theoretical and practical significance:
\begin{itemize}
	\item \textbf{Identifiability Analysis:}
	Analogous to RPCA, it is now meaningful to understand under what conditions RPCC can identify the true posterior distribution of the low-rank component and its support, and up to what precision. Such an analysis would provide analytical guidelines for determining $\sigma$.
	
	\item \textbf{Model Advancement:}
	The current model does not yet achieve satisfactory restoration of the low-rank component, which is why its background reconstruction performance, unlike in typical RPCA studies, is not reported. This limitation stems from the relatively poor low-rank representation capability of CP decomposition, which necessitates a very large $R$ for accurate background reconstruction---a setting that is often infeasible on common devices. This motivates research into enhancing model performance by replacing CP decomposition with more powerful alternatives, such as tensor singular value decomposition \cite{LFC2020}, Tucker decomposition \cite{tucker1963implications}, tensor ring decomposition \cite{zhao2016tensor}, and others.
	
	\item \textbf{Task Enhancement:}
	Given the significance of the proposed model as a hard classifier, many real-world tasks can benefit from enhanced practicality. For instance, hyperspectral anomaly detection can now embark on a new chapter of  hyperspectral ``hard'' anomaly detection, which eliminates the need for threshold selection and thus offers greater pragmatism.
\end{itemize}

\bibliographystyle{IEEEtran}
\bibliography{string-defs,bibfile,fowler}

\end{document}